\newcommand*{\circled}[1]{\lower.7ex\hbox{\tikz\draw (0pt, 0pt)%
    circle (.5em) node {\makebox[1em][c]{\small #1}};}}
\providecommand{\keywords}[1]
{
  {\small	
  \textbf{{Keywords:}} #1
  }
}
\renewcommand*{\Affilfont}{\normalsize\normalfont}
\newcommand{\pref}[1]{\prettyref{#1}}
\newcommand{\savehyperref}[2]{\texorpdfstring{\hyperref[#1]{#2}}{#2}}
\DeclareMathOperator*{\argmax}{argmax} 
\newcommand{\calD}{\mathcal{S}}
\newcommand{\calL}{\mathcal{L}}
\newcommand{\calN}{\mathcal{N}}
\newcommand{\calA}{\mathcal{A}}
\newcommand{\calM}{\mathcal{M}}
\newcommand{\calW}{\mathcal{W}}
\newcommand{\calU}{\mathcal{U}}
\newcommand{\calO}{\mathcal{B}}
\newcommand{\calRO}{\mathcal{O}}
\newcommand{\R}{\mathbb{R}}
\newcommand{\U}{{\mathcal{U}_k}}
\newcommand{\calP}{\mathcal{P}}
\newcommand{\one}{\mathbbm{1}}
\newcommand{\inner}[1]{\langle#1\rangle}
\newtheorem*{theorem*}{Theorem}
\newtheorem{assumption}{Assumption}[section]
\newtheorem*{lemma*}{Lemma}
\newtheorem{definition}{Definition}[section]
\theoremstyle{definition}
\newtheorem{thm}{Theorem}[section]
\newtheorem{lem}[thm]{Lemma}
\newcommand{\fillcell}{\cellcolor{pink!25}}
\newcommand{\ttwo}{\text}
\begin{document}
\title{Trading Off Privacy, Utility and Efficiency in Federated Learning}

\author[a]{Xiaojin Zhang}
\author[b]{Yan Kang}
\author[a]{Kai Chen}
\author[b]{Lixin Fan}
\author[a,b]{Qiang Yang$^\dagger$}

\affil[a]{Hong Kong University of Science and Technology, Hong Kong}
\affil[b]{WeBank, China}
\renewcommand*{\Affilfont}{\small\it} %
\date{} 

\maketitle
\def\thefootnote{$\dagger$}\footnotetext{Corresponding author}\def\thefootnote{\arabic{footnote}}

\begin{abstract}
Federated learning (FL) enables participating parties to collaboratively build a global model with boosted utility without disclosing private data information. Appropriate protection mechanisms have to be adopted to fulfill the opposing requirements in preserving \textit{privacy} and maintaining high model \textit{utility}. In addition, it is a mandate for a federated learning system to achieve high \textit{efficiency} in order to enable large-scale model training and deployment. We propose a unified federated learning framework that reconciles horizontal and vertical federated learning. Based on this framework, we formulate and quantify the trade-offs between privacy leakage, utility loss, and efficiency reduction, which leads us to the No-Free-Lunch (NFL) theorem for the federated learning system. NFL indicates that it is unrealistic to expect an FL algorithm to simultaneously provide excellent privacy, utility, and efficiency in certain scenarios. We then analyze the lower bounds for the privacy leakage, utility loss and efficiency reduction for several widely-adopted protection mechanisms including \textit{Randomization}, \textit{Homomorphic Encryption}, \textit{Secret Sharing} and \textit{Compression}. Our analysis could serve as a guide for selecting protection parameters to meet particular requirements.

\end{abstract}

\keywords{federated learning, privacy, utility, efficiency, trade-off,divergence, optimization}

\maketitle


\section{Introduction}

The rapid expansion of large-scale datasets has sparked a demand for distributed learning. With the enforcement of data privacy regulations such as the General Data Protection Regulation (GDPR), the data owned by one company is not allowed to be disclosed to others. Federated learning (FL)~\cite{mcmahan2016federated,mcmahan2017communication, konevcny2016federated, konevcny2016federated_new,liu2022vertical} meets this requirement by allowing multiple parties to train a machine learning model collaboratively without sharing private data. In recent years, FL has achieved significant progress in developing privacy-preserving machine learning systems. It has been extended from the conventional \textit{horizontal federated learning} (HFL) to the \textit{vertical federated learning} (VFL) scenarios~\cite{yang2019federated, DBLP:journals/ftml/KairouzMABBBBCC21,gu2020federated}. HFL typically involves a large amount of parties with different samples but share the same feature space. While VFL typically involves a handful of parties that own distinct features of the same set of sample instances.  HFL is popular in healthcare and mobile applications \cite{rieke2020future, Antunes2022Healthcare,Andrew2018keyboard,lim2020federated}, while VFL is widely adopted in finance and advertisement~\cite{Atarashi2021factorization, tan2020rec, kang2021privacy}. 

Preserving privacy is of immense practical importance when federating across different parties. Although the private data of each client is not shared with other collaborators, the private information might still be reconstructed by semi-honest parties upon observing the shared model information~\cite{geiping2020inverting,zhao2020idlg,yin2021see,oscar2022split,fu2022label}. The fundamental requirement for maintaining privacy is to keep potential \textit{privacy leakage} below an acceptable level. This is accomplished by reducing the dependence between shared model information and private data. To protect private data of the participants, many protection mechanisms have been proposed, such as \textit{Randomization Mechanism} \cite{geyer2017differentially,truex2020ldp,abadi2016deep}, \textit{Secret Sharing} \cite{SecShare-Adi79,SecShare-Blakley79,bonawitz2017practical}, \textit{Homomorphic Encryption (HE)} \cite{gentry2009fully,batchCryp}, and \textit{Compression Mechanism} \cite{nori2021fast}. However, the adoption of these protection mechanisms might result in a certain amount of \textit{utility loss} and \textit{efficiency reduction}, as compared with a federated model trained without any protection \cite{dwork2014algorithmic,kang2021privacy}. As a result, theoretical analysis of the trade-off between privacy leakage, utility loss, and efficiency reduction is critical for guiding FL practitioners to choose better protection parameters or design smarter FL algorithms.

Motivated by this goal, the work~\cite{zhang2022no} proposed a statistical framework to analyze the privacy-utility trade-off in FL on a rigorous theoretical foundation. However, its trade-off analysis did not involve efficiency, which is a crucial factor in designing FL algorithms. Besides, \cite{zhang2022no} did not consider the VFL setting, which has a broad range of applications in finance and advertisement. These gaps inspire us to investigate the following crucial open problem: \textit{is it possible to design a protection mechanism that simultaneously achieves infinitesimal privacy leakage, utility loss, and efficiency reduction?} Our main finding is crystallized in a No-Free-Lunch (NFL) Theorem (\pref{thm: utility-privacy-efficiency Trade-off_JSD_mt}) that provides a negative answer for this question in certain scenarios under a unified FL framework that reconciles both HFL and VFL settings. For the outline of our work, please refer to \pref{fig: our_framework}.

Our contributions are as follows:
\begin{itemize}
    \item We propose a unified FL framework (see \pref{sec: applications}) for HFL and VFL. The unified FL framework provides a conceptual view of the relationship between privacy leakage, utility loss, and efficiency reduction through the lens of the protector and the adversary. Our No-Free-Lunch theorem is formulated based on this framework, and thus it applies to both HFL and VFL.
    \item We provide a No-Free-Lunch theorem (\textbf{Theorem \ref{thm: utility-privacy-efficiency Trade-off_JSD_mt}}) for federated learning, which 
    quantifies the trade-off between privacy leakage (Def. \ref{defi: average_privacy_JSD}), utility loss (Def. \ref{defi: utility_loss}) and efficiency reduction (Def. \ref{defi: efficiency_reduction}). This quantification indicates that the weighted summation of the privacy leakage, utility loss, and efficiency reduction is greater than a problem-dependent non-zero constant. It characterizes the amount of utility and efficiency that is inevitable to lose in the case that the privacy leakage budget is unduly low.  
   \item We apply the NFL theorem to analyze trade-offs between privacy leakage, utility loss, and efficiency reduction for widely-adopted protection mechanisms, including \textit{Randomization}, \textit{Homomorphic Encryption}, \textit{Secret Sharing} and \textit{Compression}. 
The trade-off is characterized via the lower bounds derived for privacy leakage, utility loss, and efficiency reduction.
\end{itemize}
\section{Related Work}

The related work for \textit{attacking mechanisms and protection mechanisms in federated learning} and \textit{privacy-utility-efficiency trade-off} are briefly reviewed in this section.

\subsection{Attacking Mechanisms and Protection Mechanisms in Federated Learning}\label{sec:related:attack}

We focus on \textit{semi-honest} adversaries who faithfully follow the federated learning protocol but may infer the private information of other participants based on the exposed model information. 

In HFL, \cite{zhu2019dlg,geiping2020inverting,zhao2020idlg,yin2021see} demonstrate that adversaries could exploit gradient information to restore the private image data to pixel-level accuracy, with distinct settings of prior distributions and conditional distributions. A variety of protection mechanisms have been proposed in HFL to prevent private data from being deduced by adversarial participants. The most popular ones are \textit{Homomorphic Encryption (HE)}~\cite{gentry2009fully,batchCryp}, \textit{Randomization Mechanism}~\cite{geyer2017differentially,truex2020ldp,abadi2016deep}, \textit{Secret Sharing}~\cite{SecShare-Adi79,SecShare-Blakley79,bonawitz2017practical} and \textit{Compression Mechanism} \cite{nori2021fast}. Another school of FL~\cite{gupta2018distributed,gu2021federated} tries to protect privacy by splitting a neural network into private and public models and sharing only the public one~\cite{kang2021privacy,gu2021federated}.

VFL has two kinds of privacy leakage: feature leakage and label leakage. It is challenging for the adversary to infer the features of other parties because, in production VFL, participating parties typically have black-box knowledge about each other. The literature has proposed mainly two kinds of label inference attacks in VFL: the gradient-based~\cite{oscar2022split} and the model-based~\cite{fu2022label} attacks. ~\cite{oscar2022split} also demonstrated three noise-based protections that can prevent gradient-based attacks. \cite{yang2020defending} proposed a data encoding protection mechanism called CoAE that can thwart model-based attacks effectively in some scenarios. Crypto-based protections are widely adopted in federated logistic regression and XGBoost. However, they are seldom applied to VNN that involves complex neural networks for their high communication and computational cost. 

\subsection{Privacy-Utility-Efficiency Trade-off} 
In the past decade, there has been wide interest in understanding the privacy-utility trade-off \cite{du2012privacy, makhdoumi2013privacy, sankar2013utility, wang2016relation, wang2017estimation, rassouli2019optimal, zhang2022no}. \cite{du2012privacy,makhdoumi2013privacy} quantified the privacy-utility trade-off using the solution of the optimization problem. \cite{sankar2013utility} provided a privacy-utility trade-off region for the special case with i.i.d. data sources and known distribution. \cite{rassouli2019optimal} illustrated that the optimal privacy-utility trade-off could be solved using a standard linear program and provided a closed-form solution for the special case when the data to be released is a binary variable. \cite{wang2016relation} measured distortion using the expected Hamming distance between the input and output databases and measured privacy leakage using identifiability, differential privacy, and mutual-information privacy separately. \cite{wang2017estimation} provided a trade-off when utility and privacy were evaluated using $\chi^2$-based information measures. \cite{wang2019adaptive} analyzed the trade-off between the speed of error convergence and the wall-clock time for distributed SGD. \cite{chen2020breaking} analyzed the trade-off between communication, privacy, and accuracy for distributed statistical tasks. The accuracy is measured using statistical mean estimation, frequency estimation, and distribution estimation separately, and the privacy is measured using differential privacy. However, none of these works focus on federated learning and measure utility using model performance.

The work~\cite{zhang2022no} proposed a quantitative trade-off between utility and privacy in horizontal federated learning by exploiting some key properties of the privacy leakage and the triangle inequality of the divergence.
\cite{lu2020sharing} evaluated the accuracy-privacy-cost trade-off for federated learning empirically. In this work, we offer a general theoretical analysis of the trade-off between privacy, utility, and efficiency that applies to both HFL and VFL.

\section{A Unified Federated Learning Framework}\label{sec:framework}


In this section, we first introduce general notations used throughout this work. Then, we propose a unified federated learning framework with a conceptual view of the relationship between privacy leakage, utility loss, and efficiency reduction. Next, we provide formal definitions for key components of this framework, including protection and attacking mechanisms, privacy leakage, utility loss, and efficiency reduction. We then formulate the goal of the protector as a constrained optimization problem and put forward a critical question with which we are concerned. Our No-Free-Lunch theorem provides an answer for this problem formulated based on this framework, and thus it applies to both HFL and VFL.
\begin{table*}[!htp]
\footnotesize
  \centering
  \setlength{\belowcaptionskip}{15pt}
  \vspace{-1em}
  \caption{Table of Notation}
  \label{table: notation}
    \begin{tabular}{cc}
    \toprule
    Notation & Meaning\cr
    \midrule\
    $\epsilon_{p}$ & Privacy leakage (Def. \ref{defi: average_privacy_JSD})\cr
    $\epsilon_u$ & Utility loss (Def. \ref{defi: utility_loss}) \cr
    $\epsilon_e$ & Efficiency reduction (Def. \ref{defi: efficiency_reduction})\cr
    $D$ & Private information, including private data and statistical information\cr
    $W_{\text{fed}}$ & parameter for the federated model\cr
    $W^{\calRO}_k$ & Unprotected model information of client $k$\cr
    $W^{\calD}_k$ & Protected model information of client $k$\cr
 $P^{\calRO}_k$ & Distribution of unprotected model information of client $k$\cr
 $P^{\calD}_k$ & Distribution of protected model information of client $k$\cr
 $\mathcal W_k^{\calD}$ & Support of $P_k^{\calD}$\cr
 $\mathcal W_k^{\calRO}$ & Support of $P_k^{\calRO}$\cr
 $\calW_k$ & Union of the supports of $P_k^{\calD}$ and $P_k^{\calRO}$\cr
 $F^{\calO}_k$ & Adversary's prior belief distribution about the private information of client $k$\cr
 $F^{\calA}_k$ & Adversary's belief distribution about client $k$ after observing the protected private information\cr
 $F^{\calRO}_k$ & Adversary's belief distribution about client $k$ after observing the unprotected private information\cr
 $\text{JS}(\cdot||\cdot)$ & Jensen-Shannon divergence between two distributions\cr
 $\text{TV}(\cdot||\cdot)$ & Total variation distance between two distributions\cr
    \bottomrule
    \end{tabular}
\end{table*}

\subsection{Notations}

We adhere to the tradition of using uppercase letters to indicate the random variables, such as $D$, and lowercase letters to denote the specific values they take on. We represent $[K]$ as $\{1,2,\cdots, K\}$. We use lowercase letters such as $f$ and $p$ to denote probability density functions and uppercase letters such as $F$ and $P$ to denote distributions.

The probability density function $f$ at value $d$ is represented by $f_{D_k}(d)$, and the subindex represents the random variable. The conditional density function is denoted by the notation $f_{D_k|W_k}(d|w)$. Let $p$ and $q$ represent the probability densities (or probability masses) of $P$ and $Q$ for distributions $P$ and $Q$ over $\mathbb{R}^n$. The Jensen-Shannon divergence, which is a smoothed variation of the Kullback-Leibler divergence, is defined as $\text{JS}(P||Q) = \frac{1}{2}\left[\text{KL}\left(P, M\right) + \text{KL}\left(Q, M\right)\right]$ with $M = (P + Q)/2$. Let $\text{TV}(P||Q)$ denote the total variation distance between $P$ and $Q$, which is defined as $\text{TV}(P||Q) = \sup_{A\subset\mathbb R^n} |P(A) - Q(A)|$. 
The detailed description of notations is illustrated in Table \ref{table: notation}.

\subsection{The Conceptual View of the Unified Federated Learning Framework}

In this section, we propose a unified federated learning framework that unifies HFL and VFL through a conceptual view of the relationship between privacy leakage, utility loss, and efficiency reduction via the lens of the protector and the adversary. We first introduce HFL and VFL and unify their terminologies and notations. Then, we elaborate on this conceptual view. 


HFL has two representative aggregation implementations: FedAvg and FedSGD, which are mathematically equivalent. Our framework applies to both aggregations. For illustrative purposes, we use FedAvg to explain the procedure of secure horizontal federated learning:

\begin{enumerate}[label=\circled{\arabic*}]
\item With the global model state from the server, each client $k$ trains its local model using its private data set $D_k$, and obtains the local model $\theta_k^{\calRO}$.

\item In order to prevent the semi-honest adversaries from inferring other clients' private information $D_k$ according to $\theta_k^{\calRO}$, each client $k$ adopts a protection mechanism $M$ to convert model $\theta_k^{\calRO} $ to protected model $\theta_k^{\calD}$, and sends $\theta_k^{\calD}$ to the server. 

\item The server aggregates $\theta_k^{\calD}, k=1,\cdots,K$ to generate a new global model $\theta_{\text{agg}}^{\calD}$.

\item Each client $k$ downloads the global model $\theta_{\text{agg}}^{\calD}$ and uses it to update its local model.
\end{enumerate}

The processes \textcircled{1}-\textcircled{4} iterate until the utility of the aggregated model $\theta_{\text{agg}}^{\calD}$ does not improve.\\

\begin{figure*}[h!]
    \centering
    \includegraphics[width=0.44\linewidth]{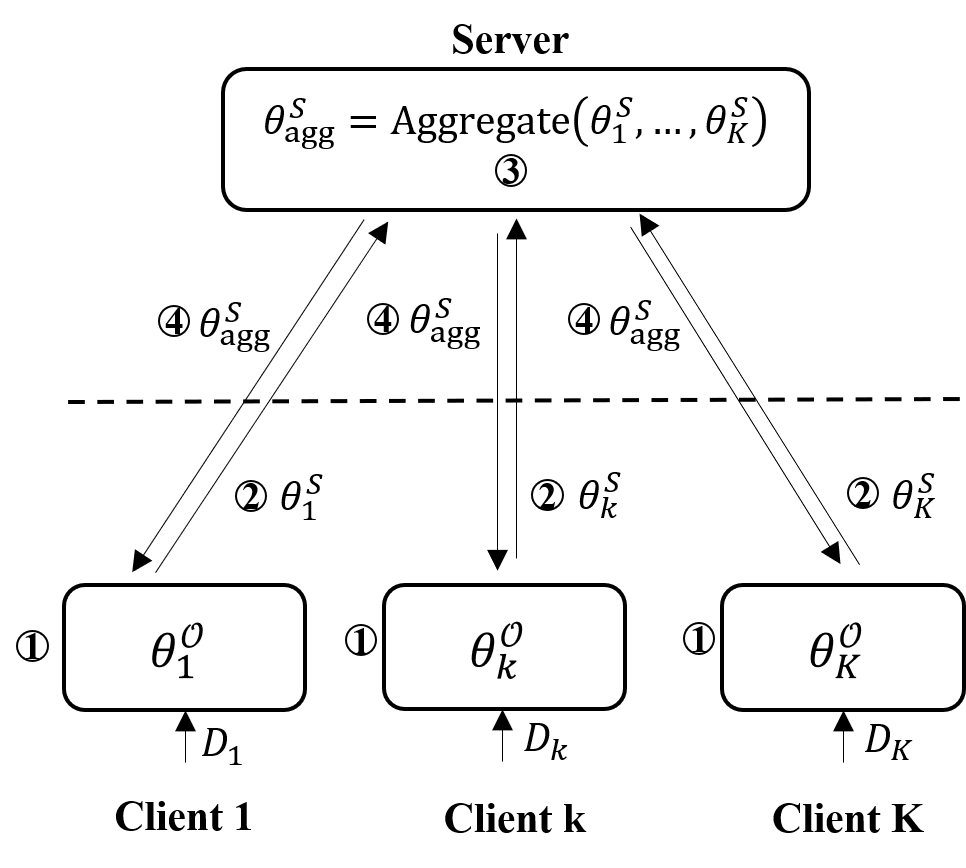}
    \includegraphics[width=0.43\linewidth]{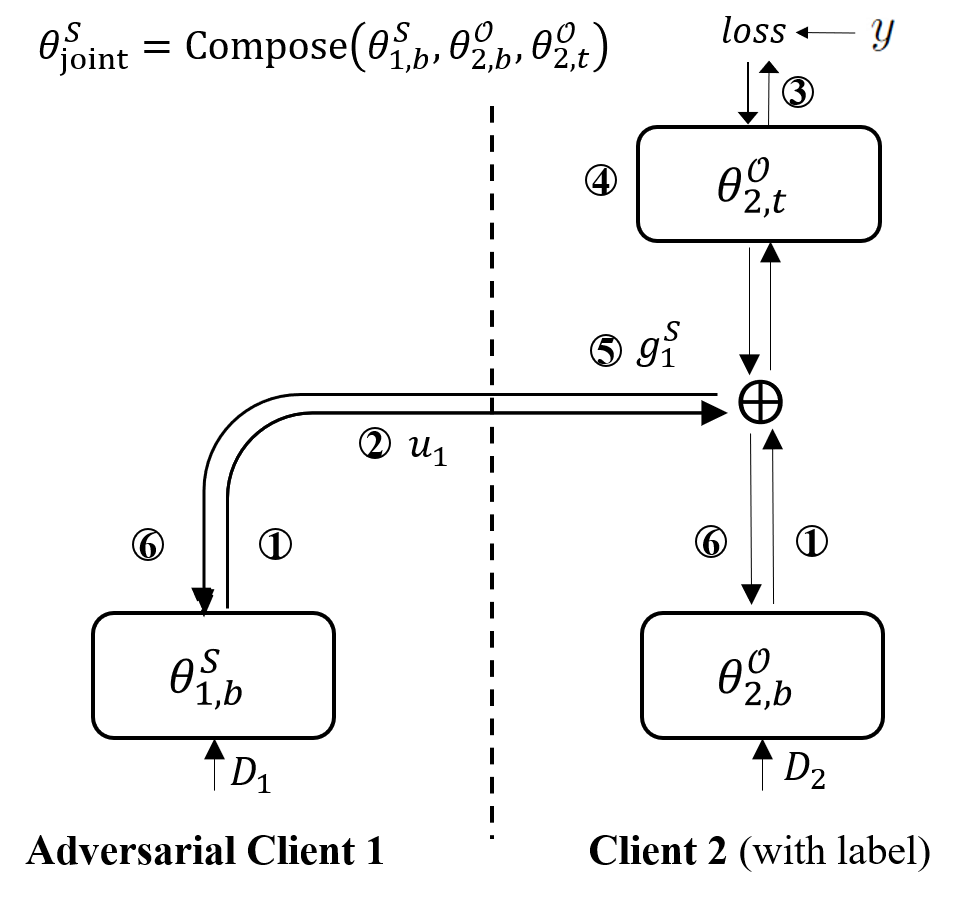}
      \vspace{-1em}
    \caption{Illustration of the HFL procedure (left) and VFL procedure (right). For the HFL setting, we consider the server as the adversary who aims to infer individual clients' private data. For the VFL setting, we consider client 1 (has no labels) as the adversary who wants to recover the labels owned by client 2.}
    \label{fig:fl_proc}
\end{figure*}


In VFL, without loss of generality, we consider a 2-client scenario where client 1 (has no labels) is the adversary who wants to recover the labels owned by client 2. The overall secure vertical federated learning procedure is illustrated on the right panel of Fig. \ref{fig:fl_proc} and summarized as follows:
\begin{enumerate}[label=\circled{\arabic*}]
\item Client 1 and 2 compute their model outputs $u_1$ and $u_2$ based on their local bottom model information using their corresponding local data sets, respectively.
\item Client 1 sends $u_1$ to client 2.
\item Client 2 aggregates $u_1$ and $u_2$ and then feeds the aggregated result to its top model $\theta_{2,t}^{\calRO}$ to generate the predicted labels, which further is used to compute loss against the ground truth label $y$.
\item Client 2 updates its top model $\theta_{2,t}^{\calRO}$ and continues to compute the derivatives $g_1$ and $g_2$ of the loss $L$ w.r.t. $u_1$ and $u_2$, respectively.
\item In order to prevent the adversarial client 1 from inferring labels of client 2 based on $g_1$, client 2 adopts protection mechanism $M$ to convert $g_1$ to protected $g_1^{\calD}$, and sends $g_1^{\calD}$ to client 1.
\item Clients 1 and 2 update their local bottom model $\theta_{1,b}^{\calD}$ and $\theta_{2,b}^{\calRO}$ based on $g_1^{\calD}$ and $g_2$, respectively. 
\end{enumerate}
The processes \textcircled{1}-\textcircled{6} iterate until the utility of the joint model $\theta_{\text{joint}}^{\calD}$ does not improve.

For the convenience of our analysis, we unify the terminologies and notations used for HFL and VFL (summarized in \pref{tab: HFL_vs_VFL}). The final trained model in HFL is typically called \textit{aggregated model}, while the one in VFL is called \textit{joint model}. We refer to them as \textit{federated model}. The private information of HFL typically involves private features, labels, and statistical information; the one we study in this work for VFL is private labels. We refer to them all as \textit{private information}. In HFL,
the information exploited by the attacker to infer the private information of protector $k$ is $\theta_k^{\calD}$ if the attacker is the server, and is $\theta_\text{agg}^{\calD}$ if the attacker is the client (see \pref{fig:fl_proc} left). In VFL, the exploited information is referred to as $g_k^{\calD}$ (see \pref{fig:fl_proc} right). We unify them as $W_{k}^{\calD}$ and $W_{\text{fed}}^{\calD}$.
The utility of the federated model of HFL is usually defined as $\frac{1}{K}\sum_{k=1}^K U_k(\theta_\text{agg}^{\calD})$, while the one of VFL is $U(\theta_\text{joint}^{\calD})$. We unify them as $U(W_\text{fed}^{\calD})$.
\begin{table*}[!htp]
  \centering
  \setlength{\belowcaptionskip}{15pt}
   \vspace{-1em}
  \caption{Unified terminologies and notations for HFL and VFL}
  \label{tab: HFL_vs_VFL}
    \begin{tabular}{c||ccc}
    \toprule
    \hline
    & Horizontal FL & Vertical FL & Unified FL\cr
    \cline{1-4}
    \begin{tabular}[c]{@{}c@{}} 
     global model
    \end{tabular} & 
    \begin{tabular}[c]{@{}c@{}} 
    \text{aggregated model}
    \end{tabular}
     & \begin{tabular}[c]{@{}c@{}}
     joint model 
     \end{tabular}
     & \begin{tabular}[c]{@{}c@{}}
     federated model 
     \end{tabular}
     \cr
     \cline{1-4}
    \begin{tabular}[c]{@{}c@{}} privacy \end{tabular} & 
    \begin{tabular}[c]{@{}c@{}} private data or\\ statistical information \end{tabular} 
    & private labels & 
    \begin{tabular}[c]{@{}c@{}} 
    private information
    \end{tabular} \cr
    \cline{1-4}
    \begin{tabular}[c]{@{}c@{}}
    information exposed \\to the attacker \end{tabular} & \begin{tabular}[c]{@{}c@{}}  
    $\theta_k^{\calD}$ (attacker: server) \\
    or $\theta_\text{agg}^{\calD}$ (attacker: client) \end{tabular}  
    &  $g_k^{\calD}$ 
    & 
    \begin{tabular}[c]{@{}c@{}}
    $W_{k}^{\calD}$ or $W_\text{fed}^{\calD}$ 
    \end{tabular}\cr
    \cline{1-4}
    \renewcommand{\arraystretch}{1.8}
    \begin{tabular}[c]{@{}c@{}} 
    utility
    \end{tabular}
    & $\frac{1}{K}\sum_{k=1}^K U_k(\theta_\text{agg}^{\calD})$ & $U(\theta_\text{joint}^{\calD})$ & $U(W_\text{fed}^{\calD})$\cr
     \hline
    \bottomrule
    \end{tabular}
\end{table*}

Horizontal and vertical federated learning are the two primary forms of federated learning. While they have distinct training procedures, their privacy-preserving problem can be boiled down to the competition between the adversary and the protector. The former tries to learn as much private information on $D$ as possible via privacy attacks. At the same time, the latter applies protection mechanisms to mitigate privacy leakage while maintaining utility loss and efficiency reduction below an acceptable level.

\begin{figure*}[h!]
\centering
\includegraphics[width=0.98\linewidth]{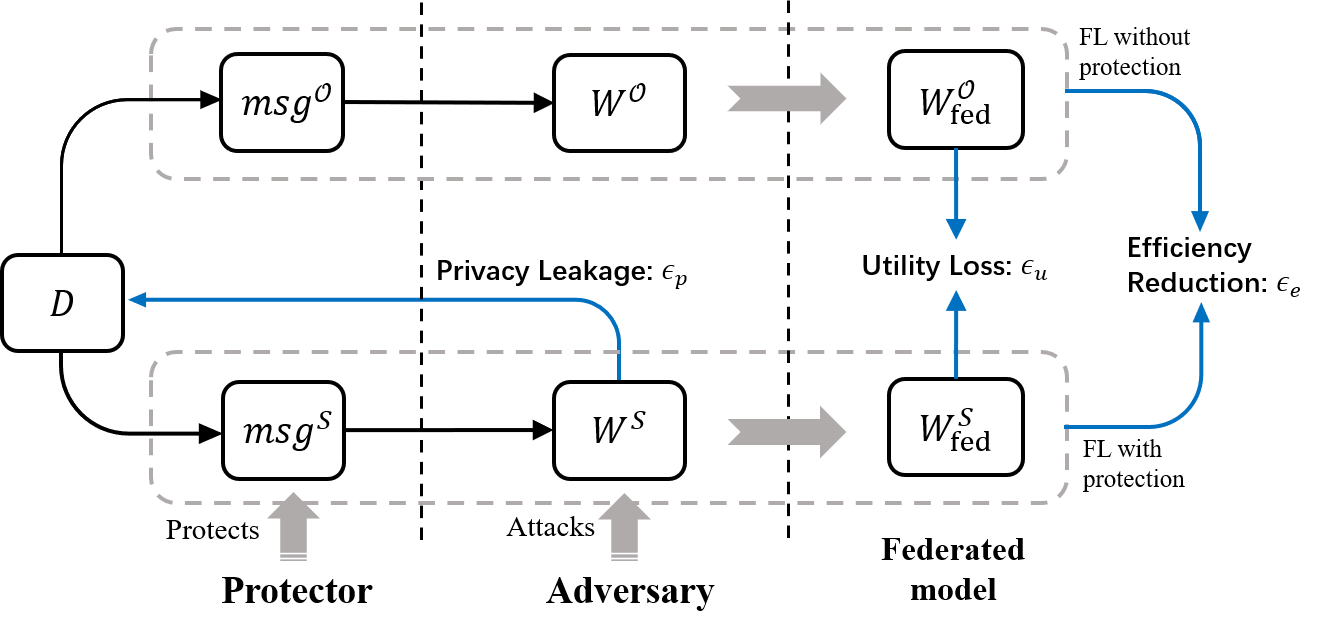}
\small
\vspace{-1em}
\caption{\textbf{The unified federated learning framework} (including HFL and VFL) illustrates the relationship between privacy leakage $\epsilon_p$, utility loss $\epsilon_u$, and efficiency reduction $\epsilon_e$. $D$ denotes the private data, which can be either features or labels. $msg$ denotes the message encoded with knowledge of $D$. It is sent from the data owner to the adversary. In HFL, $msg$ can be model parameters, model gradients, and model outputs sent from individual clients to the server. In VFL, $msg$ is typically the intermediate gradient (see 
Figure \ref{fig:fl_proc} (right)) sent from the client with labels to adversarial clients. $W$ denotes any model information derived from $msg$ that the privacy attack can leverage to infer $D$. In HFL, $W$ typically includes model parameters and model gradients. In VFL, $W$ includes intermediate gradients, model parameters, and model gradients. $W_{\text{fed}} $ denotes the federated model. We use superscripts $S$ and $\calRO$ to distinguish the protected information and the plain-text one, respectively.} \label{fig:pue}
\end{figure*}

Figure \ref{fig:pue} gives our proposed unified federated learning framework that illustrates the relationship between the \textit{privacy leakage} $\epsilon_p$, \textit{utility loss} $\epsilon_e$ and \textit{efficiency reduction} $\epsilon_u$ through the lens of the adversary and the protector. Specifically, to protect the privacy of its local data $D$, the protector converts the plain-text message $msg$ to protected one $msg^{\calD}$ exposed to the adversary aiming to mitigate the privacy leakage while maintaining the utility loss below an acceptable level. On the other hand, the adversary launches privacy attacks on protected model information $W^{\calD}$ aiming to infer as much information on $D$ as possible. Since protection mechanisms may jeopardize model utility and reduce efficiency, the protector needs to control the strength of the applied protection mechanism to strike a balance between private leakage, utility loss, and efficiency reduction. 

The unified FL framework provides a conceptual view of the relationship between $\epsilon_p$, $\epsilon_u$, and $\epsilon_e$ through the lens of the protector and the adversary, regardless of the underlying FL architecture. Thus, it also reconciles other FL architectures, such as peer-to-peer (P2P) FL. In this work, we propose the No-Free-Lunch (NFL) theorem that quantifies the trade-off between $\epsilon_p$, $\epsilon_u$, and $\epsilon_e$ from the perspective of the protector of our unified federated learning framework. Thus, the NFL theorem applies to HFL, VFL, and P2P FL (e.g., Swarm Learning~\cite{warnat2021swarm}).

\subsection{Protection and Privacy Attacking Mechanisms}

\subsubsection{Protection Mechanisms} The data protector applies certain protection mechanisms to the exposed model information to mitigate privacy leakage. We formally define the protection mechanism as follows:

\begin{definition}[Protection Mechanism and Protected Distribution]
The \textbf{protection mechanism} $M: \R^{m}\rightarrow \R^{m}$ maps the original model information $W_k^{\calRO}$, which follows a distribution $P_k^{\calRO}$, to its protected (or distorted) counterpart $W^{\calD}_k$, which follows a distribution $P_k^{\calD}$. The objective of $M$ is to protect private data so that the dependency between $W_k^{\calD}$ and $D_k$ is reduced, compared to the dependency between the unprotected information $W_k^{\calRO}$ and $D_k$. The distribution $P_k^{\calD}$ is referred to as the \textbf{protected distribution} of client $k$.
\end{definition}

\textbf{EXAMPLE:} We take the randomization mechanism as an illustrative example and introduce the \textit{protected distribution}. Assume that $W_k^{\calRO}\sim P_k^{\calRO} = \calN(\mu_0,\Sigma_0)$, and $\epsilon_k\sim \calN(0,\Sigma_{\epsilon})$, where $\Sigma_0 = \text{diag}(\sigma_{1}^2,\cdots, \sigma_{n}^2)$, $\Sigma_\epsilon = \text{diag}(\sigma_\epsilon^2, \cdots, \sigma_\epsilon^2)$. Then the protected parameter $W_k^{\calD} = W_k^{\calRO} + \epsilon_k\sim\calN(\mu_0, \Sigma_0+ \Sigma_\epsilon)$. That is, the protected distribution $P_k^{\calD} = \calN(\mu_0, \Sigma_0+ \Sigma_\epsilon)$. Please refer to \pref{sec: analysis_for_Randomization} for more details.



In this work, we consider widely-adopted protection mechanisms, including Randomization Mechanism, Homomorphic Encryption, Secret Sharing and Compression Mechanism. 

\subsubsection{Privacy Attacking Mechanisms}\label{App:bayes-inference-attack}


The requirement of adopting specific protection mechanisms depends on the adversary's threat model. We consider the adversary to be \textit{semi-honest}, he/she faithfully adheres to the federated learning protocol, yet may execute privacy attacks on exposed data to deduce the private information of other participants.

Let $D_k$ denote private features or labels in horizontal federated learning and denote labels in vertical federated learning. Let $W_k$ represent the model information exposed by client $k$. Let $f_{D_k|W_k}$ represent the probability density function of the posterior distribution $F_{D_k|W_k}$. Next, we introduce the Bayesian Inference Attack that estimates the private information by maximizing the posterior belief $f_{D_k|W_k}$.






\begin{definition}[Bayesian Inference Attack]\label{defi: bayesian_inference_attack}

Given the distorted model information $W_k^{\calD}$, the \textbf{Bayesian Inference Attack} finds data $d$ that maximizes the posterior belief:
    \begin{align} \label{eq: optimizaiton}
    \arg\max\limits_{d} \text{IH}(d|w) = \arg\max\limits_{d}[\text{I}(w|d) + \text{H}(d)],
\end{align}
where $\text{IH}(d|w) = \log f_{{D_k}|{W_k^{\calD}}}(d|w)$ corresponds to the logarithm of the posterior belief, $\text{I}(w|d) = \log f_{W_k^{\calD}|D_k}(w|d)$ measures the logarithm of the likelihood based on the observed model information $w$, and $\text{H}(d) = \log f_{D_k}(d)$ represents the logarithm of the prior belief $f_{{D_k}}(d)$. According to Bayes' theorem, maximizing the logarithm of posterior $f_{D_k|W_k^{\calD}}(d|w)$ on $D_k$ involves maximizing summation of $\log(f_{W_k^{\calD}|D_k}(w|d))$ and $\log(f_{D_k}(d))$.
\end{definition}
Below, we introduce some representative privacy attacks (see \pref{tab: attacking_mechanisms_illustration}) that fall under the family of the Bayesian Inference Attack.

\begin{table*}[!htp]
  \centering
  \small
  \caption{Privacy attacking mechanisms that fit the Bayesian inference attack. For Gradient Inversion attacks, $g$ denotes the observed gradient, and $w$ corresponds to $\theta_k^{\calD}$ in the left panel of Figure \ref{fig:fl_proc}. For Label Inference attacks, $w$ corresponds to $g_1^{\calD}$ in the right panel of Figure \ref{fig:fl_proc}).}
  \label{tab: attacking_mechanisms_illustration}
    \begin{tabular}{c||cccc}
    \toprule
    \hline
    Attack & Work & $\text{I}(w|d)$ & $\text{H}(d)$ & Type\cr
    \hline
  \multirowcell{4}{Gradient \\Inversion} 
  & DLG \cite{zhu2019dlg} & $C - \frac{1}{2\sigma^2}\|g - \nabla_w \calL(d,w)\|_2^2$ & Constant & Horizontal\cr
  & Inverting Gradients \cite{geiping2020inverting} &  $\frac{\inner{\nabla_{w}\calL(d,w),\, g}}{\|\nabla_{w}\calL(d,w)\|\cdot \|g\|}$ & ${\text{TV}}(d)$ & Horizontal\cr
  & Improved DLG \cite{zhao2020idlg} & $C - \|g - \nabla_w \calL(d,w)\|_2^2$ & $\text{Label}(d)$ & Horizontal\cr
  & GradInversion \cite{yin2021see} & $C - \frac{1}{2\sigma^2}\|g - \nabla_w \calL(d,w)\|_2^2$ & $\text{Group}(d)$ & Horizontal\cr
     \hline
  \multirowcell{2} {Label \\ Inference} 
  & Norm-based Scoring~\cite{oscar2022split}& $N(\sum_{i = 1}^{\psi} \one\{r(w)\in [l_i, u_i]\}\cdot \one\{d\in C_i\})$ &  Constant & Vertical \cr
  & Direct Label Inference~\cite{fu2022label}& $N(\sum_{i = 1}^{\psi} \one\{w\in [l_i, u_i]\}\cdot \one\{d\in C_i\})$ & Constant & Vertical\cr
     \hline
    \bottomrule
    \end{tabular}
\end{table*}

\paragraph{Privacy Attacks in HFL}
The following attacking mechanisms proposed for HFL fit into the Bayesian inference attack framework formulated in \pref{eq: optimizaiton}. These attacks infer private data $d$ through maximizing the similarity between the observed gradient $g$ and the estimated gradient $\nabla_w \calL(d,w)$ with different prior on $d$ ($w$ here corresponds to $\theta_k^{\calD}$ in the left panel of Figure \ref{fig:fl_proc}).
\begin{itemize}
    \item Deep Gradient Leakage (DLG)~\cite{zhu2019dlg}: $\text{H}(d)$ is constant; $\text{I}(w|d)$ is the negative $\ell_2$ distance between the observed gradient $g$ and the estimated gradient $\nabla_w \calL(d,w)$.
    \item Inverting Gradients~\cite{geiping2020inverting}: $\text{H}(d)$ is the {\text{TV}} loss of estimated data, denoted as ${\text{TV}}(d)$; $\text{I}(w|d)$ corresponds to the cosine similarity between the observed gradient $g$ and the estimated gradient $\nabla_w \calL(d,w)$.
    \item Improved DLG~\cite{zhao2020idlg}): $\text{H}(d)$ is the prior with the label information of $d$, denoted as $\text{Label}(d)$; $\text{I}(w|d)$ corresponds to the negative $\ell_2$ distance between the observed gradient $g$ and the estimated gradient $\nabla_w \calL(d,w)$.
    \item GradInversion~\cite{yin2021see}: $\text{H}(d)$ is the group consistency of estimated data, denoted as $\text{Group}(d)$; $\text{I}(w|d)$ corresponds to the negative $\ell_2$ distance between the observed gradient $g$ and the estimated gradient $\nabla_w \calL(d,w)$.
\end{itemize}

\paragraph{Privacy Attacks in VFL} We focus on label inference attacks because labels may contain sensitive user information, and they are typically valuable assets in real-world VFL applications such as finance, healthcare, and advertisement. More specifically, we consider gradient scoring attack and direct label inference attack, which comply with the Bayesian inference attack framework. Both attacks are mounted by client 1, aiming to infer labels owned by client 2 based on back-propagated gradient $w$ ($w$ here corresponds to $g_1^{\calD}$ in the right panel of Figure \ref{fig:fl_proc}).
\begin{itemize}
    \item Gradient Scoring Attack. $\text{I}(w|d) = N\left(\sum_{i = 1}^{\psi} \one\{r(w)\in [l_i, v_i]\}\cdot \one\{d\in C_i\}\right)$, where $r: \R^m \rightarrow [0,1]$ denotes a scoring function; $d$ represents one data point; $l_i$ and $v_i$ denotes the lower bound and upper bound, respectively; $\psi$ represents the total number of classes; $C_i$ represents class $i$; $N(\cdot)$ computes the logarithm of the normalized input. $\text{H}(d) = \text{constant}$. The gradient scoring attack typically applies to the binary classification task. The attacker needs to design a scoring function satisfying that $r(w)\in [l_1, v_1]$ if the corresponding data of $w$ belongs to the negative class, and $r(w)\in [l_2, v_2]$ if the corresponding data of $w$ belongs to the positive class.
    A case in point is the Norm-based Scoring Attack (NBS)~\cite{oscar2022split}, which observes that $\|w\|_2$ of the positive instances are generally larger than that of the negative ones and formulates the attacking problem as the classification problem.
    \item Direct Label Inference Attack (DLI) \cite{fu2022label}. 
    DLI is tailored to the VFL scenario where the top model owned by client 2 is an activation function (e.g., softmax)~\cite{liu2022vertical}, the adversary has access to the gradients of the final activation function, and thus it can infer labels through the signs of back-propagated gradients. In DLI, $\text{I}(w|d) = N\left(\sum_{i = 1}^{\psi} \one\{w_i\in ([l_i, v_i]\}\cdot \one\{d\in C_i\}\right)$, where $l_i =-\infty$ and $v_i = 0$ if $d\in C_i$; $l_i = 0$ and $v_i=+\infty$ otherwise; $\text{H}(d) = \text{constant}$. Thus, if $w_i < 0$ its corresponding data point $d$ belongs to class $C_i$ and if $w_i >0$ the data point $d$ does not belong to class $C_i$.

\end{itemize}

\textbf{Remark:} Attacking mechanisms, such as the model completion attack \cite{fu2022label}, that use the cumulative information over rounds and is beyond the scope of our article.


\subsection{Privacy Leakage, Utility Loss and Efficiency Reduction}



In this section, we formally define \textit{privacy leakage}, \textit{utility loss}, and \textit{efficiency reduction} (depicted in \pref{fig:pue}). 

The privacy leakage (\pref{defi: average_privacy_JSD}) measures the variation between the adversary's prior and posterior beliefs on private information. The adversary obtains the posterior belief by mounting a Bayesian inference attack (\pref{defi: bayesian_inference_attack}) on the protected model information. Thus, we formally call the privacy leakage the \textit{Bayesian} privacy leakage. The protected model information may lead to the federated model with lower utility (\pref{defi: utility_loss}) and the federated training with less efficiency (\pref{defi: efficiency_reduction}) in certain scenarios.

Let $F^{\calA}_k$, $F^{\mathcal O}_k$ and $F^{\calO}_k$ represent the attacker's belief distributions about $D_k$ upon observing the protected information, the original information and without observing any information, respectively, and the probability density functions of which are $f^{\calA}_{D_k}, f^{\calRO}_{D_k}$, and $f^{\calO}_{D_k}$. Specifically, $f^{\calA}_{D_k}(d) = \int_{\mathcal{W}_k} f_{{D_k}|{W_k}}(d|w)dP^{\calD}_{k}(w)$, $f^{\calRO}_{D_k}(d) = \int_{\mathcal{W}_k}f_{{D_k}|{W_k}}(d|w)dP^{\calRO}_{k}(w)$, and $f^{\calO}_{D_k}(d) = f_{D_k}(d)$.

We use {\text{JS}} divergence to measure the privacy leakage instead of KL divergence. The advantage of {\text{JS}} divergence over KL divergence is that it is symmetrical, and its square root satisfies the triangle inequality \cite{endres2003new}. This property facilitates the quantification of the trade-offs.

\begin{definition}[Bayesian Privacy Leakage]\label{defi: average_privacy_JSD}
Let $\epsilon_{p,k}$ represent the privacy leakage of client $k$, which is defined as:
\begin{align}\label{eq: def_of_pl}
&\epsilon_{p,k} = \sqrt{{\text{JS}}(F^{\calA}_k || F^{\calO}_k)},
\end{align}
where ${\text{JS}}(F^{\calA}_k || F^{\calO}_k) = \frac{1}{2}\int_{\mathcal{D}_k} f^{\calA}_{D_k}(d)\log\frac{f^{\calA}_{D_k}(d)}{f^{\calM}_{D_k}(d)}\textbf{d}\mu(d) + \frac{1}{2}\int_{\mathcal{D}_k} f^{\calO}_{D_k}(d)\log\frac{f^{\calO}_{D_k}(d)}{f^{\calM}_{D_k}(d)}\textbf{d}\mu(d)$, $F^{\calA}_k$ and $F^{\calO}_k$ represent the attacker's belief distribution about $D_k$ upon observing the protected information and without observing any information, respectively, and $f_{D_k}^{\calM}(d) = \frac{1}{2}(f^{\calA}_{D_k}(d) + f^{\calO}_{D_k}(d))$.

Furthermore, the Bayesian privacy leakage in FL resulting from releasing the protected model information is defined as:
\begin{align*}
\epsilon_p = \frac{1}{K}\sum_{k=1}^K \epsilon_{p,k}.
\end{align*} 
\textbf{Remark:}\\
(1) The local model information $W_k$ represents the \textit{model parameters}, \textit{model gradients} and \textit{model outputs}, all of which may optionally be exchanged or get exposed to semi-honest adversaries.
\end{definition}
\noindent(2) If the private information $D_k$ is continuous, then 
\begin{align*}
   {\text{JS}}(F^{\calA}_k || F^{\calO}_k) 
    & = \frac{1}{2}\left[\text{KL}\left(F^{\calA}_k, F^{\calM}_k\right) + \text{KL}\left(F^{\calO}_k,F^{\calM}_k\right)\right]\\
    & = \frac{1}{2}\int_{\mathcal{D}_k} f^{\calA}_{D_k}(d)\log\frac{f^{\calA}_{D_k}(d)}{f^{\calM}_{D_k}(d)}\textbf{d}\mu(d) + \frac{1}{2}\int_{\mathcal{D}_k} f^{\calO}_{D_k}(d)\log\frac{f^{\calO}_{D_k}(d)}{f^{\calM}_{D_k}(d)}\textbf{d}\mu(d).
\end{align*}

If the private information $D_k$ is discrete, then 
\begin{align*}
   {\text{JS}}(F^{\calA}_k || F^{\calO}_k) 
   & = \frac{1}{2}\left[\text{KL}\left(F^{\calA}_k, F^{\calM}_k\right) + \text{KL}\left(F^{\calO}_k,F^{\calM}_k\right)\right]\\
   & = \frac{1}{2}\sum_{d\in\mathcal{D}_k} f^{\calA}_{D_k}(d)\log\frac{f^{\calA}_{D_k}(d)}{f^{\calM}_{D_k}(d)} + \frac{1}{2}\sum_{d\in\mathcal{D}_k} f^{\calO}_{D_k}(d)\log\frac{f^{\calO}_{D_k}(d)}{f^{\calM}_{D_k}(d)}.
\end{align*}



When evaluating the utility loss, we consider the scenario when each protector is assigned a private key (if one exists).

\begin{definition}[Utility Loss]\label{defi: utility_loss}
The utility loss of client $k$ (denoted as $\epsilon_{u,k}$) measures the variation in utility of client $k$ with the federated model drawn from unprotected distribution $P_{\text{fed}}^{\calRO}$ and the utility of the federated model drawn from protected distribution $P_{\text{fed}}^{\calD}$:
\begin{align*}
    \epsilon_{u,k} = \mathbb E_{W_{\text{fed}}^{\calRO}\sim P_{\text{fed}}^{\calRO}}[U_k(W^{\calRO}_{\text{fed}})] - \mathbb E_{W_{\text{fed}}^{\calD}\sim P_{\text{fed}}^{\calD}}[U_k(W_{\text{fed}}^{\calD})],
\end{align*}
where $U_k$ represents the utility function of client $k$. Furthermore, the utility loss in FL system is defined as:
\begin{align*}
    \epsilon_u = \mathbb E_{W_{\text{fed}}^{\calRO}\sim P_{\text{fed}}^{\calRO}}[U(W^{\calRO}_{\text{fed}})] - \mathbb E_{W_{\text{fed}}^{\calD}\sim P_{\text{fed}}^{\calD}}[U(W_{\text{fed}}^{\calD})],
\end{align*}
where $U$ represents the utility function of the FL system.  
\end{definition}




Efficiency reduction is another criterion we consider in this work, and defined as follows.

\begin{definition}[Efficiency Reduction]\label{defi: efficiency_reduction}
Let $\epsilon_{e,k}$ represent the efficiency reduction of client $k$. The efficiency reduction of client $k$ measures the variation in efficiency with the models drawn from the unprotected and protected distributions $P_{k}^{\calRO}$ and $P_{k}^{\calD}$, which is defined as: 

\begin{align*}
    \epsilon_{e,k} = \mathbb E_{W_k^{\calD}\sim P_{k}^{\calD}}[C(W^{\calD}_k)] - \mathbb E_{W_k^{\calRO}\sim P_{k}^{\calRO}}[C(W^{\calRO}_k)],
\end{align*}
where $C$ denotes a mapping from the model information to the efficiency measured in terms of the communication cost (e.g., the transmitted bits) or the training cost. Furthermore, the efficiency reduction in FL system is defined as:
\begin{align*}
    \epsilon_e = \frac{1}{K}\sum_{k = 1}^K \epsilon_{e,k}.
\end{align*}

\end{definition}


\subsection{The Competition Between The Protector and The Adversary}

\textbf{The Goal of the Protector}
The goal of protector $k$ is formulated as an optimization problem that aims at finding a protected distribution $P_k^{\calD}$ achieving the minimum utility loss and efficiency reduction under the privacy constraint $\chi_k$, which is formally expressed as
\begin{align} \label{eq: constraint_optimization_problem}
\min\limits_{P_k^{\calD}} \eta_{u}\cdot\epsilon_{u,k} +  \eta_{e}\cdot\epsilon_{e,k},\\
\text{subject to} \quad \epsilon_{p,k}\le\chi_k.
\end{align}
where $\eta_{u}$ represents the preference of the protector towards model utility, and $\eta_{e}$ represents the preference of the protector towards efficiency.



\textbf{The Goal of the Adversary}
Let $D_k$ be the client $k$'s private information. Let $W_k^{\calD}$ represent the client $k$'s model information exposed to the adversary. The goal of the adversary is formally expressed as
\begin{align}\label{eq: the_goal_of_the_adversary}
    \max_{F_{D_k|W_k^{\calD}}} \epsilon_{p,k} := {\text{JS}}(F^{\calA}_k || F^{\calO}_k),
\end{align}
where $F_{D_k|W_k^{\calD}}$ represents the adversary's posterior distribution of the private information $D_k$ upon observing the exposed model information $W_k^{\calD}$.

\textbf{Competition Between the Protector and the Adversary}
Given the goal of the protector and the adversary formally stated in \pref{eq: constraint_optimization_problem} and \pref{eq: the_goal_of_the_adversary}, the competition between the protector and the adversary is formally expressed as 
\begin{align} \label{eq: constraint_optimization_problem_add_max}
\begin{array}{r@{\quad}l@{}l@{\quad}l}
\quad\min\limits_{P_k^{\calD}}& \eta_{u}\cdot\epsilon_{u,k} +  \eta_{e}\cdot\epsilon_{e,k},\\
\text{subject to} & \max_{F_{D_k|W_k^{\calD}}} \epsilon_{p,k}\le\chi_k.
\end{array}
\end{align}
where $\chi_k$ is the privacy level required by client $k$. 

\textbf{Remark:} This optimization problem can be extended to satisfy the personalized requirement of the federated learning system. For example, if the efficiency reduction of the federated learning system is required not to exceed $\varphi_k$, then the optimization problem is expressed as
\begin{align} \label{eq: constrainted_optimization_problem_relaxed_with_efficiency_constraint}
\begin{array}{r@{\quad}l@{}l@{\quad}l}
\quad\min\limits_{P_k^{\calD}}&\epsilon_{u,k},\\
\text{subject to} & \max_{F_{D_k|W_k^{\calD}}} \epsilon_{p,k}\le\chi_k, \epsilon_{e,k}\le \varphi_k.\\
\end{array}
\end{align}

A natural question is: \textit{is it possible to design a protection mechanism that simultaneously achieves infinitesimal privacy leakage, utility loss, and efficiency reduction?} In the following section, we provide a negative answer for this question in certain scenarios, which leads to our No-Free-Lunch Theorem (\pref{thm: utility-privacy-efficiency Trade-off_JSD_mt}).

\section{No Free Lunch Theorem in Federated Learning}\label{sec: trade_offs}



In this section, we propose the No-Free-Lunch (NFL) theorem (\pref{thm: utility-privacy-efficiency Trade-off_JSD_mt}), stating that it is unrealistic to expect a privacy-preserving FL algorithm to simultaneously achieve the infinitesimal privacy leakage, utility loss, and efficiency reduction in certain scenarios. 


Before elaborating on the NFL theorem, we introduce \pref{assump: assump_of_Delta} and \pref{assump: assump_of_Xi_efficiency}, under which the federated learning scenarios we analyze guarantee the existence of trade-offs between privacy, utility, and efficiency.

\pref{assump: assump_of_Delta} is proposed in \cite{zhang2022no} and illustrated in \pref{fig:delta_w}. We present it here for completeness. Intuitively, it states that the cumulative density of the near-optimal parameters is bounded, which rules out scenarios where the utility is constant and 
most parameters are near-optimal parameters. 

\begin{assumption}\label{assump: assump_of_Delta}
let $\calW^{\calD}_{\text{fed}}$ represent the support of the protected distribution $P_{\text{fed}}^{\calD}$ of the federated model information. Let $\calW^{*}_{\text{fed}}$ represent the set of parameters achieving the maximum utility
\begin{align*}
    \calW^{*}_{\text{fed}} = \argmax_{w\in\calW_{\text{fed}}}U(w),
\end{align*}
where $U$ represents the utility of the federated learning system. Given a non-negative constant $c$, the \textit{near-optimal parameters} is defined as
$$\calW_{c} = \left\{w\in\calW^{\calD}_{\text{fed}}: \left|U(w^{*})-U(w)\right|\le c, \forall w^{*}\in\calW^{*}_{\text{fed}}\right\}.$$
Let $c = \Delta$ be the \textit{maximum} constant that satisfies:
\begin{align}\label{eq: Delta_Area}
     \int_{\calW_{\ttwo{fed}}^{\calD}} \one\{w\in\calW_{\Delta}\} p^{\calD}_{W_{\text{fed}}}(w)  dw\le\frac{{\text{TV}}(P_{\text{fed}}^{\calRO} || P_{\text{fed}}^{\calD})}{2}, 
\end{align}


where $p^{\calD}_{W_{\text{fed}}}$ represents the probability density function of the protected federated model information $W_{\text{fed}}^\calD$. We assume that $\Delta$ is positive, i.e., $\Delta >0$.
\end{assumption}
\begin{figure*}[!htp]
\centering
\includegraphics[width = 0.55\columnwidth]{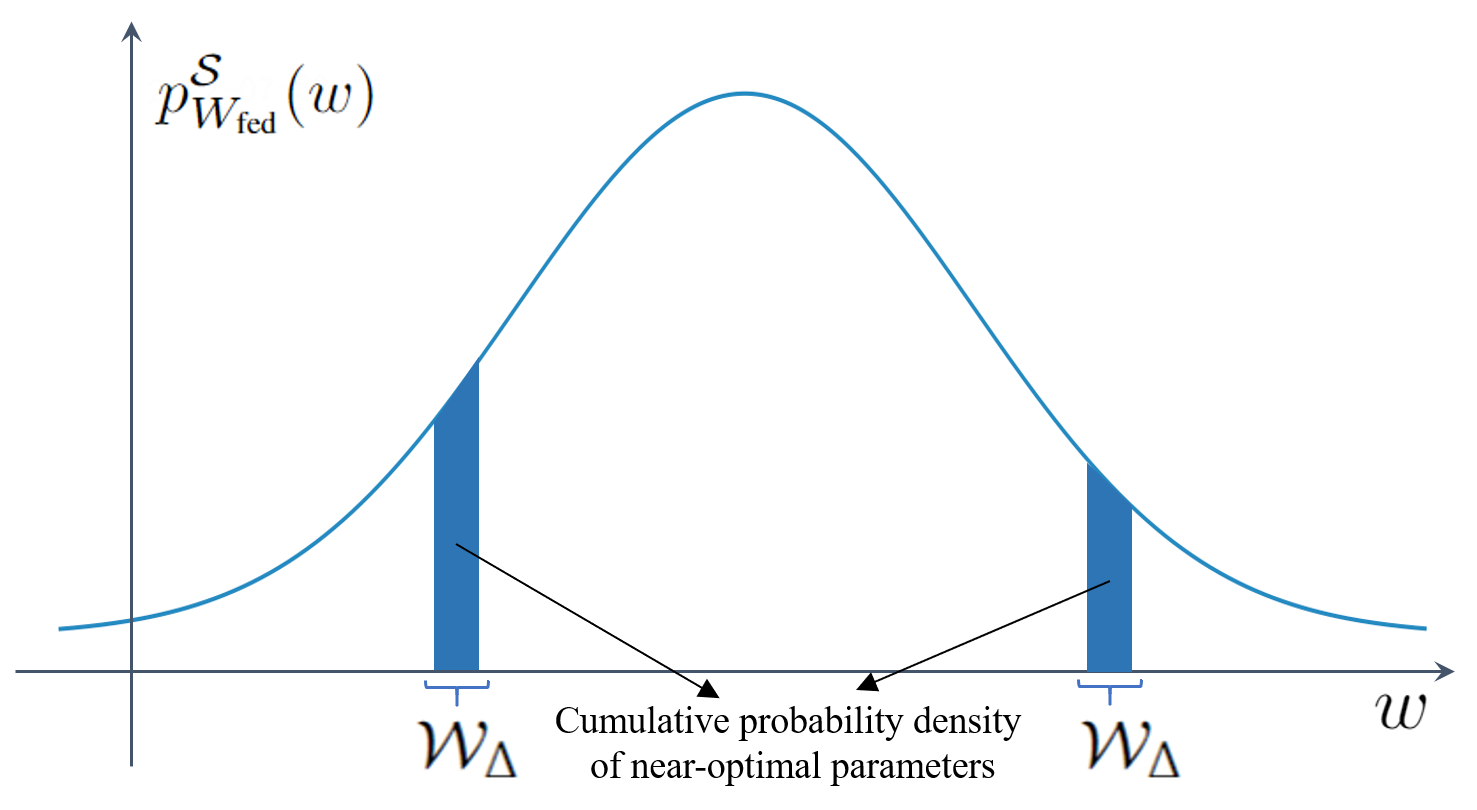}
\vspace{-1em}
\caption{Illustration of \pref{assump: assump_of_Delta}. The blue area represents the probability that the realization of the protected model information $W_{\text{fed}}^\calD$ falls inside the \textit{near-optimal parameters} $\calW_{\Delta}$. It is referred to as the cumulative probability density of the near-optimal parameters. Note that we use model parameters with one dimension in this figure for illustrative purposes.\\
}\label{fig:delta_w}
\end{figure*}




\textbf{Remark:} The Eq. (\ref{eq: Delta_Area}) states that  the cumulative density of the near-optimal distorted parameters is upper-bounded by $\frac{{\text{TV}}(P_{\text{fed}}^{\calRO} || P_{\text{fed}}^{\calD} )}{2}$, which is at most $\frac{1}{2}$. Therefore, the utility is not constant, and most distorted parameters are sub-optimal.

\textbf{Remark:}
If the distribution of the protected model information is continuous and ${\text{TV}}(P^{\calRO}_{\text{fed}} || P^{\calD}_{\text{fed}} ) = 0$, then there does not exist a positive constant $\Delta$ satisfying \pref{eq: Delta_Area}, which implies that \pref{assump: assump_of_Delta} does not hold. From Lemma C.3 of \cite{zhang2022no}, the utility equals $0$.






Let $p_{W_k}^{\calRO}$ represent the probability density of unprotected model information of client $k$, and $p_{W_k}^{\calD}$ represent the probability density of protected model information of client $k$. The following \pref{assump: assump_of_Xi_efficiency} states that the distorted parameters have higher communication cost than that of the original parameters with high probability. Therefore, the expected communication cost of the distorted parameter is higher than that of the original parameter. This assumption rules out scenarios where the communication cost is constant, and the communication cost of most distorted parameters is smaller than that of the original parameters. We provide an example for illustration in \pref{fig:assump_2}.

\begin{assumption}\label{assump: assump_of_Xi_efficiency}
Let $w^{\calRO}_{\text{max}} = \arg\max_{w\in\calW_k^{\calRO}} C(w)$. Let $\Xi_k$ denote the minimum non-negative constant satisfying that $C(w^{\calRO}_{\text{max}}) - \Xi_k \le C(w)$, $\forall w\in\mathcal W^{\calD}_{k}$. Let $\mathcal U_{k}$ represent the set of distorted parameters with improved probability density. Specifically,  
$\mathcal U_{k} = \{w\in\mathcal W_{k}^{\calD}: p_{W_{k}}^{\calD}(w) \ge p^{\calRO}_{W_{k}}(w)\}$. We denote $\calW_{k,\Xi_k}^{+}$ as the set of distorted parameters with high probability density, and the communication cost of which is larger than that of the original parameters with a gap of at least $2 \cdot \Xi_k$. We denote $\calW_{k,\Xi_k}^{+}$ as:
\begin{align*}
&\calW_{k,\Xi_k}^{+} = \left\{w\in\mathcal U_{k}: C(w^{\calRO}_{\text{max}}) + 2 \cdot \Xi_k \le C(w) \quad\text{and}\quad p_{W_{k}}^{\calD}(w) \ge 2 \cdot p^{\calRO}_{W_{k}}(w)\right\}. 
\end{align*}


We denote $\calW_{k,\Xi_k}^{-} = \mathcal U_{k}\setminus\calW_{k,\Xi_k}^{+}$. Let $\Gamma_k$ denote the maximum constant satisfying that

\begin{align} \label{eq:assump_2}
    \underbrace{\int_{\mathcal U_{k}}\one\{w\in\calW_{k,\Xi_k}^{+}\} p^{\calD}_{W_{k}}(w)dw}_{\text{term $1$}} - \underbrace{\int_{\mathcal U_{k}}\one\{w\in\calW_{k,\Xi_k}^{-}\}p^{\calD}_{W_{k}}(w)dw}_{\text{term $2$}}\ge\Gamma_k\cdot{\text{TV}}(P_{k}^{\calRO} || P_{k}^{\calD} ),
\end{align}
where $p^{\calD}_{W_{k}}$ denotes the probability density function of the protected model information and $p^{\calRO}_{W_{k}}$ denotes the probability density function of the original model information.
We assume that $\Xi_k$ and $\Gamma_k$ are positive, i.e., $\Xi_k > 0, \Gamma_k > 0, \forall k\in[K]$.
\end{assumption}

\begin{figure*}[!htp]
\centering

\includegraphics[width = 0.38\linewidth]{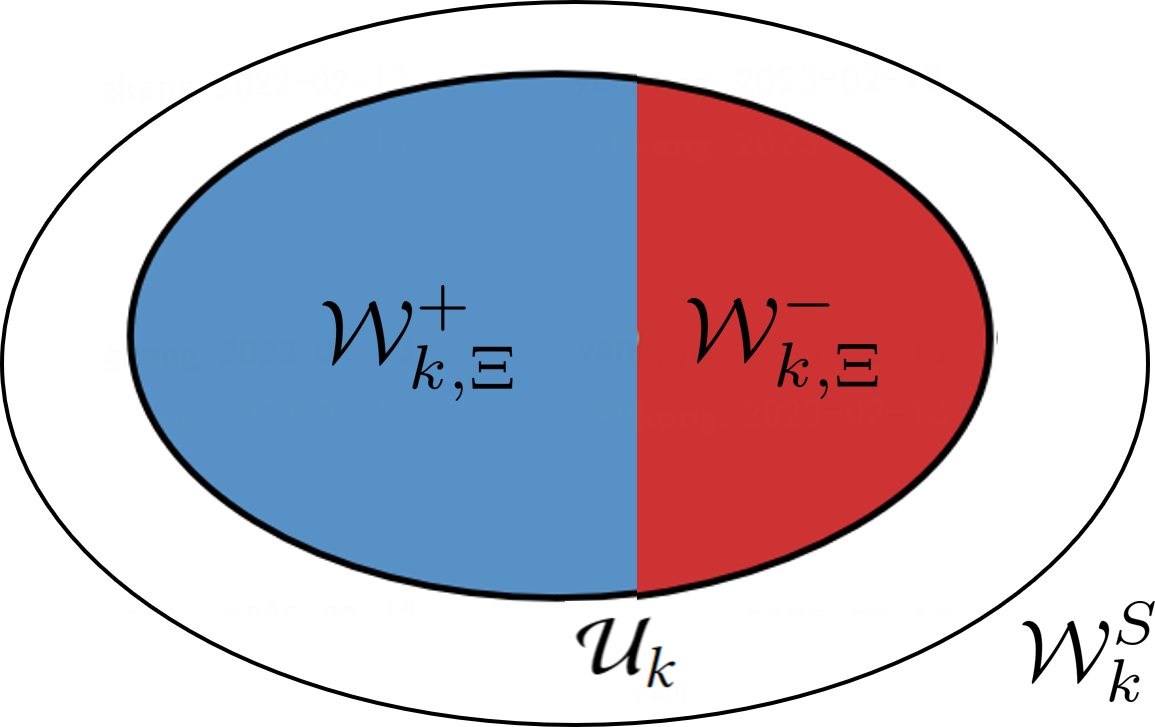}
\hspace{2em}
\includegraphics[width = 0.48\linewidth]{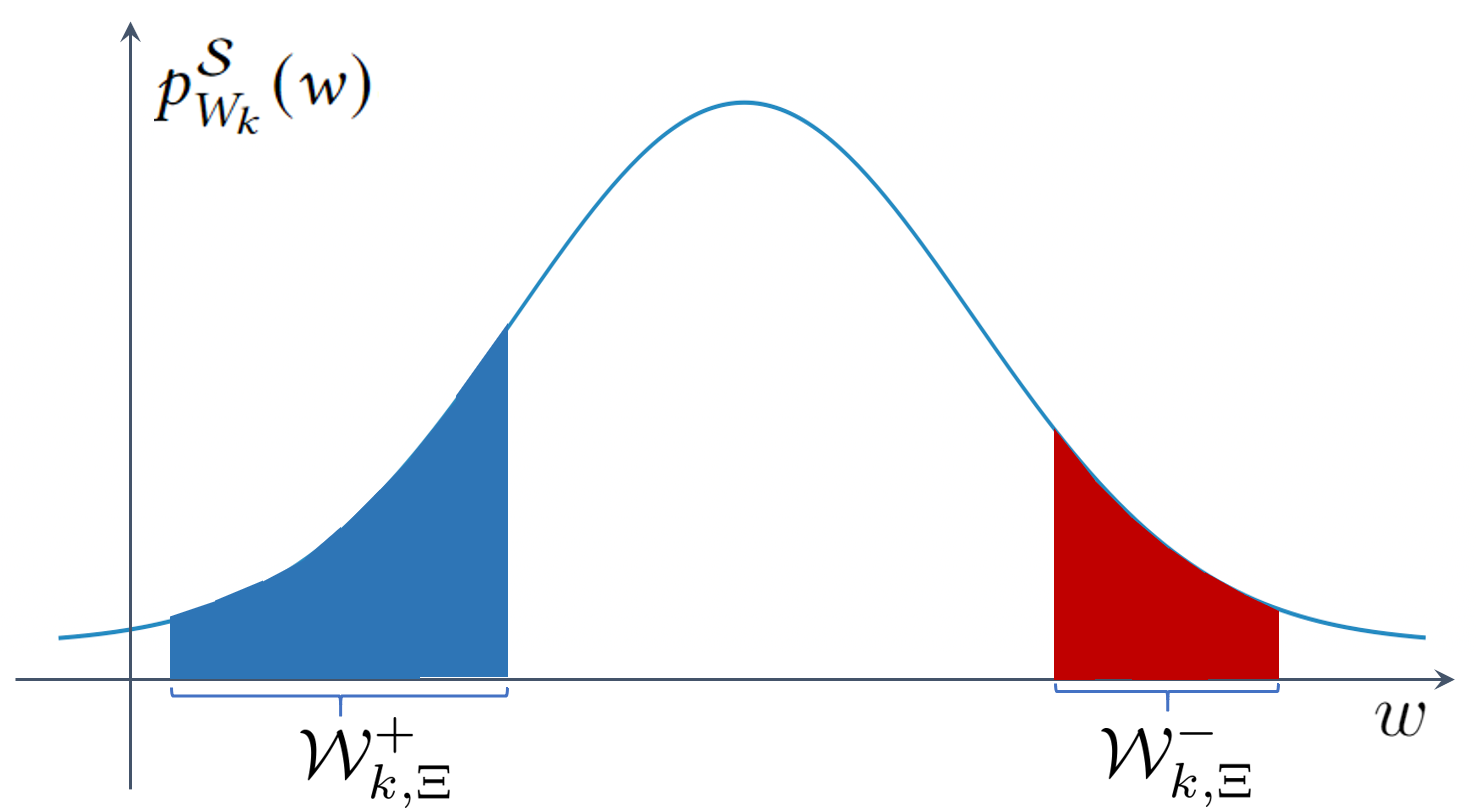}

\caption{Illustration of \pref{assump: assump_of_Xi_efficiency}. The left panel shows the relationship between the sets $\calW_{k,\Xi_k}^{+}$, $\calW_{k,\Xi_k}^{-}$, $\mathcal U_{k}$, and $\mathcal W^{\calD}_{k}$. The right panel illustrates Eq. (\ref{eq:assump_2}). In the right panel, the blue area (\text{term $1$} in Eq.(\ref{eq:assump_2})) and red area  (\text{term $2$} in Eq.(\ref{eq:assump_2})) represent the probability that the realization of the protected model information falls inside $\calW_{k,\Xi_k}^{+}$ and $\calW_{k,\Xi_k}^{-}$, respectively, and the blue area is larger than the red area. Note that we use model parameters with one dimension for illustrative purposes.
}\label{fig:assump_2}
\end{figure*}


\textbf{Remark:} The Eq. (\ref{eq:assump_2}) states that the difference between the distorted parameters' cumulative density in $\mathcal{W}^{+}_{k,\Xi_k}$ and $\mathcal{W}^{-}_{k,\Xi_k}$ is lower-bounded by a non-negative value. Therefore, it conveys that the expected communication cost of the distorted parameter is higher than that of the original parameter. Since $\Xi_k > 0$, the communication cost is not constant.

In the next section, we introduce the quantitative trade-offs between privacy and efficiency and between privacy and utility, respectively. These two trade-offs lead to the main conclusion of our No-Free-Lunch theorem formulated in \pref{thm: utility-privacy-efficiency Trade-off_JSD_mt}.

\subsection{Trade-off between Privacy and Efficiency}

The following Lemma \ref{lem: efficiency_reduction_and_tvd} bounds efficiency reduction using the distortion measured by total variation distance ${\text{TV}}(P^{\calRO}_{\text{fed}} || P^{\calD}_{\text{fed}} )$. The intuition is that a larger distortion leads to a higher efficiency reduction. The analysis is deferred to \pref{lem: total_variation-efficiency trade-off}.

\begin{lem}\label{lem: efficiency_reduction_and_tvd}
Let \pref{assump: assump_of_Xi_efficiency} hold, and $\epsilon_{e}$ be defined in \pref{defi: efficiency_reduction}. Let $P_{\text{fed}}^{\calRO}$ and $P_{\text{fed}}^{\calD}$ represent the distribution of the aggregated parameter before and after being protected. Let $\Xi = \min_{k\in[K]} \Xi_k$, and $\Gamma = \min_{k\in [K]}\Gamma_k$. Then, we have:
\begin{align*}
    \epsilon_{e} \ge\Xi\cdot\Gamma\cdot \frac{1}{K}\sum_{k = 1}^K {\text{TV}}(P^{\calRO}_{k} || P^{\calD}_{k} ).
\end{align*}
\end{lem}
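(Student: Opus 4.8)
The plan is to reduce the system-level inequality to a per-client estimate and then exploit the structure encoded in \pref{assump: assump_of_Xi_efficiency}. By \pref{defi: efficiency_reduction} we have $\epsilon_e=\frac{1}{K}\sum_{k=1}^K\epsilon_{e,k}$ with $\epsilon_{e,k}=\int_{\calW_k}C(w)\big(p_k^{\calD}(w)-p_k^{\calRO}(w)\big)\,dw$, so it is enough to prove $\epsilon_{e,k}\ge\Xi_k\Gamma_k\,\text{TV}(P_k^{\calRO}||P_k^{\calD})$ for every $k$; since $\Xi_k\ge\Xi\ge 0$, $\Gamma_k\ge\Gamma\ge 0$, and total variation is nonnegative, averaging over $k$ then delivers the claimed bound.

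For the per-client bound I would start from $\int_{\calW_k}\big(p_k^{\calD}-p_k^{\calRO}\big)\,dw=0$, which allows re-centering the integrand by any reference cost $c_0$: $\epsilon_{e,k}=\int_{\calW_k}\big(C(w)-c_0\big)\big(p_k^{\calD}(w)-p_k^{\calRO}(w)\big)\,dw$. Next I would split $\calW_k$ according to the sign of $p_k^{\calD}-p_k^{\calRO}$ into the region where the protected density dominates (this is $\calU_k$ together with the transition set on which $0\le p_k^{\calD}-p_k^{\calRO}<p_k^{\calRO}$, whose contribution can be handled using $C\ge 0$) and the region $\calV_k=\{p_k^{\calD}<p_k^{\calRO}\}$ where the original density dominates. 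On $\calV_k$ every cost lies in $[C(w^{\calRO}_{\min}),C(w^{\calRO}_{\max})]$ by the definitions of $w^{\calRO}_{\min}$ and $w^{\calRO}_{\max}$, so $C(w)-c_0$ is bounded above there; on the subsets $\calW_{k,\Xi_k}^{-}$ (which equals $\calU_k$ by the choice of $\Xi_k$) and $\calW_{k,\Xi_k}^{+}$ of $\calU_k$, the defining inequalities of those sets force $C(w)$ to differ from the reference costs by at least $\Xi_k$. Picking $c_0$ among the reference costs, bounding $C(w)-c_0$ above on $\calV_k$ and below on the relevant part of $\calU_k$, and using $p_k^{\calD}-p_k^{\calRO}\ge p_k^{\calRO}$ on $\calU_k$, the estimate is designed to collapse to
\begin{align*}
\epsilon_{e,k}\ \ge\ \Xi_k\left(\int_{\calW^{\calD}_{k}} p^{\calRO}_{W_{k}}(w)\,\one\{w\in\calW_{k,\Xi_k}^{+}\}\,dw-\int_{\calW^{\calRO}_{k}} p^{\calD}_{W_{k}}(w)\,\one\{w\in\calW_{k,\Xi_k}^{-}\}\,dw\right).
\end{align*}

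The last step is then immediate: the parenthesized expression is exactly the quantity that the definition of $\Gamma_k$ in \pref{assump: assump_of_Xi_efficiency} bounds below by $\Gamma_k\,\text{TV}(P_k^{\calRO}||P_k^{\calD})$, so $\epsilon_{e,k}\ge\Xi_k\Gamma_k\,\text{TV}(P_k^{\calRO}||P_k^{\calD})$ and the reduction of the first paragraph finishes the proof.

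I expect the splitting step to be the real obstacle. One has to pair each indicator region with the correct reference cost and the correct sign of $p_k^{\calD}-p_k^{\calRO}$ so that the residual is precisely the two integrals appearing in the definition of $\Gamma_k$ (rather than a weaker pair), and keep the discarded transition-region and lower-order terms on the favorable side of the inequality. The supporting facts are routine: $\int_{\calV_k}\big(p_k^{\calRO}-p_k^{\calD}\big)\,dw=\text{TV}(P_k^{\calRO}||P_k^{\calD})$ because $\calV_k=\{p_k^{\calRO}>p_k^{\calD}\}$, and $\Xi_k,\Gamma_k\ge 0$ is what permits the final substitution $\Xi_k\Gamma_k\ge\Xi\Gamma$.
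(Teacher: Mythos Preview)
Your proposal follows essentially the same route as the paper's proof: reduce to the per-client inequality $\epsilon_{e,k}\ge\Xi_k\Gamma_k\,{\text{TV}}(P_k^{\calRO}||P_k^{\calD})$, split the integral $\int C(w)\big(dP_k^{\calD}-dP_k^{\calRO}\big)$ over $\mathcal{U}_k$, $\mathcal{V}_k$, and the cost-level sets $\calW_{k,\Xi_k}^{\pm}$, use the defining cost inequalities on those sets to extract the factor $\Xi_k$, and then invoke the $\Gamma_k$-inequality from \pref{assump: assump_of_Xi_efficiency}; averaging over $k$ and replacing $\Xi_k,\Gamma_k$ by $\Xi,\Gamma$ finishes. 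Your re-centering by $c_0$ is a harmless embellishment (the $c_0$-term integrates to zero, so it is equivalent to working with $C$ directly as the paper does), and your ``transition region'' remark reflects that \pref{assump: assump_of_Xi_efficiency} defines $\mathcal{U}_k$ via $p_k^{\calD}\ge 2p_k^{\calRO}$ while the paper's appendix proof works with $\{p_k^{\calD}\ge p_k^{\calRO}\}$---a discrepancy you have noticed but which does not change the skeleton of the argument.
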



The following Lemma \ref{lem: total_variation-privacy trade-off} illustrates the quantitative relationship between $\epsilon_{p}$ and total variation distance. 
Lemma \ref{lem: total_variation-privacy trade-off} applies to both discrete (see proof in \cite{zhang2022no}) and continuous private information (see  \pref{lem: total_variation-privacy trade-off_app_1}).  

\begin{lem}\label{lem: total_variation-privacy trade-off}
Let $\epsilon_{p,k}$ be defined in \pref{defi: average_privacy_JSD}, $P_k^{\calRO}$ and $P^{\calD}_k$ represent the distribution of the parameter of client $k$ before and after being protected. Then, we have:
\begin{align}\label{eq: total_variation-privacy trade-off}
\epsilon_p \ge C_1 - \frac{1}{K}\sum_{k=1}^K C_2\cdot {\text{TV}}(P_k^{\calRO} || P^{\calD}_k),
\end{align}
where $C_1 = \frac{1}{K}\sum_{k=1}^K \sqrt{{\text{JS}}(F^{\calRO}_k || F^{\calO}_k)}$, and $C_2 = \frac{1}{2}(e^{2\xi}-1)$, $\xi_k$=$\max_{w\in \mathcal{W}_k, d \in \mathcal{D}_k} \left|\log\left(\frac{f_{D_k|W_k}(d|w)}{f_{D_k}(d)}\right)\right|$, and $\xi$=$\max_{k\in [K]} \xi_k$ represents the maximum privacy leakage over all possible information $w$ released by the clients, and $[K] = \{1,2,\cdots, K\}$.
\end{lem}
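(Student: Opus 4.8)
The plan is to pass to a per-client inequality, apply the metric (triangle) inequality for $\sqrt{\text{JS}}$, and then control the extra term by a \emph{first-order} distortion estimate. Since $\epsilon_p=\frac1K\sum_{k=1}^K\epsilon_{p,k}$ and the right side of \pref{eq: total_variation-privacy trade-off} is also a $k$-average, it suffices to show that for every client $k$,
\[
\epsilon_{p,k}=\sqrt{\text{JS}(F^{\calA}_k\|F^{\calO}_k)}\;\ge\;\sqrt{\text{JS}(F^{\calRO}_k\|F^{\calO}_k)}-\tfrac12(e^{2\xi}-1)\,\text{TV}(P_k^{\calRO}\|P_k^{\calD}),
\]
and then average. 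For the first part I would use that $\sqrt{\text{JS}(\cdot\|\cdot)}$ is a metric \cite{endres2003new}: applied to the three beliefs $F^{\calA}_k$, $F^{\calRO}_k$, $F^{\calO}_k$, the triangle inequality gives $\sqrt{\text{JS}(F^{\calRO}_k\|F^{\calO}_k)}\le\sqrt{\text{JS}(F^{\calA}_k\|F^{\calO}_k)}+\sqrt{\text{JS}(F^{\calA}_k\|F^{\calRO}_k)}$, i.e. $\epsilon_{p,k}\ge\sqrt{\text{JS}(F^{\calRO}_k\|F^{\calO}_k)}-\sqrt{\text{JS}(F^{\calA}_k\|F^{\calRO}_k)}$. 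So the whole problem reduces to bounding the ``distortion term'' $\sqrt{\text{JS}(F^{\calA}_k\|F^{\calRO}_k)}$ by $\tfrac12(e^{2\xi}-1)\,\text{TV}(P_k^{\calRO}\|P_k^{\calD})$.

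To do that I would exploit that $f^{\calA}_{D_k}$ and $f^{\calRO}_{D_k}$ are mixtures of the \emph{same} posterior kernel $f_{D_k|W_k}(\cdot\,|w)$ against $P_k^{\calD}$ and $P_k^{\calRO}$ respectively, together with the defining bound $\bigl|\log(f_{D_k|W_k}(d|w)/f_{D_k}(d))\bigr|\le\xi$. Writing $f_{D_k|W_k}(d|w)=f_{D_k}(d)\,(1+\delta(d,w))$ with $\delta(d,w)\in[e^{-\xi}-1,\,e^{\xi}-1]$ and using $\int(p_k^{\calD}-p_k^{\calRO})\,dw=0$, one gets the exact identity $f^{\calA}_{D_k}(d)-f^{\calRO}_{D_k}(d)=f_{D_k}(d)\int\delta(d,w)\bigl(p_k^{\calD}(w)-p_k^{\calRO}(w)\bigr)\,dw$, hence the pointwise gap $|f^{\calA}_{D_k}(d)-f^{\calRO}_{D_k}(d)|\le 2(e^{\xi}-1)\,f_{D_k}(d)\,\text{TV}(P_k^{\calRO}\|P_k^{\calD})$, while also $f^{\calA}_{D_k}(d)+f^{\calRO}_{D_k}(d)\ge 2e^{-\xi}f_{D_k}(d)$. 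Feeding these into the triangular-discrimination bound $\text{JS}(P\|Q)\le\frac12\int\frac{(p-q)^2}{p+q}\,d\mu$ (which follows from $\text{KL}\le\chi^2$ applied to both halves of the Jensen--Shannon definition) gives
\[
\text{JS}(F^{\calA}_k\|F^{\calRO}_k)\;\le\;\tfrac12\int_{\mathcal D_k}\frac{(f^{\calA}_{D_k}-f^{\calRO}_{D_k})^2}{f^{\calA}_{D_k}+f^{\calRO}_{D_k}}\,d\mu\;\le\;e^{\xi}(e^{\xi}-1)^2\,\text{TV}(P_k^{\calRO}\|P_k^{\calD})^2,
\]
using $\int f_{D_k}\,d\mu=1$. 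Taking square roots and using $e^{\xi/2}(e^{\xi}-1)\le\frac12(e^{2\xi}-1)$ (equivalently $(e^{\xi}-1)(e^{\xi/2}-1)^2\ge0$) yields the claimed distortion bound; since $\xi=\max_k\xi_k$, the same constant $C_2=\frac12(e^{2\xi}-1)$ works for all $k$. Substituting into the triangle bound, averaging over $k$, and setting $C_1=\frac1K\sum_k\sqrt{\text{JS}(F^{\calRO}_k\|F^{\calO}_k)}$ gives \pref{eq: total_variation-privacy trade-off}.

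The step I expect to be delicate is making the distortion bound genuinely \emph{linear} in $\text{TV}(P_k^{\calRO}\|P_k^{\calD})$ rather than of order $\sqrt{\text{TV}}$: crude estimates such as $\frac{(p-q)^2}{p+q}\le|p-q|$ only give $\text{JS}\lesssim\text{TV}$ and hence lose a square root. The fix is to retain the cancellation $\int(p_k^{\calD}-p_k^{\calRO})\,dw=0$ (so the pointwise gap is $O(\text{TV})$, not $O(1)$) and to use the denominator lower bound $f^{\calA}_{D_k}+f^{\calRO}_{D_k}\ge 2e^{-\xi}f_{D_k}$ --- this is exactly where finiteness of $\xi$ is needed and where the $e^{2\xi}$ factor originates. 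A minor, routine matter is to rerun the same computation with sums in place of integrals when $D_k$ is discrete (the content of \pref{lem: total_variation-privacy trade-off_app_1}); expressing both cases with respect to a common base measure $\mu$ then gives the unified statement, the continuous case also being already available in \cite{zhang2022no}.
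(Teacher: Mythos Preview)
Your overall architecture matches the paper exactly: reduce to a per-client bound, invoke the metric property of $\sqrt{\text{JS}}$ to get $\epsilon_{p,k}\ge\sqrt{\text{JS}(F^{\calRO}_k\|F^{\calO}_k)}-\sqrt{\text{JS}(F^{\calA}_k\|F^{\calRO}_k)}$, and then establish the distortion bound $\text{JS}(F^{\calA}_k\|F^{\calRO}_k)\le\frac14(e^{2\xi}-1)^2\,\text{TV}(P_k^{\calRO}\|P_k^{\calD})^2$; this is precisely the content of \pref{lem: JSBound_app} in the paper.

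Where you differ is in \emph{how} you prove the distortion bound. The paper works directly with the KL decomposition of $\text{JS}$ and shows pointwise that $f^{\calA}_{D_k}\log\tfrac{f^{\calA}_{D_k}}{f^{\calM}_{D_k}}+f^{\calRO}_{D_k}\log\tfrac{f^{\calRO}_{D_k}}{f^{\calM}_{D_k}}\le|f^{\calA}_{D_k}-f^{\calRO}_{D_k}|\cdot\bigl|\log\tfrac{f^{\calM}_{D_k}}{f^{\calRO}_{D_k}}\bigr|$ (using $\tfrac{f^{\calA}_{D_k}}{f^{\calM}_{D_k}}\le\tfrac{f^{\calM}_{D_k}}{f^{\calRO}_{D_k}}$, i.e.\ the AM--GM inequality for the two densities), then bounds the gap by $\inf_w f_{D_k|W_k}(d|w)\,(e^{2\xi}-1)\,\text{TV}$ and the log-ratio by $\tfrac12(e^{2\xi}-1)\,\text{TV}$ via $|\log(a/b)|\le|a-b|/\min(a,b)$. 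You instead pass through the triangular-discrimination (Le Cam) bound $\text{JS}(P\|Q)\le\frac12\int\frac{(p-q)^2}{p+q}$ and control numerator and denominator separately. Your route is arguably cleaner---it uses a standard $f$-divergence comparison rather than an ad hoc log manipulation---and even yields the slightly sharper intermediate constant $e^{\xi}(e^{\xi}-1)^2\le\frac14(e^{2\xi}-1)^2$ before you relax it to match $C_2$. The paper's route is more self-contained (no auxiliary divergence inequality needed). Both hinge on the same two ingredients you correctly identified as delicate: the cancellation $\int(p_k^{\calD}-p_k^{\calRO})=0$ to make the pointwise gap linear in $\text{TV}$, and a lower bound on the densities (your $f^{\calA}+f^{\calRO}\ge 2e^{-\xi}f_{D_k}$, the paper's $\min\{f^{\calM},f^{\calRO}\}\ge\inf_w f_{D_k|W_k}$) to absorb the denominator.
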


Intuitively, a larger distortion would increase communication costs while decreasing privacy leaks. The Lemma \ref{thm: efficiency-privacy trade-off_JSD_mt} demonstrates how the summation of privacy leakage and efficiency reduction is lower bounded by a constant that depends on the nature of the scenario.

\begin{lem}[No free lunch theorem (NFL) for privacy and efficiency]\label{thm: efficiency-privacy trade-off_JSD_mt}
Let $\epsilon_p$ be defined in Def. \ref{defi: average_privacy_JSD}, and let $\epsilon_e$ be defined in Def. \ref{defi: efficiency_reduction}, with \pref{assump: assump_of_Xi_efficiency} we have:
\begin{align*}
    \epsilon_{p} + C_x\cdot\epsilon_e\ge C_1,
\end{align*}
where $C_1 = \frac{1}{K}\sum_{k=1}^K \sqrt{{\text{JS}}(F^{\calRO}_k || F^{\calO}_k)}$ and $C_x = \frac{1}{2\Xi\Gamma}(e^{2\xi}-1)$.
\end{lem}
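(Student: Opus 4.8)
The plan is to combine the two intermediate lemmas that were just stated, namely \pref{lem: efficiency_reduction_and_tvd} and \pref{lem: total_variation-privacy trade-off}, by eliminating the common quantity $\frac{1}{K}\sum_{k=1}^K {\text{TV}}(P_k^{\calRO} || P_k^{\calD})$. This is essentially a one-line algebraic manipulation once both lemmas are in hand, so the ``main obstacle'' is really just making sure the constants line up; there is no deep step.

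\textbf{Step 1.} Invoke \pref{lem: total_variation-privacy trade-off}: under the stated definitions we have
\begin{align*}
\epsilon_p \ge C_1 - C_2\cdot\frac{1}{K}\sum_{k=1}^K {\text{TV}}(P_k^{\calRO} || P_k^{\calD}),
\end{align*}
with $C_1 = \frac{1}{K}\sum_{k=1}^K \sqrt{{\text{JS}}(F^{\calRO}_k || F^{\calO}_k)}$ and $C_2 = \frac{1}{2}(e^{2\xi}-1)$. Rearranging, $\frac{1}{K}\sum_{k=1}^K {\text{TV}}(P_k^{\calRO} || P_k^{\calD}) \ge \frac{C_1 - \epsilon_p}{C_2}$ (note $C_2 > 0$ since $\xi \ge 0$; if $\xi = 0$ the private information is independent of $W_k$ and the statement is handled separately / trivially, so we may assume $C_2>0$).

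\textbf{Step 2.} Invoke \pref{lem: efficiency_reduction_and_tvd} under \pref{assump: assump_of_Xi_efficiency}: $\epsilon_e \ge \Xi\Gamma\cdot\frac{1}{K}\sum_{k=1}^K {\text{TV}}(P_k^{\calRO} || P_k^{\calD})$, with $\Xi = \min_k \Xi_k > 0$ and $\Gamma = \min_k \Gamma_k > 0$. Hence $\frac{1}{K}\sum_{k=1}^K {\text{TV}}(P_k^{\calRO}||P_k^{\calD}) \le \frac{\epsilon_e}{\Xi\Gamma}$.

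\textbf{Step 3.} Chain the two bounds on the average total-variation term. If $\epsilon_p \ge C_1$ the claimed inequality $\epsilon_p + C_x\epsilon_e \ge C_1$ is immediate since $C_x\epsilon_e \ge 0$ (efficiency reduction and $C_x$ are nonnegative). Otherwise $C_1 - \epsilon_p > 0$ and Steps 1--2 give $\frac{C_1-\epsilon_p}{C_2} \le \frac{\epsilon_e}{\Xi\Gamma}$, i.e. $C_1 - \epsilon_p \le \frac{C_2}{\Xi\Gamma}\epsilon_e = \frac{1}{2\Xi\Gamma}(e^{2\xi}-1)\epsilon_e = C_x \epsilon_e$, which rearranges to $\epsilon_p + C_x\epsilon_e \ge C_1$. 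The only thing to double-check is the bookkeeping of which lemma produces an upper versus a lower bound on the average TV term and that the positivity hypotheses ($\Xi,\Gamma>0$ from the assumption, $e^{2\xi}-1\ge0$) are all in force; the assumption is exactly what guarantees $\Xi\Gamma>0$ so that dividing is legitimate. That completes the proof.
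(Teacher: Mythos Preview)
Your proposal is correct and follows essentially the same approach as the paper: invoke \pref{lem: total_variation-privacy trade-off} and \pref{lem: efficiency_reduction_and_tvd}, then eliminate the common average TV term to obtain $C_1 \le \epsilon_p + C_x\epsilon_e$. The paper does the combination in one line (using $C_1 \le \epsilon_p + C_2 A$ and $A \le \epsilon_e/(\Xi\Gamma)$ directly) rather than splitting into the case $\epsilon_p \ge C_1$, but the argument is the same.
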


\subsection{Trade-off between Privacy and Utility}

The Lemma \ref{lem: total_variation-utility trade-off_mt} demonstrates how the total variation distance between the protected and unprotected distributions lower bounds utility loss.

\begin{lem}\label{lem: total_variation-utility trade-off_mt}
Let \pref{assump: assump_of_Delta} hold, and $\epsilon_{u}$ be defined in \pref{defi: utility_loss}. Let $P_{\text{fed}}^{\calRO}$ and $P_{\text{fed}}^{\calD}$ represent the distribution of the federated model information before and after being protected. Then, we have:
\begin{align*}
    \epsilon_{u} \ge& \frac{\Delta}{2}\cdot {\text{TV}}(P^{\calRO}_{\text{fed}} || P^{\calD}_{\text{fed}} ).
\end{align*}
\end{lem}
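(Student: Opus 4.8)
The plan is to reduce the bound to an elementary super‑level‑set estimate for $\mathbb{E}_{P_{\text{fed}}^{\calD}}[U]$ that isolates the mass the protected model places near the optimum. Write the utility loss as $\epsilon_u=\mathbb{E}_{P_{\text{fed}}^{\calRO}}[U(W_{\text{fed}}^{\calRO})]-\mathbb{E}_{P_{\text{fed}}^{\calD}}[U(W_{\text{fed}}^{\calD})]$ and set $U^{*}=\max_{w\in\calW_{\text{fed}}}U(w)$, the optimal utility attained on $\calW^{*}_{\text{fed}}$. I would record two facts before estimating. (i) The plain‑text federated model is, by construction, the optimal baseline trained without any distortion, so $\mathbb{E}_{P_{\text{fed}}^{\calRO}}[U(W_{\text{fed}}^{\calRO})]=U^{*}$ (equivalently, $P_{\text{fed}}^{\calRO}$ is supported on $\calW^{*}_{\text{fed}}$). (ii) Since every $w^{*}\in\calW^{*}_{\text{fed}}$ satisfies $U(w^{*})=U^{*}$, the near‑optimal set of \pref{assump: assump_of_Delta} is precisely the super‑level set $\calW_{\Delta}=\{w\in\calW^{\calD}_{\text{fed}}:U(w)\ge U^{*}-\Delta\}$; hence $U(w)<U^{*}-\Delta$ for every $w\in\calW^{\calD}_{\text{fed}}\setminus\calW_{\Delta}$.

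Next I would split the protected‑model expectation over $\calW_{\Delta}$ and its complement inside the support $\calW^{\calD}_{\text{fed}}$, bounding $U$ by $U^{*}$ on the former and by $U^{*}-\Delta$ on the latter. Writing $\pi:=\int_{\calW^{\calD}_{\text{fed}}}p^{\calD}_{W_{\text{fed}}}(w)\one\{w\in\calW_{\Delta}\}\,dw$ for the $P_{\text{fed}}^{\calD}$‑mass of $\calW_{\Delta}$, and using that $P_{\text{fed}}^{\calD}$ is supported on $\calW^{\calD}_{\text{fed}}$, this yields
$$\mathbb{E}_{P_{\text{fed}}^{\calD}}[U(W_{\text{fed}}^{\calD})]\ \le\ U^{*}\pi+(U^{*}-\Delta)(1-\pi)\ =\ U^{*}-\Delta(1-\pi),$$
so by fact (i), $\epsilon_u\ge\Delta(1-\pi)$. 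The defining inequality of $\Delta$ in \pref{assump: assump_of_Delta} is exactly $\pi\le\tfrac12{\text{TV}}(P_{\text{fed}}^{\calRO} || P_{\text{fed}}^{\calD})$, whence $\epsilon_u\ge\Delta\bigl(1-\tfrac12{\text{TV}}(P_{\text{fed}}^{\calRO} || P_{\text{fed}}^{\calD})\bigr)$. Since any total‑variation distance is at most $1$ we have $1-\tfrac12{\text{TV}}\ge\tfrac12{\text{TV}}$, and the claimed bound $\epsilon_u\ge\tfrac{\Delta}{2}{\text{TV}}(P_{\text{fed}}^{\calRO} || P_{\text{fed}}^{\calD})$ follows — the ${\text{TV}}$‑linear form being the one that later combines cleanly with the privacy–${\text{TV}}$ bound of \pref{lem: total_variation-privacy trade-off} inside the No‑Free‑Lunch theorem.

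The step I expect to require the most care is fact (i): the inequality is genuinely false if the unprotected model may be far from optimal (one can make ${\text{TV}}(P_{\text{fed}}^{\calRO} || P_{\text{fed}}^{\calD})$ large while $\epsilon_u$ is small by letting $P_{\text{fed}}^{\calRO}$ also sit on poor parameters), so one must justify $\mathbb{E}_{P_{\text{fed}}^{\calRO}}[U]=U^{*}$ within this framework, mirroring the corresponding step of \cite{zhang2022no}. If one prefers not to assume it outright, it suffices to establish the weaker $\mathbb{E}_{P_{\text{fed}}^{\calRO}}[U]\ge U^{*}-\Delta\bigl(1-{\text{TV}}(P_{\text{fed}}^{\calRO} || P_{\text{fed}}^{\calD})\bigr)$, which is all the chain above actually uses. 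A secondary, purely bookkeeping point is to verify carefully that $\calW_{\Delta}$ is indeed the super‑level set $\{U\ge U^{*}-\Delta\}$ restricted to $\calW^{\calD}_{\text{fed}}$, that $\pi\le P_{\text{fed}}^{\calD}(\calW_{\Delta})$ with $P_{\text{fed}}^{\calD}(\calW^{\calD}_{\text{fed}}\setminus\calW_{\Delta})=1-\pi$, and that the estimate $\int_A U\,dP_{\text{fed}}^{\calD}\le(\sup_A U)\,P_{\text{fed}}^{\calD}(A)$ is used only with nonnegative measure $P_{\text{fed}}^{\calD}$, so it is valid even when $U^{*}-\Delta<0$. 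Everything else — the split, the bound $U\le U^{*}$, and ${\text{TV}}\le 1$ — is routine.
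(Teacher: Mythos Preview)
The paper does not supply its own proof of this lemma; both in the main text and in the appendix (where it reappears as \pref{lem: total_variation-utility trade-off_app}) the result is simply attributed to \cite{zhang2022no}. So there is no in-paper argument to compare against, and your proposal must be judged on its own.

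Your argument is sound and is the natural one suggested by the structure of \pref{assump: assump_of_Delta}: split the protected expectation over $\calW_\Delta$ and its complement, use the defining inequality $\pi\le\tfrac12{\text{TV}}$, and then weaken $\Delta(1-\tfrac12{\text{TV}})$ to $\tfrac{\Delta}{2}{\text{TV}}$ via ${\text{TV}}\le 1$. Note that your intermediate bound $\epsilon_u\ge\Delta(1-\tfrac12{\text{TV}})$ is genuinely stronger than the statement --- it even shows $\epsilon_u\ge\Delta/2$ outright --- and this is not a bug: the Remark following \pref{assump: assump_of_Delta} confirms that the assumption fails (no positive $\Delta$ exists) when ${\text{TV}}=0$, so there is no contradiction with $\epsilon_u=0$ in that degenerate case.

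You have correctly isolated the only substantive gap: fact (i), that the unprotected federated model attains $U^{*}$, is nowhere stated as a hypothesis in this paper's framework, yet the lemma is false without it or something like it (take $P_{\text{fed}}^{\calRO}$ and $P_{\text{fed}}^{\calD}$ both far from $\calW^{*}_{\text{fed}}$ but far from each other). Your proposed weaker replacement $\mathbb{E}_{P_{\text{fed}}^{\calRO}}[U]\ge U^{*}-\Delta(1-{\text{TV}})$ is exactly what the chain needs and would be the right thing to state explicitly; the paper's applications all take $P_{\text{fed}}^{\calRO}$ concentrated at a single parameter value (the tiny-$\delta$ uniform relaxations), which is consistent with implicitly treating that value as $w^{*}$. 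Since the paper relies on \cite{zhang2022no} for this step, the honest move is precisely what you suggest: flag fact (i) as an assumption carried over from that reference.
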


With \pref{lem: total_variation-privacy trade-off} and \pref{lem: total_variation-utility trade-off_mt}, it is now natural to provide a quantitative relationship between the utility loss and the privacy leakage.

\begin{lem}[No free lunch theorem (NFL) for privacy and utility]\label{lem: utility-privacy trade-off_JSD_mt} 
Let $\epsilon_p$ be defined in Def. \ref{defi: average_privacy_JSD}, and let $\epsilon_u$ be defined in Def. \ref{defi: utility_loss} at the convergence step, with \pref{assump: assump_of_Delta} we have:
\begin{align}\label{eq: total_variation-privacy trade-off_app_2}
 \epsilon_{p} + C_d\cdot \epsilon_{u}\ge C_1,
\end{align}
where $\xi_k$=$\max_{w\in \mathcal{W}_k, d \in \mathcal{D}_k} \left|\log\left(\frac{f_{D_k|W_k}(d|w)}{f_{D_k}(d)}\right)\right|$, and $\xi$=$\max_{k\in [K]} \xi_k$ represents the maximum privacy leakage over all possible information $w$ released by client $k$, and $[K] = \{1,2,\cdots, K\}$, $C_1 = \frac{1}{K}\sum_{k=1}^K \sqrt{{\text{JS}}(F^{\calRO}_k || F^{\calO}_k)}$, $C_d = \frac{\gamma}{4\Delta}(e^{2\xi}-1)$, and $\gamma = \frac{\frac{1}{K}\sum_{k=1}^K {\text{TV}}(P_k^{\calRO} || P^{\calD}_k)}{{\text{TV}}(P^{\calRO}_{\text{fed}} || P^{\calD}_{\text{fed}} )}$.
\end{lem}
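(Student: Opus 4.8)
The plan is to combine the two total-variation lemmas already in hand---\pref{lem: total_variation-privacy trade-off} bounding $\epsilon_p$ from below in terms of $\frac{1}{K}\sum_k {\text{TV}}(P_k^{\calRO}\|P_k^{\calD})$, and \pref{lem: total_variation-utility trade-off_mt} bounding $\epsilon_u$ from below in terms of ${\text{TV}}(P^{\calRO}_{\text{fed}}\|P^{\calD}_{\text{fed}})$---by eliminating the total-variation quantities between them. The ratio $\gamma$ is precisely the conversion factor that relates the per-client average distortion to the federated-model distortion, so it is the natural bridge.

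First I would invoke \pref{lem: total_variation-utility trade-off_mt}: under \pref{assump: assump_of_Delta}, $\epsilon_u \ge \frac{\Delta}{2}\,{\text{TV}}(P^{\calRO}_{\text{fed}}\|P^{\calD}_{\text{fed}})$. Using the definition $\gamma = \bigl(\frac1K\sum_k{\text{TV}}(P_k^{\calRO}\|P_k^{\calD})\bigr)\big/{\text{TV}}(P^{\calRO}_{\text{fed}}\|P^{\calD}_{\text{fed}})$, rewrite ${\text{TV}}(P^{\calRO}_{\text{fed}}\|P^{\calD}_{\text{fed}}) = \frac{1}{\gamma}\cdot\frac1K\sum_k{\text{TV}}(P_k^{\calRO}\|P_k^{\calD})$, which gives
\begin{align*}
\frac{1}{K}\sum_{k=1}^K {\text{TV}}(P_k^{\calRO}\|P_k^{\calD}) \le \frac{2\gamma}{\Delta}\,\epsilon_u.
\end{align*}
Next I would substitute this into \pref{lem: total_variation-privacy trade-off}, namely $\epsilon_p \ge C_1 - C_2\cdot\frac1K\sum_k{\text{TV}}(P_k^{\calRO}\|P_k^{\calD})$ with $C_2 = \frac12(e^{2\xi}-1)$. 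This yields $\epsilon_p \ge C_1 - \frac12(e^{2\xi}-1)\cdot\frac{2\gamma}{\Delta}\,\epsilon_u = C_1 - \frac{\gamma}{\Delta}(e^{2\xi}-1)\,\epsilon_u$. Rearranging gives $\epsilon_p + \frac{\gamma}{\Delta}(e^{2\xi}-1)\,\epsilon_u \ge C_1$, which is the claimed inequality except that the constant $C_d$ in the statement is $\frac{\gamma}{4\Delta}(e^{2\xi}-1)$ rather than $\frac{\gamma}{\Delta}(e^{2\xi}-1)$.

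The main obstacle---really a bookkeeping point---is reconciling this factor-of-$4$ discrepancy; I expect it comes from a tighter version of one of the intermediate steps (for instance, whether the bound in \pref{lem: total_variation-privacy trade-off} is stated with $C_2$ or $C_2/2$, or whether \pref{lem: total_variation-utility trade-off_mt} is applied with an extra factor of $2$ absorbed from the definition of $\Delta$ via \pref{eq: Delta_Area}, where the near-optimal mass is bounded by ${\text{TV}}/2$). One should trace through the constants in the referenced proofs of \pref{lem: total_variation-privacy trade-off} and \pref{lem: total_variation-utility trade-off_mt} and pick the consistent normalization; the structure of the argument---convert utility loss to an upper bound on average TV, then feed that into the privacy lower bound---is unchanged regardless. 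A secondary point worth a sentence is well-definedness: $\gamma$ requires ${\text{TV}}(P^{\calRO}_{\text{fed}}\|P^{\calD}_{\text{fed}}) > 0$, which is guaranteed since \pref{assump: assump_of_Delta} presupposes $\Delta > 0$ and hence (by the Remark following that assumption) a nonzero federated distortion.
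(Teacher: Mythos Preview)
Your approach is exactly the paper's: combine \pref{lem: total_variation-privacy trade-off} and \pref{lem: total_variation-utility trade-off_mt}, using the definition of $\gamma$ to convert federated-model total variation into the per-client average. The paper itself does not spell out the algebra for this lemma---it simply states that the result follows from those two lemmas (and defers to \cite{zhang2022no})---so your derivation is more detailed than what the paper provides.

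Regarding the factor-of-$4$: your arithmetic is correct. Tracing the stated constants through the two lemmas exactly as written yields $C_d = \frac{\gamma}{\Delta}(e^{2\xi}-1)$, not $\frac{\gamma}{4\Delta}(e^{2\xi}-1)$. (The analogous efficiency trade-off, \pref{thm: efficiency-privacy trade-off_JSD_mt}, is proved in full in the appendix and its constant $C_x = \frac{1}{2\Xi\Gamma}(e^{2\xi}-1)$ checks out by the same chain; the utility version should give $\frac{\gamma}{\Delta}(e^{2\xi}-1)$ by the identical reasoning.) The discrepancy appears to be an inconsistency in the paper's statement rather than a missing step in your argument; note also that the paper's own main theorem (\pref{thm: utility-privacy-efficiency Trade-off_JSD_mt}) defines $\gamma$ \emph{without} the $1/K$ in the numerator, so there is already some constant-level inconsistency across the paper. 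Your remark about well-definedness of $\gamma$ via \pref{assump: assump_of_Delta} is apt and matches the paper's own Remark.
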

\textbf{Remark:}
If the distribution of the protected model information is continuous, and ${\text{TV}}(P^{\calRO}_{\text{fed}} || P^{\calD}_{\text{fed}} ) = 0$, then \pref{assump: assump_of_Delta} does not hold, and the utility loss is equal to $0$ (from Lemma C.3 of \cite{zhang2022no}).

\subsection{Trade-off between Privacy, Utility and Efficiency}\label{subsect:PU-tradeoff}

The Theorem \ref{thm: utility-privacy-efficiency Trade-off_JSD_mt} illustrates the quantitative trade-off between privacy leakage, utility loss, and efficiency reduction, which shows that the weighted summation of these three metrics is larger than a problem-dependent constant (also exhibited in \pref{fig: pue_tradeoff}). It implies that simultaneously achieving infinitesimal privacy leakage, utility loss, and efficiency reduction is unrealistic in certain scenarios. The complete analysis is deferred to Appendix \ref{sec: trade_off_privacy_utility_efficiency}.

\begin{thm}[No free lunch theorem (NFL) for privacy, utility and efficiency]\label{thm: utility-privacy-efficiency Trade-off_JSD_mt} 
Let $\epsilon_e$ be defined in Def. \ref{defi: efficiency_reduction}, $\epsilon_p$ be defined in Def. \ref{defi: average_privacy_JSD}, and let $\epsilon_u$ be defined in Def. \ref{defi: utility_loss} at the convergence step. Let \pref{assump: assump_of_Delta} and \pref{assump: assump_of_Xi_efficiency} hold, then we have that:
\begin{align}\label{eq: total_variation-privacy trade-off_mt}
\epsilon_{p} + \frac{C_d}{2}\cdot \epsilon_{u} + \frac{C_x}{2}\cdot \epsilon_{e}\ge C_1,
\end{align}
where $\xi_k = \max_{w\in \mathcal{W}_k, d \in \mathcal{D}_k} \left|\log\left(\frac{f_{D_k|W_k}(d|w)}{f_{D_k}(d)}\right)\right|$, $\xi = \max_{k\in [K]} \xi_k$, $C_1 = \frac{1}{K}\sum_{k=1}^K \sqrt{{\text{JS}}(F^{\calRO}_k || F^{\calO}_k)}$, $C_d = \frac{\gamma}{4\Delta}(e^{2\xi}-1)$, $C_x = \frac{1}{2\Xi\Gamma}(e^{2\xi}-1)$, $\gamma = \frac{\sum_{k=1}^K {\text{TV}}(P_k^{\calRO} || P^{\calD}_k)}{{\text{TV}}(P^{\calRO}_{\text{fed}} || P^{\calD}_{\text{fed}} )}$.

\end{thm}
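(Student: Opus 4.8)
The theorem is an immediate corollary of the two already-established No-Free-Lunch bounds for privacy--efficiency (\pref{thm: efficiency-privacy trade-off_JSD_mt}) and privacy--utility (\pref{lem: utility-privacy trade-off_JSD_mt}), combined by simple averaging. The plan is as follows. First, invoke \pref{lem: utility-privacy trade-off_JSD_mt}: under \pref{assump: assump_of_Delta}, at the convergence step we have $\epsilon_p + C_d\cdot\epsilon_u \ge C_1$, with $C_d = \frac{\gamma}{4\Delta}(e^{2\xi}-1)$ and $\gamma$ the ratio of the averaged client total-variation distortions to the federated one. Second, invoke \pref{thm: efficiency-privacy trade-off_JSD_mt}: under \pref{assump: assump_of_Xi_efficiency} we have $\epsilon_p + C_x\cdot\epsilon_e \ge C_1$, with $C_x = \frac{1}{2\Xi\Gamma}(e^{2\xi}-1)$. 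Both hypotheses are in force by assumption, so both inequalities hold simultaneously.

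The combining step is to add the two inequalities and divide by two:
\begin{align*}
\frac{1}{2}\left(\epsilon_p + C_d\cdot\epsilon_u\right) + \frac{1}{2}\left(\epsilon_p + C_x\cdot\epsilon_e\right) \ge \frac{1}{2}C_1 + \frac{1}{2}C_1 = C_1,
\end{align*}
which rearranges exactly to $\epsilon_p + \frac{C_d}{2}\cdot\epsilon_u + \frac{C_x}{2}\cdot\epsilon_e \ge C_1$, the claimed bound. One should double-check the definition of $\gamma$ used here against the one in \pref{lem: utility-privacy trade-off_JSD_mt}: the theorem statement writes $\gamma = \frac{\sum_{k=1}^K {\text{TV}}(P_k^{\calRO} || P^{\calD}_k)}{{\text{TV}}(P^{\calRO}_{\text{fed}} || P^{\calD}_{\text{fed}} )}$ without the $\frac1K$ factor that appears in the lemma, so I would reconcile this (either it is absorbed into $\Delta$ via the chain $\epsilon_u \ge \frac{\Delta}{2}{\text{TV}}(P^{\calRO}_{\text{fed}}\|P^{\calD}_{\text{fed}})$ of \pref{lem: total_variation-utility trade-off_mt}, or it is a harmless normalization discrepancy) and state the constants consistently.

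For completeness it is worth unwinding why each of the two input bounds holds, since the theorem's proof is really the concatenation of these chains. For the utility side: \pref{lem: total_variation-utility trade-off_mt} gives $\epsilon_u \ge \frac{\Delta}{2}{\text{TV}}(P^{\calRO}_{\text{fed}}\|P^{\calD}_{\text{fed}})$, and \pref{lem: total_variation-privacy trade-off} gives $\epsilon_p \ge C_1 - \frac{1}{K}\sum_k C_2\,{\text{TV}}(P_k^{\calRO}\|P^{\calD}_k)$ with $C_2 = \frac12(e^{2\xi}-1)$; substituting the TV-distortion bound and using the definition of $\gamma$ eliminates the distortion terms and produces $\epsilon_p + C_d\epsilon_u \ge C_1$. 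For the efficiency side: \pref{lem: efficiency_reduction_and_tvd} gives $\epsilon_e \ge \Xi\Gamma\cdot\frac1K\sum_k {\text{TV}}(P_k^{\calRO}\|P_k^{\calD})$, and combining with the same \pref{lem: total_variation-privacy trade-off} yields $\epsilon_p + C_x\epsilon_e \ge C_1$.

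The proof contains no genuine obstacle — it is a two-line convex combination of prior results — so the only real work is bookkeeping: making sure the constants $C_1, C_d, C_x, \gamma, \Delta, \Xi, \Gamma, \xi$ are defined identically to their appearances in the supporting lemmas, and confirming that \pref{assump: assump_of_Delta} and \pref{assump: assump_of_Xi_efficiency} are exactly the hypotheses needed (so that both $\Delta>0$ and $\Xi,\Gamma>0$, keeping $C_d$ and $C_x$ finite and the bound non-vacuous). I would also remark, as the paper does informally, that $C_1 = \frac1K\sum_k\sqrt{{\text{JS}}(F^{\calRO}_k\|F^{\calO}_k)} > 0$ whenever the unprotected model information is genuinely informative about $D$, which is what makes the right-hand side a strictly positive, problem-dependent constant and gives the theorem its No-Free-Lunch teeth.
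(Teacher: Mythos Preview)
Your proposal is correct and matches the paper's own proof exactly: the paper simply states that the theorem follows from \pref{thm: efficiency-privacy trade-off_JSD_mt} and \pref{lem: utility-privacy trade-off_JSD_mt}, which is precisely the averaging of the two inequalities you wrote out. Your observation about the $\frac{1}{K}$ discrepancy in the definition of $\gamma$ between the theorem statement and \pref{lem: utility-privacy trade-off_JSD_mt} is also apt; it appears to be a minor inconsistency in the paper rather than something resolved in the proof.
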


\begin{figure}[!htp]
\centering
\includegraphics[width = 0.65\columnwidth]{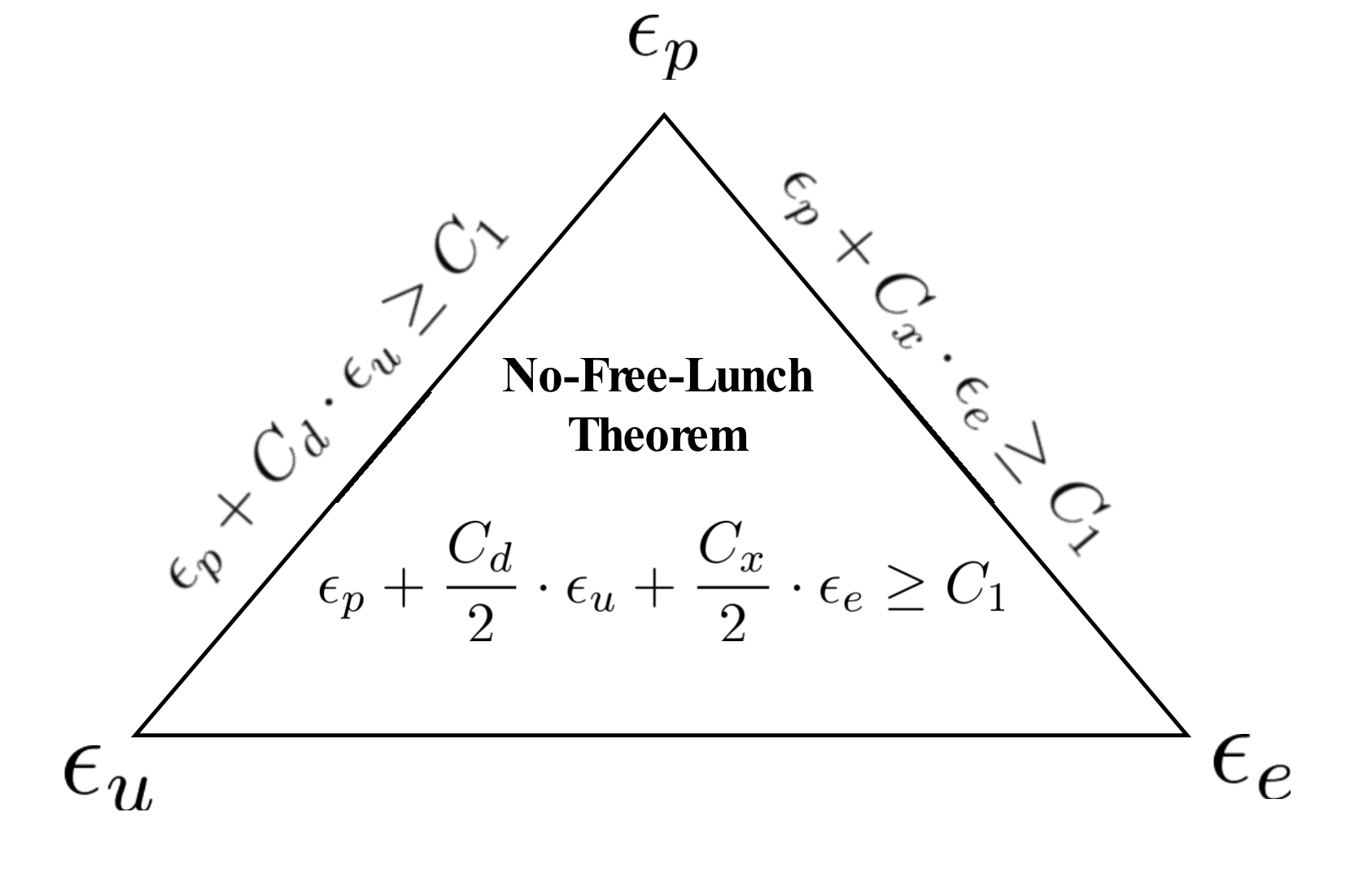}
\vspace{-1em}
\caption{Privacy-Utility-Efficiency Trade-offs}
\label{fig: pue_tradeoff}
\end{figure}

Note that $C_1$ measures the discrepancy between the prior belief and the posterior belief of the attacker and is typically a positive constant in real-world applications. \pref{thm: utility-privacy-efficiency Trade-off_JSD_mt} states that the weighted summation of the privacy leakage, utility loss, and efficiency reduction is greater than or equal to $C_1$ in certain scenarios, indicating that a protection mechanism cannot be expected to achieve exceptional privacy, utility, and efficiency simultaneously in scenarios where \pref{assump: assump_of_Delta} and \pref{assump: assump_of_Xi_efficiency} hold.

\section{Applications}\label{sec: applications}

In this section, we apply the No-Free-Lunch theorem to four well-adopted protection mechanisms for quantifying their trade-offs between privacy leakage, utility loss, and efficiency reduction.

We measure efficiency in terms of the communication cost and denote $\delta>0$ as a small constant. $C_1 = \frac{1}{K}\sum_{k=1}^K \sqrt{{\text{JS}}(F^{\calRO}_k || F^{\calO}_k)}$ and $C_2 =(e^{2\xi}-1)/2$ are two constants independent of the protection mechanisms adopted, and $m$ is the dimension of the parameter.

\begin{table*}[htbp]
\vspace{-1em}
\caption{Comparison of homomorphic encryption, secret sharing, randomization, and compression in terms of privacy-utility-efficiency trade-off characterized via lower bounds for $\epsilon_p$, $\epsilon_u$, and $\epsilon_e$. LB denotes lower bound.}
\label{tab:tradeoff-comp}
\scriptsize
\begin{tabular}{c||c|c|c|c}
\hline
& \begin{tabular}[c]{@{}c@{}}Paillier HE\end{tabular} 
&
\begin{tabular}[c]{@{}c@{}} Secret Sharing\end{tabular}
&
\begin{tabular}[c]{@{}c@{}} Randomization\end{tabular}
&
\begin{tabular}[c]{@{}c@{}} Compression\end{tabular}
\\ \hline
\begin{tabular}[c]{@{}c@{}} $\epsilon_p (\text{LB})$ \end{tabular}
 &\begin{tabular}[c]{@{}c@{}}  
\scriptsize{ $C_1 - C_2\cdot\left[ 1 - \left(\frac{2\delta}{n^2}\right)^m\right]$ }
 \end{tabular}
 & 
 \begin{small}
 \begin{tabular}[c]{@{}c@{}} \scriptsize{$C_1 - \frac{C_2}{K}\sum\limits_{k=1}^K \left( 1 - \prod\limits_{j = 1}^{m}\frac{2\delta}{b_k^j + r_k^j}\right)$} \end{tabular}
 \end{small}
 &\begin{tabular}[c]{@{}c@{}} \scriptsize{$1 - \frac{3 C_2}{2}\cdot\min\left\{1,\sigma_{\epsilon}^2\sqrt{\sum\limits_{i=1}^{m}\frac{1}{\sigma_{i}^4}}\right\}$} \end{tabular} 
 &\begin{tabular}[c]{@{}c@{}} \scriptsize{$C_1 - C_2\cdot \left(1 - \prod\limits_{i = 1}^{m}\rho_i\right)$} \end{tabular}
 \\ \hline
\begin{tabular}[c]{@{}c@{}} $\epsilon_u (\text{LB})$ \end{tabular}
& \begin{tabular}[c]{@{}c@{}}  \fillcell{$0$} \\ \fillcell{(\pref{assump: assump_of_Delta}} \\
\fillcell{does not hold)}
\end{tabular}
&\begin{tabular}[c]{@{}c@{}} \fillcell{$0$} \\ \fillcell{(\pref{assump: assump_of_Delta}}\\
\fillcell{does not hold)}
\end{tabular}
&\begin{tabular}[c]{@{}c@{}} $\frac{\Delta}{100}\cdot\min\left\{1,\sigma_{\epsilon}^2\sqrt{\sum\limits_{i=1}^{m}\frac{1}{\sigma_{i}^4}}\right\}$ \end{tabular}
&\begin{tabular}[c]{@{}c@{}}  $\frac{\Delta}{2}\cdot \left(1 - \prod\limits_{i = 1}^{m}\rho_i\right)$ \end{tabular}
\\ \hline
\begin{tabular}[c]{@{}c@{}} $\epsilon_e (\text{LB})$ \end{tabular}
& \begin{tabular}[c]{@{}c@{}} $\Xi\cdot\Gamma\cdot \left[ 1 - \left(\frac{2\delta}{n^2}\right)^m\right]$ \end{tabular} 
& \begin{tabular}[c]{@{}c@{}} 
\fillcell{N/A} \\
\fillcell{(\pref{assump: assump_of_Xi_efficiency}}\\
\fillcell{does not hold)}
\end{tabular} 
& \begin{tabular}[c]{@{}c@{}} 
$\frac{\Xi\cdot\Gamma}{100}\min\left\{1, \sigma_\epsilon^2\sqrt{\sum\limits_{i=1}^{m}\frac{1}{\sigma_i^4}} \right\}$
\end{tabular}
& \begin{tabular}[c]{@{}c@{}}
$\Xi\cdot\Gamma\cdot \left(1 - \prod\limits_{i = 1}^{m}\rho_i\right)$
\end{tabular}
\\ \hline
\end{tabular}
\end{table*}

\subsection{Randomization Mechanism}
Let $W_k^{\calRO}$ be the parameter sampled from distribution $P_k^{\calRO} = \calN(\mu_0,\Sigma_0)$, where $\mu_0 \in \mathbb{R}^n$, $\Sigma_0 = \text{diag}(\sigma_{1}^2,\cdots, \sigma_{m}^2)$ is a diagonal matrix. The distorted parameter $W_k^{\calD} = W_k^{\calRO} + \epsilon_k$, where $\epsilon_k \sim \calN(0, \Sigma_\epsilon)$ and $\Sigma_\epsilon = \text{diag}(\sigma_\epsilon^2, \cdots, \sigma_\epsilon^2)$. The bounds for privacy leakage, utility loss and efficiency reduction for randomization mechanism is illustrated in the following theorem. The full proof is deferred to Appendix \ref{sec: analysis_for_Randomization}.

\begin{thm}\label{thm: randomization_thm_1}
For randomization mechanism, the privacy leakage is bounded by
\begin{align*}
    \epsilon_p\ge C_1 - \frac{C_2}{100}\cdot\min\left\{1, \sigma_\epsilon^2\sqrt{\sum_{i=1}^{m}\frac{1}{\sigma_i^4}} \right\}. 
\end{align*}
The utility loss is bounded by
\begin{align*}
    \epsilon_u\ge\frac{\Delta}{200}\cdot\min\left\{1, \sigma_\epsilon^2\sqrt{\sum_{i=1}^{m}\frac{1}{\sigma_i^4}} \right\}. 
\end{align*}
The efficiency reduction is bounded by
\begin{align*}
    \epsilon_e\ge\frac{\Xi\Gamma}{100}\min\left\{1, \sigma_\epsilon^2\sqrt{\sum_{i=1}^{m}\frac{1}{\sigma_i^4}} \right\}.
\end{align*}
\end{thm}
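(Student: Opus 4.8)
\textbf{Proof proposal for \pref{thm: randomization_thm_1}.}

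The plan is to instantiate the three general lemmas (\pref{lem: total_variation-privacy trade-off}, \pref{lem: total_variation-utility trade-off_mt}, and \pref{lem: efficiency_reduction_and_tvd}) for the Gaussian randomization mechanism, and the whole argument reduces to one computation: an upper bound on the total variation distance ${\text{TV}}(P_k^{\calRO} \| P_k^{\calD})$ between $\calN(\mu_0,\Sigma_0)$ and $\calN(\mu_0,\Sigma_0+\Sigma_\epsilon)$. Since both covariances are diagonal and the means coincide, this TV distance factors coordinate-wise: by Pinsker's inequality (or a direct computation with the Gaussian KL formula) one gets, for each coordinate $i$, a bound of order $\sigma_\epsilon^2/\sigma_i^2$, and combining coordinates via the standard tensorization bound ${\text{TV}}(\prod P_i \| \prod Q_i) \le \sqrt{\sum_i {\text{TV}}(P_i\|Q_i)^2}$ (again through KL/Pinsker) yields ${\text{TV}}(P_k^{\calRO}\|P_k^{\calD}) \lesssim \sigma_\epsilon^2\sqrt{\sum_{i=1}^m 1/\sigma_i^4}$. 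Of course TV is also trivially at most $1$, so the clean statement is ${\text{TV}}(P_k^{\calRO}\|P_k^{\calD}) \le c\cdot\min\{1,\,\sigma_\epsilon^2\sqrt{\sum_i 1/\sigma_i^4}\}$ for an explicit constant $c$; chasing the constants through Pinsker and the square-root tensorization is what produces the $\tfrac{1}{100}$, $\tfrac{1}{200}$ factors in the theorem (with room to spare).

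First I would record the single-coordinate estimate: for $\calN(0,\sigma^2)$ versus $\calN(0,\sigma^2+\sigma_\epsilon^2)$, $\mathrm{KL} = \tfrac12(\log(1+\sigma_\epsilon^2/\sigma^2) - \sigma_\epsilon^2/(\sigma^2+\sigma_\epsilon^2)) \le \tfrac{1}{4}(\sigma_\epsilon^2/\sigma^2)^2$ (using $\log(1+x)-x/(1+x)\le x^2/2$), hence by Pinsker ${\text{TV}} \le \tfrac{1}{2\sqrt2}\,\sigma_\epsilon^2/\sigma^2$. Second, tensorize: $\mathrm{KL}(P_k^{\calRO}\|P_k^{\calD}) = \sum_i \mathrm{KL}_i \le \tfrac14\sigma_\epsilon^4\sum_i 1/\sigma_i^4$, so Pinsker on the joint law gives ${\text{TV}}(P_k^{\calRO}\|P_k^{\calD}) \le \tfrac{1}{2\sqrt2}\,\sigma_\epsilon^2\sqrt{\sum_i 1/\sigma_i^4}$; intersecting with the trivial bound $1$ gives the $\min$. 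Third, plug this into the three lemmas. For privacy: \pref{lem: total_variation-privacy trade-off} gives $\epsilon_p \ge C_1 - C_2\cdot{\text{TV}}(P_k^{\calRO}\|P_k^{\calD})$ (all clients identical here, so the average is just one term), and substituting the TV bound yields $\epsilon_p \ge C_1 - \tfrac{C_2}{2\sqrt2}\min\{1,\sigma_\epsilon^2\sqrt{\sum_i 1/\sigma_i^4}\}$, which is stronger than the stated $\epsilon_p \ge C_1 - \tfrac{C_2}{100}\min\{\cdots\}$ — or, if the intended reading is a \emph{lower} bound on the distortion needed, one would instead want the reverse inequality ${\text{TV}} \ge c'\min\{1,\sigma_\epsilon^2\sqrt{\sum 1/\sigma_i^4}\}$, which I discuss below. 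For utility: \pref{lem: total_variation-utility trade-off_mt} gives $\epsilon_u \ge \tfrac{\Delta}{2}{\text{TV}}(P^{\calRO}_{\text{fed}}\|P^{\calD}_{\text{fed}})$; for efficiency: \pref{lem: efficiency_reduction_and_tvd} gives $\epsilon_e \ge \Xi\Gamma\cdot\tfrac1K\sum_k{\text{TV}}(P_k^{\calRO}\|P_k^{\calD})$.

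The genuine obstacle — and the step I would treat most carefully — is that \pref{lem: total_variation-utility trade-off_mt} and \pref{lem: efficiency_reduction_and_tvd} require \emph{lower} bounds on TV to produce nonzero $\epsilon_u,\epsilon_e$, whereas the computation above is an \emph{upper} bound; matching the claimed forms $\tfrac{\Delta}{200}\min\{1,\sigma_\epsilon^2\sqrt{\sum 1/\sigma_i^4}\}$ etc. therefore requires a two-sided estimate ${\text{TV}}(P_k^{\calRO}\|P_k^{\calD}) \asymp \min\{1,\sigma_\epsilon^2\sqrt{\sum_i 1/\sigma_i^4}\}$. The lower bound is the more delicate half: in the small-noise regime one can lower-bound TV by the TV along a single coordinate, or use the reverse Pinsker-type bound ${\text{TV}} \ge \tfrac12(1 - e^{-\mathrm{KL}})$ together with a matching lower bound on $\mathrm{KL}$ (the coordinate-wise KL is $\Theta((\sigma_\epsilon^2/\sigma_i^2)^2)$ for $\sigma_\epsilon \lesssim \sigma_i$), while in the large-noise regime TV is bounded below by a universal constant. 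One must also handle the transfer from the per-client distributions $P_k^{\calD}$ to the federated distribution $P^{\calD}_{\text{fed}}$: this is exactly what the factor $\gamma = \tfrac1K\sum_k {\text{TV}}(P_k^{\calRO}\|P_k^{\calD}) / {\text{TV}}(P^{\calRO}_{\text{fed}}\|P^{\calD}_{\text{fed}})$ in \pref{lem: utility-privacy trade-off_JSD_mt} absorbs, and for the randomization mechanism (sum/average of independent Gaussians) one can compute both sides explicitly since aggregation of Gaussians is again Gaussian, so $\gamma$ is an explicit $\Theta(1)$ quantity. Carefully tracking all the absolute constants through Pinsker, the tensorization inequality, the two-sided KL estimate, and the aggregation step is where the bulk of the work lies; the constants $100,200$ are deliberately loose so that any reasonable bookkeeping suffices.
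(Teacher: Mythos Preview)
Your overall strategy---compute a two-sided TV estimate for the Gaussian pair and plug into \pref{lem: total_variation-privacy trade-off}, \pref{lem: total_variation-utility trade-off_mt}, \pref{lem: efficiency_reduction_and_tvd}---is exactly the paper's approach. The paper, however, does not derive the TV bound: it simply imports the two-sided estimate
\[
\tfrac{1}{100}\min\Big\{1,\ \sigma_\epsilon^2\sqrt{\textstyle\sum_i 1/\sigma_i^4}\Big\}
\ \le\ {\text{TV}}(P_k^{\calRO}\|P_k^{\calD})\ \le\
\tfrac{3}{2}\min\Big\{1,\ \sigma_\epsilon^2\sqrt{\textstyle\sum_i 1/\sigma_i^4}\Big\}
\]
as a black box (``Lemma~C.2 of \cite{zhang2022no}'') and substitutes. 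Your Pinsker computation correctly reproduces (a sharper version of) the upper side of this estimate; your discussion of how to get the lower side is on the right track and is indeed the nontrivial half. For the federated quantity the paper does not go through $\gamma$ at all: since $P_{\text{fed}}^{\calRO}=\calN(\mu_0,\Sigma_0/K)$ and $P_{\text{fed}}^{\calD}=\calN(\mu_0,(\Sigma_0+\Sigma_\epsilon)/K)$, the ratio $\sigma_\epsilon^2/\sigma_i^2$ is unchanged and the \emph{same} cited lemma applies verbatim to give the lower bound used in \pref{lem: total_variation-utility trade-off_mt}.

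There is one genuine slip in your write-up. You assert that $\epsilon_p \ge C_1 - \tfrac{C_2}{2\sqrt2}\min\{\cdots\}$ is \emph{stronger} than $\epsilon_p \ge C_1 - \tfrac{C_2}{100}\min\{\cdots\}$; this is backwards, since $\tfrac{1}{2\sqrt2}\approx 0.35 > \tfrac{1}{100}$ means you are subtracting \emph{more}, giving a \emph{weaker} lower bound. In fact, the privacy inequality as stated in the theorem (with the $\tfrac{C_2}{100}$ coefficient) cannot be obtained from an upper bound on TV at all---it would require ${\text{TV}}\le\tfrac{1}{100}\min\{\cdots\}$, whereas the cited lemma only gives ${\text{TV}}\le\tfrac{3}{2}\min\{\cdots\}$. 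And indeed the paper's own appendix proof (\pref{lem: epsilon_p_randomization_mechanism}) actually establishes $\epsilon_p \ge C_1 - \tfrac{3C_2}{2}\min\{\cdots\}$, matching Table~\ref{tab:tradeoff-comp}, not the $\tfrac{C_2}{100}$ appearing in the theorem statement. So the confusion you flagged (``or, if the intended reading is\ldots'') is real: the privacy bound in the theorem statement is inconsistent with what the paper itself proves, and your Pinsker bound $\tfrac{1}{2\sqrt2}$ correctly sits between $\tfrac{1}{100}$ and $\tfrac{3}{2}$.
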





\subsection{Paillier Homomorphic Encryption}

The \textbf{Paillier} encryption mechanism proposed by \cite{paillier1999public} is an asymmetric additive homomorphic encryption mechanism, which was widely applied in FL \cite{zhang2019pefl, aono2017privacy, truex2019hybrid, cheng2021secureboost, fang2021privacy}. Paillier encryption contains three parts: key generation, encryption, and decryption. Let $h$ represent the plaintext, and $c$ represent the ciphertext. Let ($n,g$) represent the public key, and ($\lambda, \mu$) represent the private key. Note that the primes $p$ and $q$ are rather large.

\paragraph{Encryption} Randomly select $r$ and encode $h$ as:
\begin{align*}
    c = g^h\cdot r^n \text{ mod } n^2,
\end{align*}
where $n = p \cdot q$, $p$ and $q$ are two selected primes. Note that $g$ is an integer selected randomly, and $g\in \mathbb Z_{n^2}^*$. Therefore, $n$ can divide the order of $g$.\\ 

\paragraph{Decryption} Using the private key $(\lambda, \mu)$ to decrypt the ciphertext $c$ as:
\begin{align*}
    h = L(c^{\lambda}\text{ mod }n^2)\cdot \mu \text{ mod } n,
\end{align*}
where $L(x) = \frac{x-1}{n}$, $\mu = (L(g^{\lambda}\text{ mod }n^2))^{-1}\text{ mod }n$, $\lambda = lcm (p-1, q-1)$, and $lcm$ represents the least common multiple.

Let $W_k^{\calRO} = (a_k^1, \cdots, a_k^{m})$ represent the plaintext parameter that follows a degenerate distribution. Let $W_k^{\calD}$ represent the ciphertext parameter that follows a uniform distribution over $[0,n^2-1]^{m}$. To facilitate the calculation of the total variation distance between the unprotected distribution of $W_k^{\calRO}$ and the protected distribution of $W_k^{\calD}$, we relax the discrete unprotected distribution to a continuous distribution using a parameter $\delta > 0$. Assuming that $W_k^{\calRO}$ follows a uniform distribution over $[a_k^1 - \delta, a_k^1 + \delta]\times [a_k^2 - \delta, a_k^2 + \delta]\times\cdots\times [a_k^{m} - \delta, a_k^{m} + \delta]$, we can calculate the total variation distance between the distribution of $W_k^{\calRO}$ and the distribution of $W_k^{\calD}$. The following \pref{thm: epsilon_p_and_one_over_n} provides bounds for privacy leakage and efficiency reduction using the magnitude of the ciphertext $n$. The bound for privacy leakage decreases with $n$, which could guide the selection of $n$ to adapt to privacy and efficiency requirements. The full proof is deferred to Appendix \ref{sec: analysis_for_Paillier}.

\begin{thm}\label{thm: epsilon_p_and_one_over_n}
For Paillier algorithm, the utility loss is $0$, the privacy leakage is bounded by:
\begin{align*}
    \epsilon_p \ge C_1 - C_2\cdot\left[ 1 - \left(\frac{2\delta}{n^2}\right)^m\right].
\end{align*}
The efficiency reduction is bounded by 
\begin{align*}
    \epsilon_e \ge \Xi\cdot\Gamma\cdot\left[1 - \left(\frac{2\delta}{n^2}\right)^{m}\right]. 
\end{align*}
\end{thm}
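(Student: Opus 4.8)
\textbf{Proof proposal for \pref{thm: epsilon_p_and_one_over_n}.}

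The plan is to instantiate the three master inequalities already proven---\pref{lem: total_variation-privacy trade-off} for privacy, \pref{lem: total_variation-utility trade-off_mt} (via \pref{assump: assump_of_Delta}) for utility, and \pref{lem: efficiency_reduction_and_tvd} for efficiency---by computing the single quantity that drives all of them, namely the total variation distance $\text{TV}(P_k^{\calRO} \| P_k^{\calD})$ between the relaxed plaintext distribution (uniform on the box $\prod_{j=1}^m [a_k^j-\delta, a_k^j+\delta]$, of total volume $(2\delta)^m$) and the ciphertext distribution (uniform on $[0,n^2-1]^m$, of total volume $(n^2)^m$, taking $n^2-1\approx n^2$ or simply bounding $n^2-1 \le n^2$ in the direction needed). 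First I would argue that, since the primes $p,q$ are large, the support of the plaintext box is (a translate that we may assume is) contained in the ciphertext cube, so the two densities are $1/(2\delta)^m$ and $1/(n^2)^m$ respectively on the small box and the TV distance is exactly $\int_{\text{box}} \big(\tfrac{1}{(2\delta)^m} - \tfrac{1}{(n^2)^m}\big)\,dw = 1 - (2\delta/n^2)^m$ (using $2\delta \le n^2$). This is the one genuine computation in the proof.

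Next I would plug this into \pref{lem: total_variation-privacy trade-off}: with all clients identical in the degenerate/uniform setup, $\tfrac1K\sum_k \text{TV}(P_k^{\calRO}\|P_k^{\calD}) = 1-(2\delta/n^2)^m$, and with $C_2 = \tfrac12(e^{2\xi}-1)$ as defined there, the bound $\epsilon_p \ge C_1 - C_2\big[1-(2\delta/n^2)^m\big]$ follows immediately. For the efficiency reduction I would invoke \pref{lem: efficiency_reduction_and_tvd}, which gives $\epsilon_e \ge \Xi\Gamma\cdot\tfrac1K\sum_k\text{TV}(P_k^{\calRO}\|P_k^{\calD}) = \Xi\Gamma\big[1-(2\delta/n^2)^m\big]$, where $\Xi=\min_k\Xi_k$, $\Gamma=\min_k\Gamma_k$ from \pref{assump: assump_of_Xi_efficiency}; here one should note that encryption genuinely inflates the transmitted bit-length (ciphertexts live in $[0,n^2-1]$ while the plaintext is a fixed vector), so \pref{assump: assump_of_Xi_efficiency} is the appropriate regime and $\Xi,\Gamma>0$. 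Finally, for the utility loss being exactly $0$: the plaintext parameter $W_k^{\calRO}$ is degenerate (a single point), so after decryption with the private key the protector recovers exactly $W_k^{\calRO}$; hence $P_{\text{fed}}^{\calD}$ and $P_{\text{fed}}^{\calRO}$ induce the same federated model and $\text{TV}(P_{\text{fed}}^{\calRO}\|P_{\text{fed}}^{\calD})=0$, so by the Remark following \pref{assump: assump_of_Delta} (equivalently Lemma C.3 of \cite{zhang2022no}) $\epsilon_u = 0$ and \pref{assump: assump_of_Delta} does not apply.

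The main obstacle I anticipate is not any deep inequality but the careful bookkeeping around the relaxation parameter $\delta$ and the geometry of the two supports: one must make sure the relaxed plaintext box really sits inside the ciphertext cube (or handle the wrap-around modulo $n^2$ so that it effectively does), and one must be consistent about whether the ciphertext support has volume $(n^2)^m$ or $(n^2-1)^m$ so the clean expression $1-(2\delta/n^2)^m$ comes out rather than something messier. A secondary subtlety is justifying that $W_k^{\calD}$ is genuinely uniform over the full cube (this is the standard semantic-security-style property of Paillier, coming from $r$ being uniform in $\mathbb{Z}_n^*$ and $g$ having order divisible by $n$); I would cite this as a property of the scheme rather than reprove it. Once TV is pinned down, the three bounds are one-line substitutions into the already-established lemmas.
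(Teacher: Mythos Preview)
Your proposal is correct and follows essentially the same approach as the paper: compute $\text{TV}(P_k^{\calRO}\|P_k^{\calD}) = 1 - (2\delta/n^2)^m$ by integrating the density gap over the small box, then plug this into \pref{lem: total_variation-privacy trade-off} and \pref{lem: efficiency_reduction_and_tvd}, and for $\epsilon_u=0$ invoke decryption exactness to get $\text{TV}(P_{\text{fed}}^{\calRO}\|P_{\text{fed}}^{\calD})=0$ and cite Lemma~C.3 of \cite{zhang2022no}. If anything you are more scrupulous than the paper, which silently takes the ciphertext volume to be $(n^2)^m$ and assumes uniformity of $W_k^{\calD}$ without further comment.
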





\subsection{Secret Sharing Mechanism}
Various privacy-preserving protocols based on MPC (especially secret sharing) have been proposed to build secure machine learning models, including linear regression, logistic regression, and recommendation systems. \cite{SecShare-Adi79,SecShare-Blakley79,bonawitz2017practical} were proposed to distribute secrets between participants.

Note that ${\text{TV}}(P^{\calRO}_{\text{fed}} || P^{\calD}_{\text{fed}} ) = 0$ for secret sharing mechanism. From Lemma C.3 of \cite{zhang2022no}, the utility loss equals $0$. The communication cost for the parameter of the secret sharing mechanism is not guaranteed to satisfy \pref{assump: assump_of_Xi_efficiency}, and the analysis of the lower bound of the efficiency reduction is beyond the scope of our article.

Let $W_k^{\calRO}$ represent the original model information that follows a uniform distribution over $[a_k^1 - \delta, a_k^1 + \delta]\times [a_k^2 - \delta, a_k^2 + \delta]\cdots\times [a_k^{m} - \delta, a_k^{m} + \delta]$. Let $W_k^{\calD}$ represent the distorted model information that follows uniform distribution over $[a_k^1 - b_k^1, a_k^1 + r_k^1]\times [a_k^2 - b_k^2, a_k^2 + r_k^2]\cdots\times [a_k^{m} - b_k^{m}, a_k^{m} + r_k^{m}]$.  The following \pref{thm: secret_sharing} measures utility loss and provides the lower bound for privacy leakage. The full proof is deferred to Appendix \ref{sec: analysis_for_Secret_Sharing}.

\begin{thm}\label{thm: secret_sharing}
For secret sharing mechanism, the utility loss $\epsilon_u = 0$, the privacy leakage is bounded by
\begin{align*}
    \epsilon_p \ge C_1 - C_2\cdot\frac{1}{K}\sum_{k=1}^K \left( 1 - \prod_{j = 1}^{m}\frac{2\delta}{b_k^j + r_k^j}\right).
\end{align*}
\end{thm}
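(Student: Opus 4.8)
The plan is to dispatch the two assertions separately, reusing \pref{lem: total_variation-privacy trade-off} for the privacy bound and a $\text{TV}=0$ argument for the utility claim. The utility part is immediate from the structural property of secret sharing: the shares that replace the plaintext model information are re-combined (reconstructed) at aggregation time, so the federated model is exactly the one obtained without protection; hence $P_{\text{fed}}^{\calD}=P_{\text{fed}}^{\calRO}$ as distributions and $\text{TV}(P_{\text{fed}}^{\calRO}\,\|\,P_{\text{fed}}^{\calD})=0$. By the Remark following \pref{assump: assump_of_Delta} (equivalently Lemma~C.3 of \cite{zhang2022no}), a vanishing federated-level total variation forces $\epsilon_u=0$; in fact \pref{assump: assump_of_Delta} itself fails here, which is why the corresponding cell is shaded in \pref{tab:tradeoff-comp}.

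For the privacy leakage the one computation to carry out is the total variation distance $\text{TV}(P_k^{\calRO}\,\|\,P_k^{\calD})$ between the two uniform laws. Let $A_k=\prod_{j=1}^m[a_k^j-\delta,a_k^j+\delta]$ be the support of $P_k^{\calRO}$ and $B_k=\prod_{j=1}^m[a_k^j-b_k^j,a_k^j+r_k^j]$ be the support of $P_k^{\calD}$; under the standing modelling assumption that the masking ranges dominate the perturbation width, i.e.\ $b_k^j\ge\delta$ and $r_k^j\ge\delta$ for every $j$, we have $A_k\subseteq B_k$. The densities are then $\mathrm{vol}(A_k)^{-1}\one\{w\in A_k\}$ and $\mathrm{vol}(B_k)^{-1}\one\{w\in B_k\}$, so splitting $\tfrac12\int|p_k^{\calRO}-p_k^{\calD}|$ over $A_k$ (where the integrand equals $\mathrm{vol}(A_k)^{-1}-\mathrm{vol}(B_k)^{-1}$) and over $B_k\setminus A_k$ (where it equals $\mathrm{vol}(B_k)^{-1}$), each piece evaluates to $\tfrac12\bigl(1-\mathrm{vol}(A_k)/\mathrm{vol}(B_k)\bigr)$. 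With $\mathrm{vol}(A_k)=(2\delta)^m$ and $\mathrm{vol}(B_k)=\prod_{j=1}^m(b_k^j+r_k^j)$ this gives $\text{TV}(P_k^{\calRO}\,\|\,P_k^{\calD})=1-\prod_{j=1}^m\frac{2\delta}{b_k^j+r_k^j}$.

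Substituting this into \pref{lem: total_variation-privacy trade-off}, namely $\epsilon_p\ge C_1-\frac1K\sum_{k=1}^K C_2\cdot\text{TV}(P_k^{\calRO}\,\|\,P_k^{\calD})$, produces precisely $\epsilon_p\ge C_1-\frac{C_2}{K}\sum_{k=1}^K\bigl(1-\prod_{j=1}^m\frac{2\delta}{b_k^j+r_k^j}\bigr)$, which is the claimed bound. The only genuine obstacle is justifying the inclusion $A_k\subseteq B_k$: if the secret-sharing ranges fail to cover the relaxed plaintext box, the symmetric difference contributes extra mass and the TV distance is strictly larger than the stated ratio expression, so this containment must be adopted as part of the secret-sharing model (or the statement weakened to a $\mathrm{vol}(A_k\cap B_k)/\mathrm{vol}(B_k)$ expression). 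Everything else is a one-line volume computation together with a direct appeal to the already-proven lemmas.
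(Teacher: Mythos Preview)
Your proposal is correct and follows essentially the same route as the paper. The paper likewise computes $\text{TV}(P_k^{\calRO}\,\|\,P_k^{\calD})$ for the two nested uniform boxes (explicitly assuming $\delta<b_k^j,r_k^j$, i.e.\ your inclusion $A_k\subseteq B_k$), obtains $1-\prod_{j=1}^m\frac{2\delta}{b_k^j+r_k^j}$, and plugs this into \pref{lem: total_variation-privacy trade-off}; for $\epsilon_u=0$ it uses the identical observation that $P_{\text{fed}}^{\calRO}=P_{\text{fed}}^{\calD}$ and reads the conclusion directly from the definition of utility loss. The only cosmetic difference is that the paper evaluates TV by integrating $p_k^{\calRO}-p_k^{\calD}$ over the inner box $A_k$ (using the $\sup_A|P(A)-Q(A)|$ characterization), whereas you compute $\tfrac12\int|p-q|$ by splitting over $A_k$ and $B_k\setminus A_k$; both yield the same expression.
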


\subsection{Compression Mechanism}


For the compression mechanism, the client does not transfer all the parameters to the server. Let $W_k^{\calRO}(i)$ ($W_k^{\calD}(i)$) denote dimension $i$ of $W_k^{\calRO}$ ($W_k^{\calD}$). Let $b_i$ represent a random variable that follows the Bernoulli distribution. Specifically, $b_i$ takes the value $1$ with probability $\rho_i$, and $0$ with probability $1 - \rho_i$. Each dimension $i$ of the distorted parameter $W_k^{\calD}(i)$ is defined as
\begin{equation*}
W_k^{\calD}(i) =\left\{
\begin{array}{cl}
 W_k^{\calRO}(i) &  \text{if } b_i = 1,\\
0,  &  \text{if } b_i = 0.\\
\end{array} \right.
\end{equation*}
The following \pref{thm: compression} provides bounds for privacy leakage, utility loss and efficiency reduction using the compression probability. The full proof is deferred to Appendix \ref{sec: analysis_for_Compression}.


\begin{thm} \label{thm: compression}
For compression mechanism, the privacy leakage is lower bounded by
\begin{align*}
    \epsilon_p\ge C_1 - C_2\cdot \left(1 - \prod_{i = 1}^{m} \rho_i\right).
\end{align*}
The utility loss is bounded by 
\begin{align*}
    \epsilon_u \ge \frac{\Delta}{2}\cdot \left(1 - \prod_{i = 1}^{m}\rho_i\right).
\end{align*}
The efficiency reduction $\epsilon_e$ is bounded by
\begin{align*}
    \epsilon_e\ge \Xi\cdot\Gamma\cdot \left(1 - \prod_{i = 1}^{m}\rho_i\right).
\end{align*}
\end{thm}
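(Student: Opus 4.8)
The plan is to apply \pref{thm: compression}'s three bounds by instantiating the general-purpose lemmas \pref{lem: total_variation-privacy trade-off}, \pref{lem: total_variation-utility trade-off_mt}, and \pref{lem: efficiency_reduction_and_tvd} with the total variation distance that the compression mechanism induces. So the computational core reduces to a single quantity: $\text{TV}(P_k^{\calRO}\,\|\,P_k^{\calD})$ (and likewise for the federated model). First I would observe that under the compression mechanism, $W_k^{\calD}$ equals $W_k^{\calRO}$ exactly on the event $\calE = \{b_1 = b_2 = \cdots = b_m = 1\}$, which (by independence of the Bernoulli masks) has probability $\prod_{i=1}^m \rho_i$. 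On the complement, at least one coordinate is zeroed out, so — provided $W_k^{\calRO}$ is continuously distributed, hence almost surely has no zero coordinate — the protected and unprotected outputs differ. A standard coupling argument then gives $\text{TV}(P_k^{\calRO}\,\|\,P_k^{\calD}) \le 1 - \prod_{i=1}^m \rho_i$, and in the generic case this is an equality (the supports of the two conditional laws on $\calE^c$ are essentially disjoint from the unprotected law off a null set). The same reasoning applied to the aggregated/federated parameter yields the matching expression with the product over the relevant coordinates, so the normalization constant $\gamma$ in \pref{lem: utility-privacy trade-off_JSD_mt} just carries through.

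Given $\text{TV}(P_k^{\calRO}\,\|\,P_k^{\calD}) = 1 - \prod_{i=1}^m \rho_i$, the three displayed bounds follow by direct substitution. For the privacy bound, \pref{lem: total_variation-privacy trade-off} gives $\epsilon_p \ge C_1 - \frac{1}{K}\sum_{k=1}^K C_2 \cdot \text{TV}(P_k^{\calRO}\,\|\,P_k^{\calD})$; since every client uses the same compression distribution here, each TV term equals $1 - \prod_{i=1}^m \rho_i$ and the average collapses to $C_1 - C_2(1 - \prod_{i=1}^m \rho_i)$, with $C_2 = (e^{2\xi}-1)/2$ as defined. For the utility bound, \pref{lem: total_variation-utility trade-off_mt} gives $\epsilon_u \ge \frac{\Delta}{2}\text{TV}(P_{\text{fed}}^{\calRO}\,\|\,P_{\text{fed}}^{\calD}) = \frac{\Delta}{2}(1 - \prod_{i=1}^m \rho_i)$, using that \pref{assump: assump_of_Delta} holds since the TV distance is nonzero. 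For the efficiency bound, \pref{lem: efficiency_reduction_and_tvd} gives $\epsilon_e \ge \Xi\Gamma\cdot\frac{1}{K}\sum_{k=1}^K \text{TV}(P_k^{\calRO}\,\|\,P_k^{\calD}) = \Xi\Gamma(1 - \prod_{i=1}^m \rho_i)$. (Alternatively, one can argue the efficiency reduction directly: the expected number of transmitted coordinates drops from $m$ to $\sum_i \rho_i$, which is nonzero reduction whenever some $\rho_i < 1$, confirming \pref{assump: assump_of_Xi_efficiency} is consistent here.)

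The main obstacle — and the only place where care is genuinely needed — is the exact computation of the total variation distance, specifically justifying the equality $\text{TV}(P_k^{\calRO}\,\|\,P_k^{\calD}) = 1 - \prod_{i=1}^m \rho_i$ rather than merely the upper bound. This requires a mild regularity hypothesis on $P_k^{\calRO}$ (absolute continuity, or at least that the coordinatewise marginals put no mass at $0$), so that zeroing any coordinate moves the output off the support of the unprotected law up to a null set; then the mixture representation $P_k^{\calD} = (\prod_i \rho_i) P_k^{\calRO} + (1 - \prod_i \rho_i) Q$ for some $Q$ mutually singular with $P_k^{\calRO}$ gives the claimed value. For the federated model one must additionally track which coordinates of $W_{\text{fed}}$ inherit compression noise from which client, but since the aggregation is linear and the masks are independent, the event that the aggregated parameter is undisturbed is again a product of $\rho_i$'s over the appropriate index set, and the $\gamma$ ratio absorbs any mismatch between the per-client and federated index sets. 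Everything downstream is bookkeeping with the constants $C_1, C_2, \Delta, \Xi, \Gamma$ already named in the lemmas, so once the TV identity is in hand the theorem is immediate.
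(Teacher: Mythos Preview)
Your proposal is correct and follows the same high-level architecture as the paper: compute the total variation distance induced by compression, then feed it into \pref{lem: total_variation-privacy trade-off}, \pref{lem: total_variation-utility trade-off_mt}, and \pref{lem: efficiency_reduction_and_tvd}. The difference lies entirely in how the TV distance is obtained. The paper (Appendix~\ref{sec: analysis_for_Compression}) works under a concrete distributional assumption---$W_k^{\calRO}$ uniform on a product of intervals $\prod_{i=1}^m [a_k^i-\delta, a_k^i+\delta]$---and computes $\text{TV}(P_k^{\calRO}\,\|\,P_k^{\calD})$ by integrating the density difference $\bigl((\tfrac{1}{2\delta})^m - \prod_i \tfrac{\rho_i}{2\delta}\bigr)$ over that box, arriving at $1-\prod_i\rho_i$. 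The paper also explicitly \emph{postulates} that the federated parameter $W_{\text{fed}}^{\calD}$ obeys the same coordinatewise compression with the same $\rho_i$'s, so the federated TV computation is a verbatim repeat; your concern about tracking which client's mask affects which federated coordinate is therefore unnecessary under the paper's setup (though it would matter in a more realistic model where aggregation precedes compression or vice versa).

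Your coupling/mixture argument---$P_k^{\calD} = (\prod_i \rho_i)P_k^{\calRO} + (1-\prod_i\rho_i)Q$ with $Q\perp P_k^{\calRO}$ whenever the unprotected law puts no mass on coordinate hyperplanes---is more elegant and strictly more general: it dispenses with the uniform assumption and makes transparent exactly what regularity is needed. The paper's explicit integral, by contrast, is tied to the uniform model but requires no singularity reasoning. One small caveat: your parenthetical remark that compression ``drops the expected number of transmitted coordinates'' actually points to a \emph{decrease} in communication cost, which sits awkwardly with the sign convention in \pref{defi: efficiency_reduction} and the positivity of $\Xi,\Gamma$ in \pref{assump: assump_of_Xi_efficiency}; the paper simply invokes the lemma without addressing this, so you are not missing anything the paper supplies, but the aside does not help your case.
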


\section{Conclusion and Future Work}

In this work, we propose a unified federated learning (FL) framework that reconciles both HFL and VFL. Under this unified FL framework, we provide the No-Free-Lunch (NFL) theorem that quantifies the trade-off between privacy leakage, utility loss, and efficiency reduction for well-defined scenarios. We then leverage our proposed NFL theorem to analyze the lower bounds of the widely-adopted protection mechanisms, including the randomization mechanism, Paillier mechanism, secret sharing mechanism, and compression mechanism. 

Lots of problems are worth investigating. In this work, we use the Jensen-Shannon (JS) divergence instead of the commonly-used KL divergence to measure privacy leakage. The JS divergence is expressed as the summation of two KL divergences. Thus, it inherits the advantages of KL divergence. Besides, JS divergence satisfies the triangle inequality, which facilitates the theoretical analysis. One potential question is, if we use a generalized version of JS-divergence weighted using a hyperparameter $\alpha$ \cite{deasy2020constraining}, are we still able to bound the privacy leakage and quantify the trade-off analysis? 

The optimal privacy-utility-efficiency trade-off is cast as a constrained optimization problem in which the \textit{utility loss} and \textit{efficiency reduction} are minimized subject to a predefined constraint for \textit{privacy leakage}.  The optimization problem and the derived lower bounds provide an avenue for proposing meta-algorithms that search the optimal hyperparameter characterizing the protection mechanism. Designing a meta-algorithm that seeks to determine the ideal protection hyperparameter at each communication round is an intriguing problem. Whether it is possible to design an algorithm that can learn the hyperparameter adaptively is another promising research direction.


\textbf{ACKNOWLEDGMENTS}

We would like to thank Di Chai, Xiaodian Cheng, Tao Fan, Shaofeng Jiang, Yilun Jin, Liu Yang, Ke Yi, Junxue Zhang for helpful discussions. This work was partially supported by the National Key Research and Development Program of China under Grant 2018AAA0101100 and Hong Kong RGC TRS T41-603/20-R.


\bibliography{references}
\bibliographystyle{plain}

\clearpage

\appendix


\section{Outline of our work}\label{sec: outline_our_work}
\begin{figure}[h]
\centering
\includegraphics[width = 0.98\columnwidth]{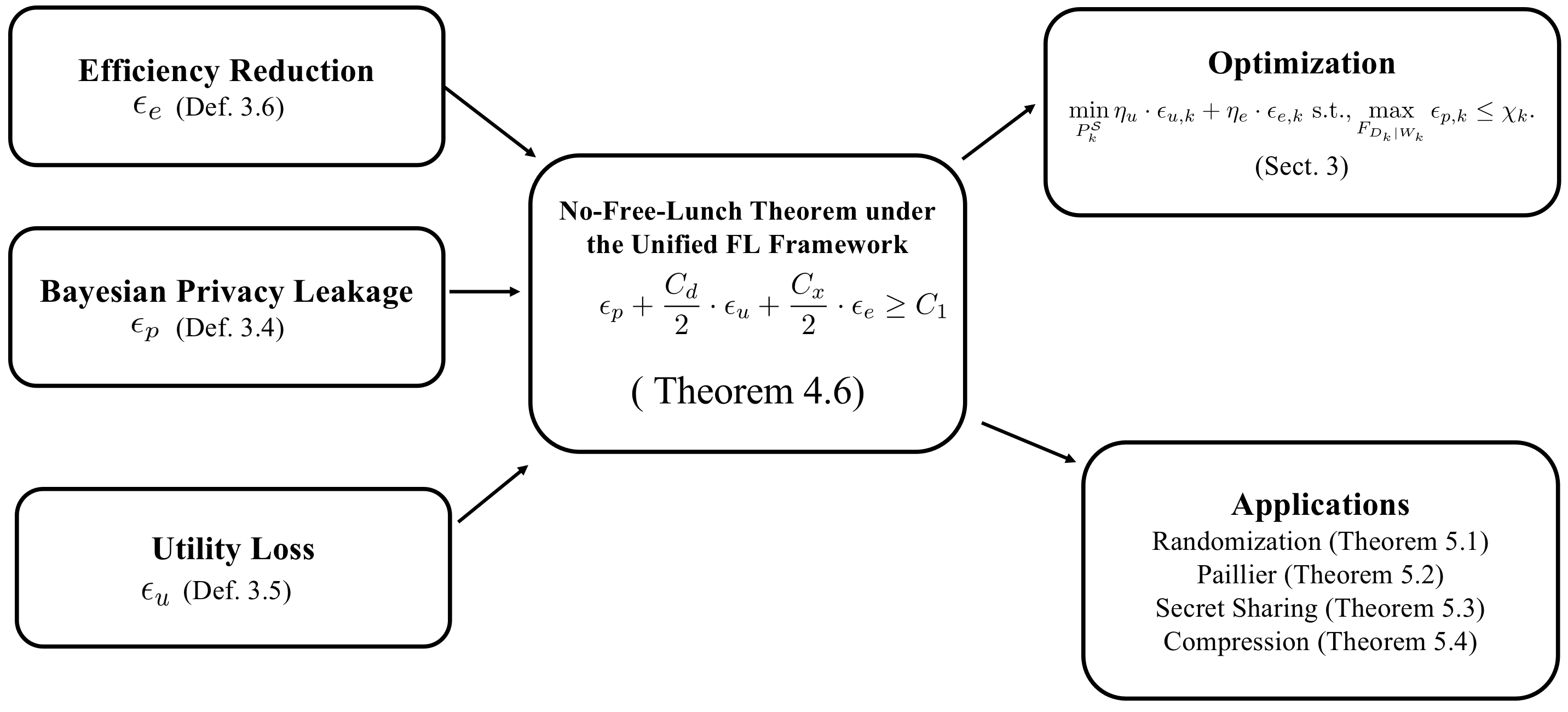}
\caption{Outline of our work.} \label{fig: our_framework}
\end{figure}

\section{Lower Bounds for Privacy Leakage, Utility Loss and Efficiency Reduction}

When the private information $D$ is discrete, the privacy leakage could be lower bounded by the total variation distance between $P_k^{\calRO}$ and $P^{\calD}_k$, as is shown in the following lemma.

\begin{lem}\label{lem: total_variation-privacy trade-off_app_1}
Let $\epsilon_{p,k}$ be defined in \pref{defi: average_privacy_JSD}. Let $P_k^{\calRO}$ and $P^{\calD}_k$ represent the distribution of the parameter of client $k$ before and after being protected. Then for any client $k$, we have that

\begin{align}
    \epsilon_{p,k}\ge\sqrt{{\text{JS}}(F^{\calRO}_k || F^{\calO}_k)} - \frac{1}{2}(e^{2\xi}-1){\text{TV}}(P_k^{\calRO} || P^{\calD}_k).
\end{align}
Furthermore, we have that

\begin{align*}
\epsilon_p \ge \frac{1}{K}\sum_{k=1}^K \sqrt{{\text{JS}}(F^{\calRO}_k || F^{\calO}_k)} - \frac{1}{K}\sum_{k=1}^K \frac{1}{2}(e^{2\xi}-1)\cdot {\text{TV}}(P_k^{\calRO} || P^{\calD}_k).
\end{align*}
\end{lem}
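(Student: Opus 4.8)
The plan is to prove the per-client bound first and then average over $k\in[K]$, since the system-level bound follows immediately from the definition $\epsilon_p = \frac{1}{K}\sum_k \epsilon_{p,k}$ by summing the per-client inequalities and dividing by $K$. So the crux is the single-client statement
\[
\epsilon_{p,k} = \sqrt{{\text{JS}}(F^{\calA}_k || F^{\calO}_k)} \ge \sqrt{{\text{JS}}(F^{\calRO}_k || F^{\calO}_k)} - \tfrac{1}{2}(e^{2\xi}-1){\text{TV}}(P_k^{\calRO} || P^{\calD}_k).
\]

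First I would invoke the triangle inequality for $\sqrt{{\text{JS}}(\cdot||\cdot)}$ (this metric property is cited in the text via \cite{endres2003new}). Applying it to the three distributions $F^{\calA}_k$, $F^{\calRO}_k$, $F^{\calO}_k$ gives
\[
\sqrt{{\text{JS}}(F^{\calRO}_k || F^{\calO}_k)} \le \sqrt{{\text{JS}}(F^{\calA}_k || F^{\calO}_k)} + \sqrt{{\text{JS}}(F^{\calA}_k || F^{\calRO}_k)},
\]
which rearranges to $\epsilon_{p,k} \ge \sqrt{{\text{JS}}(F^{\calRO}_k || F^{\calO}_k)} - \sqrt{{\text{JS}}(F^{\calA}_k || F^{\calRO}_k)}$. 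So it remains to show $\sqrt{{\text{JS}}(F^{\calA}_k || F^{\calRO}_k)} \le \tfrac{1}{2}(e^{2\xi}-1){\text{TV}}(P_k^{\calRO} || P^{\calD}_k)$.

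Next I would bound $\sqrt{{\text{JS}}(F^{\calA}_k || F^{\calRO}_k)}$ by the total variation distance ${\text{TV}}(F^{\calA}_k || F^{\calRO}_k)$ — a standard inequality, since JS divergence is upper bounded (up to constants) by TV, and in fact $\sqrt{{\text{JS}}(P||Q)}\le \sqrt{{\text{TV}}(P||Q)}$ or even $\le {\text{TV}}$-type bounds; I'd use whichever clean form makes the constant work, likely $\sqrt{{\text{JS}}(F^{\calA}_k||F^{\calRO}_k)} \le {\text{TV}}(F^{\calA}_k||F^{\calRO}_k)$ via the known bound ${\text{JS}}(P||Q)\le {\text{TV}}(P||Q)$ combined with ${\text{TV}}\le 1$. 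Then the key estimate is to relate ${\text{TV}}(F^{\calA}_k || F^{\calRO}_k)$ to ${\text{TV}}(P_k^{\calRO} || P^{\calD}_k)$. Writing $f^{\calA}_{D_k}(d) - f^{\calRO}_{D_k}(d) = \int_{\calW_k} f_{D_k|W_k}(d|w)\,(dP^{\calD}_k(w) - dP^{\calRO}_k(w))$ and using the definition of $\xi_k$, namely $\left|\log\frac{f_{D_k|W_k}(d|w)}{f_{D_k}(d)}\right|\le \xi$, so that $e^{-\xi} f_{D_k}(d) \le f_{D_k|W_k}(d|w) \le e^{\xi} f_{D_k}(d)$, one controls the integrand uniformly. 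Integrating $|f^{\calA}_{D_k}(d) - f^{\calRO}_{D_k}(d)|$ over $d$ and carefully splitting the signed measure $dP^{\calD}_k - dP^{\calRO}_k$ into positive and negative parts yields a bound of the form $\text{(const in $\xi$)}\cdot{\text{TV}}(P_k^{\calRO}||P^{\calD}_k)$; the factor $\frac{1}{2}(e^{2\xi}-1)$ should emerge from comparing the $e^{\xi}$ upper and $e^{-\xi}$ lower envelopes, i.e. $e^{\xi} - e^{-\xi} = 2\sinh\xi$ against the normalization, or from a Radon–Nikodym/importance-weighting argument giving $|f^{\calA}/f^{\calRO} - 1|\le e^{2\xi}-1$ pointwise. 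This pointwise likelihood-ratio bound, multiplied by ${\text{TV}}$, then gives the claim. I expect this likelihood-ratio step — getting exactly the constant $\frac{1}{2}(e^{2\xi}-1)$ rather than a looser $e^{2\xi}$ or $2\sinh\xi$ — to be the main obstacle, since it requires tracking the normalization of the posterior carefully (both $F^{\calA}_k$ and $F^{\calRO}_k$ are mixtures of the \emph{same} conditional $f_{D_k|W_k}$ against different mixing measures, so a cancellation in the normalizing constants must be exploited). Everything else is routine: after establishing the per-client inequality, I sum over $k$, divide by $K$, and use the definitions of $\epsilon_p$ and $C_1$ to obtain the displayed system-level bound.
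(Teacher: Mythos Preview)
Your high-level plan --- triangle inequality for $\sqrt{{\text{JS}}}$, then bound $\sqrt{{\text{JS}}(F^{\calA}_k||F^{\calRO}_k)}$ by $\tfrac{1}{2}(e^{2\xi}-1){\text{TV}}(P_k^{\calRO}||P^{\calD}_k)$ --- matches the paper exactly, and the integral representation $f^{\calA}_{D_k}(d)-f^{\calRO}_{D_k}(d)=\int f_{D_k|W_k}(d|w)\,(dP^{\calD}_k-dP^{\calRO}_k)$ together with the $\xi$-bound are precisely the right ingredients. The gap is in how you combine them. You propose to route through ${\text{TV}}(F^{\calA}_k||F^{\calRO}_k)$ via the step $\sqrt{{\text{JS}}(P||Q)}\le{\text{TV}}(P||Q)$, justified by ``${\text{JS}}\le{\text{TV}}$ combined with ${\text{TV}}\le 1$''. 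That deduction is backwards: from ${\text{JS}}\le{\text{TV}}\le 1$ you only get $\sqrt{{\text{JS}}}\le\sqrt{{\text{TV}}}$, and $\sqrt{{\text{TV}}}\ge{\text{TV}}$ on $[0,1]$. In fact $\sqrt{{\text{JS}}}\le C\cdot{\text{TV}}$ is false in general (take $P=\delta_0$, $Q=(1-\epsilon)\delta_0+\epsilon\delta_1$: then ${\text{TV}}=\epsilon$ but ${\text{JS}}\asymp\epsilon$, so $\sqrt{{\text{JS}}}/{\text{TV}}\to\infty$). Your fallback --- the pointwise bound $|f^{\calA}/f^{\calRO}-1|\le e^{2\xi}-1$ --- is correct but contains no factor of ${\text{TV}}(P_k^{\calRO}||P^{\calD}_k)$, so ``multiplying by TV'' is not a proof.

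The paper avoids the detour and bounds ${\text{JS}}(F^{\calA}_k||F^{\calRO}_k)$ \emph{directly} as a product of two factors, each linear in ${\text{TV}}(P_k^{\calRO}||P^{\calD}_k)$. Starting from
\[
{\text{JS}}(F^{\calA}_k||F^{\calRO}_k)\;\le\;\tfrac{1}{2}\sum_{d}\big|f^{\calA}_{D_k}(d)-f^{\calRO}_{D_k}(d)\big|\cdot\Big|\log\tfrac{f^{\calM}_{D_k}(d)}{f^{\calRO}_{D_k}(d)}\Big|,
\]
one uses your integral representation and the positive/negative split of $dP^{\calD}_k-dP^{\calRO}_k$ to get $|f^{\calA}_{D_k}(d)-f^{\calRO}_{D_k}(d)|\le\inf_{w}f_{D_k|W_k}(d|w)\,(e^{2\xi}-1)\,{\text{TV}}(P_k^{\calRO}||P^{\calD}_k)$, and then uses $|\log(a/b)|\le|a-b|/\min(a,b)$ together with the \emph{same} density-difference estimate to get $\big|\log\tfrac{f^{\calM}}{f^{\calRO}}\big|\le\tfrac{1}{2}(e^{2\xi}-1){\text{TV}}(P_k^{\calRO}||P^{\calD}_k)$. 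Multiplying and summing (using $\sum_d\inf_w f_{D_k|W_k}(d|w)\le 1$) gives ${\text{JS}}\le\tfrac{1}{4}(e^{2\xi}-1)^2{\text{TV}}^2$, and the square root delivers the constant $\tfrac{1}{2}(e^{2\xi}-1)$ cleanly. The missing idea, then, is that \emph{both} the density difference and the log-ratio separately carry one factor of ${\text{TV}}(P_k^{\calRO}||P^{\calD}_k)$; you do not need (and cannot use) a generic $\sqrt{{\text{JS}}}\lesssim{\text{TV}}$ comparison on the belief distributions.
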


To provide a lower bound for privacy leakage when the private information $D$ is discrete, we first show the following lemma. 


\begin{lem}\label{lem: JSBound_app}
Let $P_k^{\calRO}$ and $P^{\calD}_k$ represent the distribution of the parameter of client $k$ before and after being protected. Let $F^{\calA}_k$ and $F^{\calRO}_k$ represent the belief of client $k$ about $S$ after observing the protected and original parameter. Then we have
\begin{align*}
{\text{JS}}(F^{\calA}_k || F^{\calRO}_k)\le \frac{1}{4}(e^{2\xi}-1)^2{\text{TV}}(P_k^{\calRO} || P^{\calD}_k)^2. 
\end{align*}
\end{lem}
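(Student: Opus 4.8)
The plan is to reduce the Jensen–Shannon divergence to a total variation statement and then control that total variation by the distortion $\text{TV}(P_k^{\calRO}\|P^{\calD}_k)$. First I would invoke the standard inequality relating JS divergence and total variation distance: for any two distributions $F, G$ one has $\text{JS}(F\|G)\le \text{TV}(F\|G)\,\log 2$, or even more simply $\text{JS}(F\|G)\le \text{TV}(F\|G)$ after the appropriate normalization used in this paper (indeed $\text{JS}$ is bounded above by $\ln 2$ and is dominated by $\text{TV}$ up to a constant; since the paper's other lemmas carry a clean $\frac14(e^{2\xi}-1)^2$ constant, I expect they use $\text{JS}(F\|G)\le \text{TV}(F\|G)^2$ is \emph{not} generally true, so more likely the route is through a pointwise bound on the densities directly). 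Concretely, I would instead bound $\text{JS}(F^{\calA}_k\|F^{\calRO}_k)$ by $\frac12\int |f^{\calA}_{D_k}(d)-f^{\calRO}_{D_k}(d)|\cdot\big|\log\frac{f^{\calA}_{D_k}(d)}{f^{\calRO}_{D_k}(d)}\big|\,\dbar\mu(d)$ or by a cruder $\text{JS}\le \text{TV}^2$-type estimate valid when the density ratio is bounded — which it is here, because the $\xi$-bound $|\log(f_{D_k|W_k}(d|w)/f_{D_k}(d))|\le\xi$ forces $f^{\calA}_{D_k}$ and $f^{\calRO}_{D_k}$ to be within a multiplicative factor $e^{2\xi}$ of each other.

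The key intermediate step, which I expect to be the technical heart, is the pointwise estimate
\[
\big|f^{\calA}_{D_k}(d) - f^{\calRO}_{D_k}(d)\big| \;=\; \left|\int_{\calW_k} f_{D_k|W_k}(d|w)\,\big(dP^{\calD}_k(w) - dP^{\calRO}_k(w)\big)\right|.
\]
Here I would write $f_{D_k|W_k}(d|w) = f_{D_k}(d)\cdot e^{g(d,w)}$ with $|g(d,w)|\le\xi$, pull the factor $f_{D_k}(d)$ out, and bound the remaining integral of $e^{g(d,w)}$ against the signed measure $P^{\calD}_k - P^{\calRO}_k$. Since $\int (dP^{\calD}_k - dP^{\calRO}_k) = 0$, I can subtract any constant (e.g. $1$) from $e^{g(d,w)}$ before integrating, so the integrand becomes $e^{g(d,w)}-1$ whose absolute value is at most $e^{\xi}-1$ on the positive side and $1-e^{-\xi}\le e^{\xi}-1$ on the negative side; pairing this against the total variation of the signed measure gives $|f^{\calA}_{D_k}(d)-f^{\calRO}_{D_k}(d)|\le f_{D_k}(d)\cdot(e^{\xi}-1)\cdot 2\,\text{TV}(P_k^{\calRO}\|P^{\calD}_k)$, or a variant with $e^{2\xi}-1$ once I also account for the normalization $f^{\calRO}_{D_k}(d)\ge e^{-\xi}f_{D_k}(d)$ needed to convert back into a ratio.

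Finally I would combine the density-ratio bound $\big|\log(f^{\calA}_{D_k}(d)/f^{\calRO}_{D_k}(d))\big|\le 2\xi$ (consequence of both being within $e^{\xi}$ of $f_{D_k}$) with the elementary inequality $\text{JS}(F\|G)\le \tfrac14 L^2\,\text{TV}(F\|G)^2$ valid when $e^{-L}\le f/g\le e^{L}$ — this is the JS-analogue of the fact that a $\chi^2$- or KL-type quantity with bounded log-ratio is second order in TV — applied with $L = 2\xi$ replaced via $e^{2\xi}-1$ after bounding $\text{TV}(F^{\calA}_k\|F^{\calRO}_k)\le \tfrac12(e^{2\xi}-1)\,\text{TV}(P_k^{\calRO}\|P^{\calD}_k)$ from the previous step (cf. the commented-out Lemma~\ref{lem: d{\text{TV}}_bound_maintext} in the excerpt). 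Chaining these yields $\text{JS}(F^{\calA}_k\|F^{\calRO}_k)\le \tfrac14(e^{2\xi}-1)^2\,\text{TV}(P_k^{\calRO}\|P^{\calD}_k)^2$. The main obstacle will be getting the constants to line up exactly — in particular making sure the mean-value / Taylor step on $x\log(x/m)$ near $x=1$ contributes a clean factor rather than an extra $\log$, and correctly tracking whether the right exponent is $e^{\xi}$ or $e^{2\xi}$ through the prior-to-posterior conversion; I would handle this by first proving the total-variation bound $\text{TV}(F^{\calA}_k\|F^{\calRO}_k)\le\tfrac12(e^{2\xi}-1)\,\text{TV}(P_k^{\calRO}\|P^{\calD}_k)$ as a clean lemma and then feeding it into a generic "$\text{JS}\le\text{TV}^2$ under bounded log-ratio" estimate, keeping the two sources of the constant separate.
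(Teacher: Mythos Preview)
Your overall framework is right and matches the paper: start from $\text{JS}(F^{\calA}_k\|F^{\calRO}_k)\le \frac{1}{2}\int |f^{\calA}_{D_k}-f^{\calRO}_{D_k}|\cdot|\text{log-ratio}|$, and bound the pointwise density gap by writing $f^{\calA}_{D_k}(d)-f^{\calRO}_{D_k}(d)=\int f_{D_k|W_k}(d|w)\,d(P^{\calD}_k-P^{\calRO}_k)(w)$ and using the $\xi$-bound to get $|f^{\calA}_{D_k}(d)-f^{\calRO}_{D_k}(d)|\le \inf_w f_{D_k|W_k}(d|w)\,(e^{2\xi}-1)\,\text{TV}(P_k^{\calRO}\|P^{\calD}_k)$. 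The paper does exactly this (via $\sup-\inf\le\inf\cdot(e^{2\xi}-1)$, equivalent to your $e^{g}$ trick).

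The gap is in your final step. The ``elementary inequality'' $\text{JS}(F\|G)\le\tfrac14 L^2\,\text{TV}(F\|G)^2$ under $e^{-L}\le f/g\le e^L$ is \emph{false}: even with a fixed $L$, bounded log-ratio only yields $\text{JS}\lesssim\text{TV}$ linearly (e.g., via $\text{JS}\approx\tfrac14\int(f-g)^2/(f+g)\le\tfrac14\tanh(L/2)\int|f-g|$), and a sparse-perturbation example on $n$ atoms shows no dimension-free constant gives $\text{JS}\le C(L)\,\text{TV}^2$. Consequently, bounding $|\log(f^{\calA}/f^{\calRO})|\le 2\xi$ and then invoking any generic $\text{JS}$-vs-$\text{TV}$ relation cannot produce the squared $\text{TV}(P_k^{\calRO}\|P^{\calD}_k)^2$. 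The paper gets the second $\text{TV}$ factor differently: it bounds the log-ratio \emph{itself} by the density gap, using $|\log(a/b)|\le|a-b|/\min(a,b)$ applied to $f^{\calM}_{D_k}$ and $f^{\calRO}_{D_k}$, and then reuses the same pointwise estimate $|f^{\calA}_{D_k}-f^{\calRO}_{D_k}|\le\inf_w f_{D_k|W_k}(d|w)(e^{2\xi}-1)\text{TV}$ together with $\min(f^{\calM}_{D_k},f^{\calRO}_{D_k})\ge\inf_w f_{D_k|W_k}(d|w)$; the $\inf_w f$ cancels, leaving $|\log(f^{\calM}_{D_k}/f^{\calRO}_{D_k})|\le\tfrac12(e^{2\xi}-1)\,\text{TV}(P_k^{\calRO}\|P^{\calD}_k)$ uniformly in $d$. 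Plugging this back into the $\tfrac12\int|f^{\calA}-f^{\calRO}||\text{log-ratio}|$ bound is what delivers the square. You actually gesture at the right tool (``mean-value / Taylor step on $x\log(x/m)$''), but you need to apply it to the log-ratio \emph{before} integrating, not afterward via a nonexistent black-box inequality.
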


\begin{proof}

Recall that 
${\text{JS}}(F^{\calA}_k || F^{\calO}_k) =
    \frac{1}{2}\int_{\mathcal{D}_k} f^{\calA}_{D_k}(d)\log\frac{f^{\calA}_{D_k}(d)}{f^{\calM}_{D_k}(d)}\textbf{d}\mu(d) + \frac{1}{2}\int_{\mathcal{D}_k} f^{\calO}_{D_k}(d)\log\frac{f^{\calO}_{D_k}(d)}{f^{\calM}_{D_k}(d)}\textbf{d}\mu(d) = \frac{1}{2}\int_{\mathcal{D}_k} \int_{\mathcal{W}_k^{\calD}} f_{{D_k}|{W_k}}(d|w)dP^{\calD}_{k}(w)\log\frac{\int_{\mathcal{W}_k^{\calD}} f_{{D_k}|{W_k}}(d|w)dP^{\calD}_{k}(w)}{f^{\calM}_{D_k}(d)}\textbf{d}\mu(d) + \frac{1}{2}\int_{\mathcal{D}_k} f_{D_k}(d)\log\frac{f_{D_k}(d)}{f^{\calM}_{D_k}(d)}\textbf{d}\mu(d)$. Denote $F^{\calM}_k = \frac{1}{2}(F^{\calA}_k+ F^{\calRO}_k)$, we have

\begin{align*}
{\text{JS}}(F^{\calA}_k || F^{\calRO}_k) & = \frac{1}{2}\left[KL\left(F^{\calA}_k, F^{\calM}_k\right) + KL\left(F^{\calRO}_k,F^{\calM}_k\right)\right]\\
& = \frac{1}{2}\left[\sum_{d\in\mathcal{D}_k} f^{\calA}_{D_k}(d)\log\frac{f^{\calA}_{D_k}(d)}{f^{\calM}_{D_k}(d)} + \sum_{d\in\mathcal{D}_k} f^{\calRO}_{D_k}(d)\log\frac{f^{\calRO}_{D_k}(d)}{f^{\calM}_{D_k}(d)}\right]\\
& = \frac{1}{2}\left[\sum_{d\in\mathcal{D}_k} f^{\calA}_{D_k}(d)\log\frac{f^{\calA}_{D_k}(d)}{f^{\calM}_{D_k}(d)} - \sum_{d\in\mathcal{D}_k} f^{\calRO}_{D_k}(d)\log\frac{f^{\calM}_{D_k}(d)}{f^{\calRO}_{D_k}(d)}\right]\\
&\le \frac{1}{2}\sum_{d\in\mathcal{D}_k}\left|f^{\calA}_{D_k}(d) - f^{\calRO}_{D_k}(d)\right|\left|\log\frac{f^{\calM}_{D_k}(d)}{f^{\calRO}_{D_k}(d)}\right|,
\end{align*}
where the inequality is due to $\frac{f^{\calA}_{D_k}(d)}{f^{\calM}_{D_k}(d)}\le \frac{f^{\calM}_{D_k}(d)}{f^{\calRO}_{D_k}(d)}$.\\
\textbf{Bounding $\left|f^{\calA}_{D_k}(d) - f^{\calRO}_{D_k}(d)\right|$.}
Let $\mathcal U_k = \{w\in\mathcal W_k: dP^{\calD}_k(w) - dP_k^{\calRO}(w)\ge 0\}$, and $\mathcal V_k = \{w\in\mathcal W_k: dP^{\calD}_k(w) - dP_k^{\calRO}(w)< 0\}$. Then we have 

\begin{align}\label{eq:initial_step_{JS}}
    \left|f^{\calA}_{D_k}(d) - f^{\calRO}_{D_k}(d)\right| &= \left|\int_{\mathcal W_k} f_{D_k|W_k}(d|w)[d P^{\calD}_k(w) - d P_k^{\calRO}(w)]\right|\nonumber\\
    &= \left|\int_{\mathcal{U}_k} f_{D_k|W_k}(d|w)[d P^{\calD}_k(w) - d P_k^{\calRO}(w)] + \int_{\mathcal{V}_k} f_{D_k|W_k}(d|w)[d P^{\calD}_k(w) - d P_k^{\calRO}(w)]\right|\nonumber\\
    &\le\left(\sup_{w\in\mathcal{W}_k} f_{D_k|W_k}(d|w) - \inf_{w\in\mathcal{W}_k} f_{D_k|W_k}(d|w)\right)\int_{\mathcal{U}_k} [d P^{\calD}_k(w) - d P_k^{\calRO}(w)].
\end{align}
Notice that

\begin{align*}
    \sup_{w\in\mathcal W_k} f_{D_k|W_k}(d|w) - \inf_{w\in\mathcal W_k} f_{D_k|W_k}(d|w) = \inf_{w\in\mathcal W_k} f_{D_k|W_k}(d|w)\left|\frac{\sup_{w\in\mathcal W_k} f_{D_k|W_k}(d|w)}{\inf_{w\in\mathcal W_k} f_{D_k|W_k}(d|w)}-1\right|.
\end{align*}
From the definition of $\xi$, we know that for any $w\in\mathcal W_k$,
\begin{align*}
    e^{-\xi}\le\frac{f_{D_k|W_k}(d|w)}{f_{D_k}(d)}\le e^{\xi},
\end{align*}
Therefore, for any pair of parameters $w,w'\in\mathcal W_k$, we have
\begin{align*}
    \frac{f_{D_k|W_k}(d|w)}{f_{D_k|W_k}(d|w')} = \frac{f_{D_k|W_k}(d|w)}{f_{D_k}(d)}/\frac{f_{D_k|W_k}(d|w')}{f_{D_k}(d)}\le e^{2\xi}. 
\end{align*}
Therefore, the first term of \pref{eq:initial_step_{JS}} is bounded by

\begin{align}\label{eq: bound_1_term_1_{JS}_ratio}
    \sup_{w\in\mathcal W_k} f_{D_k|W_k}(d|w) - \inf_{w\in\mathcal W_k} f_{D_k|W_k}(d|w) \le \inf_{w\in\mathcal W_k} f_{D_k|W_k}(d|w)(e^{2\xi}-1).
\end{align}
From the definition of total variation distance, we have
\begin{align}\label{eq: bound_1_term_2_{JS}_ratio}
    \int_{\U} [d P^{\calD}_k(w) - d P_k^{\calRO}(w)] = {\text{TV}}(P_k^{\calRO} || P^{\calD}_k).
\end{align}
Combining \pref{eq: bound_1_term_1_{JS}_ratio} and \pref{eq: bound_1_term_2_{JS}_ratio}, we have
\begin{align}\label{eq: bound_for_the_gap}
        |f^{\calA}_{D_k}(d) - f^{\calRO}_{D_k}(d)| &=\left(\sup_{w\in\mathcal W_k} f_{D_k|W_k}(d|w) - \inf_{w\in\mathcal W_k} f_{D_k|W_k}(d|w)\right)\int_\U [d P^{\calD}_k(w) - d P_k^{\calRO}(w)]\nonumber\\
        &\le\inf_{w\in\mathcal W_k} f_{D_k|W_k}(d|w)(e^{2\xi}-1){\text{TV}}(P_k^{\calRO} || P^{\calD}_k).
\end{align}

\textbf{Bounding $\left|\log\left(\frac{f^{\calM}_{D_k}(d)}{ f^{\calRO}_{D_k}(d)}\right)\right|.$} 
Since $\left|\log\left(\frac{a}{b}\right)\right| \le \frac{|a-b|}{\min\{a,b\}}$ for any two positive numbers $a$ and $b$, we have that

\begin{align}\label{eq: bound_for_log_ratio}
    \left|\log\frac{f^{\calM}_{D_k}(d)}{f^{\calRO}_{D_k}(d)}\right|&\le\frac{|f^{\calM}_{D_k}(d) - f^{\calRO}_{D_k}(d)|}{\min\{f^{\calM}_{D_k}(d), f^{\calRO}_{D_k}(d)\}}\nonumber\\
    &=\frac{|f^{\calA}_{D_k}(d) - f^{\calRO}_{D_k}(d)|}{2\min\{f^{\calM}_{D_k}(d), f^{\calRO}_{D_k}(d)\}}\nonumber\\
    &\le \frac{\inf_{w\in\mathcal W_k} f_{D_k|W_k}(d|w)(e^{2\xi}-1){\text{TV}}(P_k^{\calRO} || P^{\calD}_k)}{2\min\{f^{\calM}_{D_k}(d), f^{\calRO}_{D_k}(d)\}}\nonumber\\
    &\le \frac{1}{2}(e^{2\xi}-1){\text{TV}}(P_k^{\calRO} || P^{\calD}_k),
\end{align}
where the third inequality is due to $\min\{f^{\calM}_{D_k}(d), f^{\calRO}_{D_k}(d)\}\ge \min\{f^{\calA}_{D_k}(d), f^{\calRO}_{D_k}(d)\}\ge \inf\limits_{\small{w\in\mathcal W_k}} f_{D_k|W_k}(d|w)$.
Combining \pref{eq: bound_for_the_gap} and \pref{eq: bound_for_log_ratio}, we have
\begin{align*}
    {\text{JS}}(F^{\calA}_k || F^{\calRO}_k) & \le  \frac{1}{2}\left[\sum_{d\in\mathcal{D}_k} \left|(f^{\calA}_{D_k}(d) - f^{\calRO}_{D_k}(d))\right| \left|\log\frac{f^{\calM}_{D_k}(d)}{f^{\calRO}_{D_k}(d)}\right|\right]\\
    &\le \frac{1}{4}(e^{2\xi}-1)^2{\text{TV}}(P_k^{\calRO} || P^{\calD}_k)^2\sum_{d\in\mathcal{D}_k} \inf_{w\in\mathcal W^{\calRO}_k} f_{D_k|W_k}(d|w)\\
    &\le\frac{1}{4}(e^{2\xi}-1)^2{\text{TV}}(P_k^{\calRO} || P^{\calD}_k)^2.
\end{align*}

\end{proof}

With the above lemma, now we are ready to show \pref{lem: total_variation-privacy trade-off_app_1} when the private information $D$ is discrete, the analysis is similar to the continuous scenario. 
\begin{proof}
The square root of JS divergence satisfies triangle inequality, which implies that
\begin{align*}
    \sqrt{{\text{JS}}(F^{\calRO}_k || F^{\calO}_k)} - \sqrt{{\text{JS}}(F^{\calA}_k || F^{\calO}_k)}\le \sqrt{{\text{JS}}(F^{\calA}_k || F^{\calRO}_k)},
\end{align*}
where $\sqrt{{\text{JS}}(F^{\calRO}_k || F^{\calO}_k)}$ is a problem-dependent constant, $\sqrt{{\text{JS}}(F^{\calA}_k || F^{\calO}_k)}$ represents the privacy leakage, and $\sqrt{{\text{JS}}(F^{\calA}_k || F^{\calRO}_k)}\le \frac{1}{2}(e^{2\xi}-1){\text{TV}}(P_k^{\calRO} || P^{\calD}_k)$ from \pref{lem: JSBound_app}. Therefore, $\forall k\in [K]$, we have that
\begin{align*}
    \sqrt{{\text{JS}}(F^{\calRO}_k || F^{\calO}_k)}&\le\epsilon_{p,k} + \frac{1}{2}(e^{2\xi}-1){\text{TV}}(P_k^{\calRO} || P^{\calD}_k).
\end{align*}

\end{proof}



\begin{lem}\label{lem: total_variation-efficiency trade-off}
Let \pref{assump: assump_of_Xi_efficiency} hold, and $\epsilon_{e}$ be defined in \pref{defi: utility_loss}. Let $P_{k}^{\calRO}$ and $P_{k}^{\calD}$ represent the distribution of the federated model information before and after being protected. Let $\Xi = \min_{k\in[K]} \Xi_k$, and $\Gamma = \min_{k\in [K]}\Gamma_k$. Then we have,
\begin{align}
    \epsilon_{e} \ge\Xi\cdot\Gamma\cdot \frac{1}{K}\sum_{k = 1}^K {\text{TV}}(P^{\calRO}_{k} || P^{\calD}_{k} ).
\end{align}
\end{lem}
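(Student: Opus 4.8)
The plan is to establish the per-client bound $\epsilon_{e,k}\ge\Xi_k\,\Gamma_k\,{\text{TV}}(P_k^{\calRO}\|P_k^{\calD})$ for each $k\in[K]$ and then aggregate. The aggregation step is routine: summing over $k$, dividing by $K$, and using $\Xi=\min_k\Xi_k\le\Xi_k$ and $\Gamma=\min_k\Gamma_k\le\Gamma_k$ inside the sum of non-negative quantities gives $\epsilon_e=\frac1K\sum_{k=1}^K\epsilon_{e,k}\ge\Xi\,\Gamma\,\frac1K\sum_{k=1}^K{\text{TV}}(P_k^{\calRO}\|P_k^{\calD})$. So the content is the single-client estimate.

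For a fixed $k$, I would start from \pref{defi: efficiency_reduction} and write $\epsilon_{e,k}=\int_{\calW_k}C(w)\,(p_k^{\calD}(w)-p_k^{\calRO}(w))\,dw$. Because $\int_{\calW_k}(p_k^{\calD}-p_k^{\calRO})\,dw=0$, I may subtract an arbitrary reference cost; taking a reference drawn from the cost range attained on the ``density-decreasing'' set $\calV_k$ (for instance $C(w^{\calRO}_{\text{max}})$ with $w^{\calRO}_{\text{max}}=\arg\max_{w\in\calV_k}C(w)$) turns the $\calV_k$ contribution into a non-negative term, since there $p_k^{\calD}-p_k^{\calRO}<0$ and $C(w)-C(w^{\calRO}_{\text{max}})\le0$. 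After discarding that term, one is left with the integral over $\calU_k$ and the ``neutral'' remainder $\calW_k\setminus(\calU_k\cup\calV_k)$.

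On $\calU_k$ I would invoke the cost-profile information in \pref{assump: assump_of_Xi_efficiency}: the fact that $\Xi_k$ is the minimal shift with $\calW_{k,\Xi_k}^{-}=\calU_k$, together with the gap definition of $\calW_{k,\Xi_k}^{+}$, forces $C(w)-C(w^{\calRO}_{\text{max}})\ge\Xi_k$ on $\calW_{k,\Xi_k}^{+}$, and on $\calU_k$ we moreover have $p_k^{\calD}-p_k^{\calRO}\ge p_k^{\calRO}\ge0$; this produces a positive contribution of size at least $\Xi_k\int_{\calW_k}p_{W_k}^{\calRO}(w)\,\one\{w\in\calW_{k,\Xi_k}^{+}\}\,dw$, against which a correction controlled by $\int_{\calW_k}p_{W_k}^{\calD}(w)\,\one\{w\in\calW_{k,\Xi_k}^{-}\}\,dw$ must be subtracted. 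At this point the definition of $\Gamma_k$ in \pref{assump: assump_of_Xi_efficiency} is exactly the inequality needed, namely $\int p_{W_k}^{\calRO}\,\one\{w\in\calW_{k,\Xi_k}^{+}\}-\int p_{W_k}^{\calD}\,\one\{w\in\calW_{k,\Xi_k}^{-}\}\ge\Gamma_k\,{\text{TV}}(P_k^{\calRO}\|P_k^{\calD})$, so the two pieces combine to $\epsilon_{e,k}\ge\Xi_k\,\Gamma_k\,{\text{TV}}(P_k^{\calRO}\|P_k^{\calD})$.

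The hard part is the middle of this chain: handling the subsets where the protected density is larger than the unprotected one but the cost is not guaranteed to sit above the reference value — there the integrand is not sign-definite, so one must use the precise bookkeeping of \pref{assump: assump_of_Xi_efficiency} (the identity $\calW_{k,\Xi_k}^{-}=\calU_k$ and the two gap sets $\calW_{k,\Xi_k}^{\pm}$) to steer these terms into the $\Gamma_k$-inequality, and to ensure the constant that survives is the clean product $\Xi_k\Gamma_k$ rather than a weaker expression. Everything else — the reduction to a per-client statement, the choice of reference cost, and the final aggregation over $k$ — is straightforward.
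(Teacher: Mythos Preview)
Your plan mirrors the paper's proof: bound each $\epsilon_{e,k}$ by decomposing the integral over the sets $\mathcal{U}_k$ and $\mathcal{V}_k$, partition $\mathcal{U}_k$ via $\calW_{k,\Xi_k}^{\pm}$, invoke the $\Gamma_k$-inequality of \pref{assump: assump_of_Xi_efficiency} to obtain $\epsilon_{e,k}\ge\Xi_k\Gamma_k\,{\text{TV}}(P_k^{\calRO}\|P_k^{\calD})$, and then average over $k$ replacing $\Xi_k,\Gamma_k$ by their minima. The only cosmetic difference is that you subtract the reference level $C(w_{\max}^{\calRO})$ up front to render the $\mathcal{V}_k$ contribution non-negative before discarding it, whereas the paper carries $C(w)$ directly and absorbs the $\mathcal{V}_k$ integral together with the $\calW_{k,\Xi_k}^{-}$ term in the step justified by the definition of $\calW_{k,\Xi_k}^{+}$.
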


\begin{proof}

Let $\mathcal U_{k} = \{w\in\mathcal W_{k}: dP_{k}^{\calD}(w) - dP_{k}^{\calRO}(w)\ge 0\}$, and $\mathcal V_{k} = \{w\in\mathcal W_{k}: dP_{k}^{\calD}(w) - dP_{k}^{\calRO}(w)< 0\}$, where $\mathcal W_{k}$ represents the union of the supports of $P_{k}^{\calD}$ and $P_{k}^{\calRO}$. 

For any $w\in\mathcal V_{k}$, the definition of $\mathcal V_{k}$ implies that $dP_{k}^{\calRO}(w) > dP_{k}^{\calD}(w)\ge 0$. Therefore, $w$ belongs to the support of $P_{k}^{\calRO}$, which is denoted as $\mathcal W^{\calRO}_{k}$. Therefore we have that
\begin{align}\label{eq: subset_relationship_1_efficiency}
    \mathcal V_{k}\subset\mathcal W^{\calRO}_{k}.
\end{align}
Similarly, we have that
\begin{align}\label{eq: subset_relationship_n_efficiency}
    \mathcal U_{k}\subset\mathcal W^{\calD}_{k}.
\end{align}
Recall that $P_{k}^{\calD}$ represents the distribution of the aggregated parameter after being protected, and $p^{\calD}_{W_{k}}(w)$ represents the corresponding probability density function. 

$\calW_{k,\Xi_k}^{+}$ is defined as:
\begin{align*}
&\calW_{k,\Xi_k}^{+} = \left\{w\in\mathcal U_{k}: C(w^{\calRO}_{\text{max}}) + 2 \cdot \Xi_k \le C(w) \quad\text{and}\quad p_{W_{k}}^{\calD}(w) \ge 2 \cdot p^{\calRO}_{W_{k}}(w)\right\},
\end{align*}
where $w^{\calRO}_{\text{max}} = \arg\max_{w\in\calW_k^{\calRO}} C(w)$.


We denote $\calW_{k,\Xi_k}^{-} = \mathcal U_{k}\setminus\calW_{k,\Xi_k}^{+}$. From \pref{assump: assump_of_Xi_efficiency}, $\Gamma_k$ represents the maximum constant satisfying that
\begin{align}
    \int_{\mathcal U_{k}}(\one\{w\in\calW_{k,\Xi_k}^{+}\} - \one\{w\in\calW_{k,\Xi_k}^{-}\}) p^{\calD}_{W_{k}}(w) dw\ge\Gamma_k\cdot{\text{TV}}(P_{k}^{\calRO} || P_{k}^{\calD} ),
\end{align}
where $p^{\calD}_{W_{k}}$ denotes the probability density function of the protected model information and $p^{\calRO}_{W_{k}}$ denotes the probability density function of the original model information. Then we have

\begin{align*}
\epsilon_{e,k} &=\left[\mathbb E_{w\sim P_{k}^{\calD}}[C(w)] - \mathbb E_{w\sim P_{k}^{\calRO}}[C(w)]\right]\\
     &=\left[\int_{\mathcal W_{k}} C(w) dP^{\calD}_{k}(w) - \int_{\mathcal W_{k}} C(w)dP^{\calRO}_{k}(w)\right]\\
     &=\left[\int_{\mathcal U_{k}} C(w)[d P_{k}^{\calD}(w) - d P_{k}^{\calRO}(w)] - \int_{\mathcal{V}_{k}} C(w)[d P_{k}^{\calRO}(w) - d P_{k}^{\calD}(w)]\right]\\
&= \int_{\mathcal U_{k}} C(w)\one\{w\in\calW_{k,\Xi_k}^{+}\}[d P_{k}^{\calD}(w) - d P_{k}^{\calRO}(w)] + \int_{\mathcal U_{k}} C(w)\one\{w\in\calW_{k,\Xi_k}^{-}\}[d P_{k}^{\calD}(w) - d P_{k}^{\calRO}(w)]\\
& - \int_{\mathcal V_{k}} C(w)\one\{w\in\mathcal W^{\calRO}_{k}\}[d P_{k}^{\calRO}(w) - d P_{k}^{\calD}(w)]\\
&\ge 2\Xi_k\cdot\int_{\mathcal U_{k}}\one\{w\in\calW_{k,\Xi_k}^{+}\} (p^{\calD}_{W_{k}}(w) - p^{\calRO}_{W_{k}}(w)) dw - \Xi_k\cdot\int_{\mathcal U_{k}}\one\{w\in\calW_{k,\Xi_k}^{-}\} (p^{\calD}_{W_{k}}(w) - p^{\calRO}_{W_{k}}(w)) dw\\
&\ge 2\Xi_k\cdot\int_{\mathcal U_{k}}\one\{w\in\calW_{k,\Xi_k}^{+}\} \frac{p^{\calD}_{W_{k}}(w)}{2} dw - \Xi_k\cdot\int_{\mathcal U_{k}}\one\{w\in\calW_{k,\Xi_k}^{-}\} p^{\calD}_{W_{k}}(w) dw\\
&\ge\Xi_k\cdot\Gamma_k\cdot {\text{TV}}(P^{\calRO}_{k} || P^{\calD}_{k} ),
\end{align*}
where the first inequality is due to $C(w^{\calRO}_{\text{max}}) + 2\Xi_k \le C(w)$, $\forall w\in\calW_{k,\Xi_k}^{+}$, and $C(w^{\calRO}_{\text{max}}) - \Xi_k \le C(w)$, $\forall w\in\calU_k$, the second inequality is due to the  $p_{W_{k}}^{\calD}(w) - p^{\calRO}_{W_{k}}(w)\ge \frac{p_{W_{k}}^{\calD}(w)}{2}$, $\forall w\in\calW_{k,\Xi_k}^{+}$, and the third inequality is due to $\int_{\mathcal U_{k}}(\one\{w\in\calW_{k,\Xi_k}^{+}\} - \one\{w\in\calW_{k,\Xi_k}^{-}\}) p^{\calD}_{W_{k}}(w) dw\ge\Gamma_k\cdot{\text{TV}}(P_{k}^{\calRO} || P_{k}^{\calD} )$ from \pref{assump: assump_of_Xi_efficiency}. 

Therefore, we have that
\begin{align*}
    \epsilon_e = \frac{1}{K}\sum_{k = 1}^K \epsilon_{e,k}
    &\ge \frac{1}{K}\sum_{k = 1}^K \Xi_k\cdot\Gamma_k\cdot{\text{TV}}(P^{\calRO}_{k} || P^{\calD}_{k} )\\
    &\ge\Xi\cdot\Gamma\cdot \frac{1}{K}\sum_{k = 1}^K {\text{TV}}(P^{\calRO}_{k} || P^{\calD}_{k} ).
\end{align*}

\end{proof}



\section{Trade-off Between Privacy, Utility and Efficiency}\label{sec: trade_off_privacy_utility_efficiency}

First, we quantify the trade-off between privacy and efficiency. The following lemma illustrates that the summation of the privacy leakage and the efficiency reduction is lower bounded by a problem-dependent constant.

\begin{lem}[No free lunch theorem (NFL) for privacy and efficiency]\label{thm: efficiency-privacy trade-off_JSD_app}
Let $\epsilon_p$ be defined in Def. \ref{defi: average_privacy_JSD}, and let $\epsilon_e$ be defined in Def. \ref{defi: efficiency_reduction}, with \pref{assump: assump_of_Xi_efficiency} we have:
\begin{align*}
    \epsilon_{p} + C_x\cdot\epsilon_e\ge C_1,
\end{align*}
where $C_1 = \frac{1}{K}\sum_{k=1}^K \sqrt{{\text{JS}}(F^{\calRO}_k || F^{\calO}_k)}$ and $C_x = \frac{1}{2\Xi\Gamma}(e^{2\xi}-1)$.
\end{lem}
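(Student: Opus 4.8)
The plan is to chain together the two total-variation-based lemmas already proved earlier in the paper, eliminating the intermediate quantity $\frac{1}{K}\sum_{k=1}^K {\text{TV}}(P_k^{\calRO} || P^{\calD}_k)$. Concretely, \pref{lem: total_variation-privacy trade-off} (equivalently its discrete counterpart \pref{lem: total_variation-privacy trade-off_app_1}) gives the lower bound
\begin{align*}
\epsilon_p \ge C_1 - \frac{1}{K}\sum_{k=1}^K \frac{1}{2}(e^{2\xi}-1)\cdot {\text{TV}}(P_k^{\calRO} || P^{\calD}_k),
\end{align*}
while \pref{lem: total_variation-efficiency trade-off} (under \pref{assump: assump_of_Xi_efficiency}, with $\Xi = \min_k \Xi_k > 0$ and $\Gamma = \min_k \Gamma_k > 0$) gives
\begin{align*}
\epsilon_e \ge \Xi\cdot\Gamma\cdot \frac{1}{K}\sum_{k=1}^K {\text{TV}}(P^{\calRO}_k || P^{\calD}_k).
\end{align*}

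First I would rearrange the efficiency bound into $\frac{1}{K}\sum_{k=1}^K {\text{TV}}(P^{\calRO}_k || P^{\calD}_k) \le \frac{\epsilon_e}{\Xi\Gamma}$, which is legitimate precisely because $\Xi\Gamma>0$ under the assumption. Substituting this into the privacy bound and using $e^{2\xi}-1 \ge 0$ yields
\begin{align*}
\epsilon_p \ge C_1 - \frac{1}{2}(e^{2\xi}-1)\cdot\frac{\epsilon_e}{\Xi\Gamma} = C_1 - C_x\cdot\epsilon_e,
\end{align*}
with $C_x = \frac{1}{2\Xi\Gamma}(e^{2\xi}-1)$, and rearranging gives exactly $\epsilon_p + C_x\cdot\epsilon_e \ge C_1$, which is the claim.

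The argument is short because the work is carried by the two prior lemmas; the only points that need care are (i) checking that both lemmas produce the \emph{same} averaged per-client total-variation quantity $\frac{1}{K}\sum_k {\text{TV}}(P_k^{\calRO} || P_k^{\calD})$, so that one can be substituted directly into the other without a change of normalization, and (ii) verifying that the sign of $e^{2\xi}-1$ (nonnegative since $\xi \ge 0$) permits the inequality to be preserved when we upper-bound the TV term. I do not anticipate a real obstacle here; the substantive difficulty has already been absorbed into \pref{lem: total_variation-privacy trade-off_app_1} (via \pref{lem: JSBound_app} and the triangle inequality for $\sqrt{{\text{JS}}}$) and \pref{lem: total_variation-efficiency trade-off}. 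If anything, the one place to be explicit is that the bound is vacuous / trivially true when ${\text{TV}}(P_k^{\calRO} || P_k^{\calD}) = 0$ for all $k$, since then \pref{assump: assump_of_Xi_efficiency} fails; so the statement is understood to hold in the regime where that assumption is in force.
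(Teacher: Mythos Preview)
Your proposal is correct and follows essentially the same approach as the paper: invoke \pref{lem: total_variation-privacy trade-off} and \pref{lem: total_variation-efficiency trade-off}, then eliminate the common quantity $\frac{1}{K}\sum_{k=1}^K {\text{TV}}(P_k^{\calRO} \| P_k^{\calD})$ to obtain $\epsilon_p + C_x\cdot\epsilon_e \ge C_1$. Your added care points (matching normalization and the nonnegativity of $e^{2\xi}-1$) are exactly the checks needed, and the paper's proof is no more detailed than yours on these.
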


\begin{proof}

First, from \pref{lem: total_variation-privacy trade-off} we have that 

\begin{align*}
    \epsilon_p\ge \frac{1}{K}\sum_{k=1}^K \sqrt{{\text{JS}}(F^{\calRO}_k || F^{\calO}_k)} - \frac{1}{K}\sum_{k=1}^K \frac{1}{2}(e^{2\xi}-1)\cdot{\text{TV}}(P_k^{\calRO} || P^{\calD}_k),
\end{align*}
From \pref{lem: efficiency_reduction_and_tvd}, we have
\begin{align*}
    \epsilon_e\ge\Xi\cdot\Gamma\cdot \frac{1}{K}\sum_{k = 1}^K {\text{TV}}(P^{\calRO}_{k} || P^{\calD}_{k} ).
\end{align*}
Combining the above two equations, we have that
\begin{align*}
    \frac{1}{K}\sum_{k=1}^K \sqrt{{\text{JS}}(F^{\calRO}_k || F^{\calO}_k)} \le\epsilon_{p} + \frac{1}{2\Xi\Gamma}(e^{2\xi}-1)\epsilon_e,
\end{align*}
where $\epsilon_{p} = \frac{1}{K}\sum_{k=1}^K \epsilon_{p,k}$.\\
The above equation could be further simplified as
\begin{align*}
    C_1 &\le\epsilon_{p} + C_x\epsilon_e,
\end{align*}
where $C_1 = \frac{1}{K}\sum_{k=1}^K \sqrt{{\text{JS}}(F^{\calRO}_k || F^{\calO}_k)}$ and $C_x = \frac{1}{2\Xi\Gamma}(e^{2\xi}-1)$.
\end{proof}

The trade-off between privacy and utility was the main result shown in \cite{zhang2022no}, the analysis of which is applicable to a more general definition for utility loss (\pref{defi: utility_loss}). 
The following lemma illustrates that utility loss is lower bounded by the total variation distance between the unprotected and protected distributions. 
\begin{lem}\cite{zhang2022no}\label{lem: total_variation-utility trade-off_app}
Let \pref{assump: assump_of_Delta} hold, and $\epsilon_{u}$ be defined in \pref{defi: utility_loss}. Let $P_{\text{fed}}^{\calRO}$ and $P_{\text{fed}}^{\calD}$ represent the distribution of the aggregated parameter before and after being protected. Then we have,
\begin{align*}
    \epsilon_{u} \ge\frac{\Delta}{2}\cdot {\text{TV}}(P^{\calRO}_{\text{fed}} || P^{\calD}_{\text{fed}} ).
\end{align*}
\end{lem}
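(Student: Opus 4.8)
The plan is to rewrite the utility loss as the expected \emph{utility gap} $U^{*}-U(\cdot)$ under the protected distribution, and then to extract a factor $\Delta$ by truncating that expectation at the boundary of the near-optimal set $\calW_{\Delta}$.

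The first move is to use that, at the convergence step, the federated model trained without protection is utility-optimal: writing $U^{*}=\max_{w\in\calW_{\text{fed}}}U(w)$ for the optimal utility (attained on $\calW^{*}_{\text{fed}}$), this means $\mathbb{E}_{W^{\calRO}_{\text{fed}}\sim P^{\calRO}_{\text{fed}}}[U(W^{\calRO}_{\text{fed}})]=U^{*}$. Substituting into \pref{defi: utility_loss},
\begin{align*}
\epsilon_{u}=U^{*}-\mathbb{E}_{W^{\calD}_{\text{fed}}\sim P^{\calD}_{\text{fed}}}\bigl[U(W^{\calD}_{\text{fed}})\bigr]=\int_{\calW^{\calD}_{\text{fed}}}\bigl(U^{*}-U(w)\bigr)\,p^{\calD}_{W_{\text{fed}}}(w)\,dw ,
\end{align*}
where the integrand is nonnegative because $U^{*}$ is the maximal utility over $\calW_{\text{fed}}\supseteq\calW^{\calD}_{\text{fed}}$ (for a discrete federated parameter the integral becomes a sum, with no change in the argument).

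Next I would split the domain into $\calW_{\Delta}$ and $\calW^{\calD}_{\text{fed}}\setminus\calW_{\Delta}$. The part over $\calW_{\Delta}$ is nonnegative and is simply dropped. On $\calW^{\calD}_{\text{fed}}\setminus\calW_{\Delta}$, the definition of the near-optimal set in \pref{assump: assump_of_Delta} forces $U^{*}-U(w)=|U(w^{*})-U(w)|>\Delta$ (using $U(w^{*})=U^{*}$), hence
\begin{align*}
\epsilon_{u}\ge\Delta\int_{\calW^{\calD}_{\text{fed}}\setminus\calW_{\Delta}}p^{\calD}_{W_{\text{fed}}}(w)\,dw=\Delta\Bigl(1-\int_{\calW^{\calD}_{\text{fed}}}p^{\calD}_{W_{\text{fed}}}(w)\,\one\{w\in\calW_{\Delta}\}\,dw\Bigr).
\end{align*}
Invoking \pref{eq: Delta_Area} from \pref{assump: assump_of_Delta}, which bounds the cumulative density on $\calW_{\Delta}$ by $\tfrac12{\text{TV}}(P^{\calRO}_{\text{fed}} || P^{\calD}_{\text{fed}})$, gives $\epsilon_{u}\ge\Delta\bigl(1-\tfrac12{\text{TV}}(P^{\calRO}_{\text{fed}} || P^{\calD}_{\text{fed}})\bigr)$. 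Finally, any total variation distance is at most $1$, so $1-\tfrac12{\text{TV}}\ge\tfrac12{\text{TV}}$ and the claimed bound $\epsilon_{u}\ge\tfrac{\Delta}{2}{\text{TV}}(P^{\calRO}_{\text{fed}} || P^{\calD}_{\text{fed}})$ follows.

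The only delicate point is the first step: the argument genuinely relies on the unprotected federated model attaining the optimal utility, i.e.\ $\mathbb{E}_{P^{\calRO}_{\text{fed}}}[U]=U^{*}$ — dropping this, $\epsilon_{u}$ can even be negative while the right-hand side is strictly positive, so I would record this (convergence-step) premise explicitly. Everything after it is a one-line truncation plus the substitution from \pref{assump: assump_of_Delta}, so I expect no further obstacle.
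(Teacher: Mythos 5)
Your argument is valid as written, and the one premise it leans on is exactly the one you flag. Two contextual points first: this paper never proves the lemma itself --- it is imported verbatim from \cite{zhang2022no} --- so the closest in-paper comparison is the proof of \pref{lem: total_variation-efficiency trade-off}, which proceeds by splitting the signed difference of the two distributions over the sets $\calU_k,\calV_k$ where $P^{\calD}_k-P^{\calRO}_k$ is nonnegative/negative and only then invoking the assumption. Your truncation argument (bound $\mathbb E_{P^{\calD}_{\text{fed}}}[U]$ by $U^{*}-\Delta$ off $\calW_{\Delta}$, then use \pref{eq: Delta_Area} and $\mathrm{TV}\le 1$) sidesteps that bookkeeping and is the more direct route, but it only goes through because you replace the ``convergence step'' phrase by the concrete hypothesis $\mathbb E_{P^{\calRO}_{\text{fed}}}[U(W^{\calRO}_{\text{fed}})]=U^{*}$.

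On that hypothesis: you are right that something of this kind is indispensable, since \pref{assump: assump_of_Delta} constrains only $P^{\calD}_{\text{fed}}$; e.g.\ if $P^{\calRO}_{\text{fed}}$ is a point mass at a parameter of middling utility while $P^{\calD}_{\text{fed}}$ splits its mass between a near-optimal and a poor parameter with the near-optimal mass kept below $\tfrac12\mathrm{TV}$, the assumption holds yet $\epsilon_u\le 0$ while $\tfrac{\Delta}{2}\mathrm{TV}>0$. So flagging the premise is a genuine and necessary observation, and it is consistent with the ``at the convergence step'' wording of \pref{lem: utility-privacy trade-off_JSD_mt} and \pref{thm: utility-privacy-efficiency Trade-off_JSD_mt}. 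However, exact expected optimality is stronger than needed and makes your bound oddly loose: your chain actually yields $\epsilon_u\ge \Delta\bigl(1-\tfrac12\mathrm{TV}(P^{\calRO}_{\text{fed}}||P^{\calD}_{\text{fed}})\bigr)\ge \tfrac{\Delta}{2}$, and the stated inequality is recovered only through the lossy step $1-\tfrac12\mathrm{TV}\ge\tfrac12\mathrm{TV}$. That gap between what you prove and what the lemma claims is a hint that the intended hypothesis is weaker; indeed your own computation shows that $\mathbb E_{P^{\calRO}_{\text{fed}}}[U(W^{\calRO}_{\text{fed}})]\ge U^{*}-\Delta\bigl(1-\mathrm{TV}(P^{\calRO}_{\text{fed}}||P^{\calD}_{\text{fed}})\bigr)$ already suffices, so I would record the premise in this relaxed (near-optimality) form rather than as almost-sure optimality of the unprotected federated model.
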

With \pref{lem: total_variation-privacy trade-off_app_1} and \pref{lem: total_variation-utility trade-off_app}, it is now natural to provide a quantitative relationship between the utility loss and the privacy leakage (\pref{thm: utility-privacy trade-off_JSD_app}).

\begin{thm}\cite{zhang2022no}[No free lunch theorem (NFL) for privacy and utility]\label{thm: utility-privacy trade-off_JSD_app} 
Let $\epsilon_p$ be defined in Def. \ref{defi: average_privacy_JSD}, and let $\epsilon_u$ be defined in Def. \ref{defi: utility_loss} at the convergence step, with \pref{assump: assump_of_Delta} we have:
\begin{align}\label{eq: total_variation-privacy trade-off_app_02}
 \epsilon_{p} + C_d\cdot \epsilon_{u}\ge C_1,
\end{align}
where $\xi_k$=$\max_{w\in \mathcal{W}_k, d \in \mathcal{D}_k} \left|\log\left(\frac{f_{D_k|W_k}(d|w)}{f_{D_k}(d)}\right)\right|$, and $\xi$=$\max_{k\in [K]} \xi_k$ represents the maximum privacy leakage over all possible information $w$ released by client $k$, and $[K] = \{1,2,\cdots, K\}$, $C_1 = \frac{1}{K}\sum_{k=1}^K \sqrt{{\text{JS}}(F^{\calRO}_k || F^{\calO}_k)}$, $C_d = \frac{\gamma}{4\Delta}(e^{2\xi}-1)$, and $\gamma = \frac{\frac{1}{K}\sum_{k=1}^K {\text{TV}}(P_k^{\calRO} || P^{\calD}_k)}{{\text{TV}}(P^{\calRO}_{\text{fed}} || P^{\calD}_{\text{fed}} )}$.
\end{thm}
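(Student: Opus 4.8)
The plan is to derive the stated inequality $\epsilon_{p}+C_d\cdot\epsilon_{u}\ge C_1$ by chaining the two one‑sided total‑variation bounds that have already been established, using the scalar $\gamma$ purely as an accounting device that converts the per‑client distortions into the federated distortion. No new analytic machinery is needed beyond \pref{lem: total_variation-privacy trade-off} (equivalently \pref{lem: total_variation-privacy trade-off_app_1}) and \pref{lem: total_variation-utility trade-off_mt} (equivalently the cited \pref{lem: total_variation-utility trade-off_app}); this theorem is simply the place where those two bounds get repackaged into a single additive trade‑off.

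First I would invoke \pref{lem: total_variation-privacy trade-off} to record the lower bound on privacy leakage,
\[
\epsilon_p \ge C_1 - \frac{1}{2}(e^{2\xi}-1)\cdot\frac{1}{K}\sum_{k=1}^{K}{\text{TV}}(P_k^{\calRO} || P^{\calD}_k),
\]
with $C_1=\frac{1}{K}\sum_{k}\sqrt{{\text{JS}}(F_k^{\calRO} || F^{\calO}_k)}$. This is where the genuine content sits: that bound follows from the triangle inequality for $\sqrt{{\text{JS}}}$ together with \pref{lem: JSBound_app}, i.e. ${\text{JS}}(F_k^{\calA} || F^{\calRO}_k)\le\frac{1}{4}(e^{2\xi}-1)^2{\text{TV}}(P_k^{\calRO} || P^{\calD}_k)^2$, which in turn rests on the $\xi$‑boundedness of the likelihood ratio $f_{D_k|W_k}/f_{D_k}$ used to control $|f^{\calA}_{D_k}(d)-f^{\calRO}_{D_k}(d)|$. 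All of this is available to me as a prior result.

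Next I would bring in \pref{lem: total_variation-utility trade-off_mt}, which gives $\epsilon_u\ge\frac{\Delta}{2}{\text{TV}}(P^{\calRO}_{\text{fed}} || P^{\calD}_{\text{fed}})$, equivalently ${\text{TV}}(P^{\calRO}_{\text{fed}} || P^{\calD}_{\text{fed}})\le\frac{2}{\Delta}\epsilon_u$; here \pref{assump: assump_of_Delta} is exactly what supplies $\Delta>0$ so that this rearrangement is legitimate, and (per the Remark) the degenerate case ${\text{TV}}(P^{\calRO}_{\text{fed}} || P^{\calD}_{\text{fed}})=0$ is excluded since then the assumption fails, $\epsilon_u=0$, and $\gamma$ is undefined. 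The bridge between the two displays is the defining identity of $\gamma$, namely $\frac{1}{K}\sum_{k}{\text{TV}}(P_k^{\calRO} || P^{\calD}_k)=\gamma\cdot{\text{TV}}(P^{\calRO}_{\text{fed}} || P^{\calD}_{\text{fed}})$. Substituting the utility bound through this identity into the privacy bound yields
\[
\epsilon_p \ge C_1 - \frac{1}{2}(e^{2\xi}-1)\,\gamma\cdot{\text{TV}}(P^{\calRO}_{\text{fed}} || P^{\calD}_{\text{fed}}) \ge C_1 - C_d\,\epsilon_u,
\]
and transposing $C_d\epsilon_u$ gives exactly $\epsilon_p+C_d\epsilon_u\ge C_1$, with $C_d=\frac{\gamma}{4\Delta}(e^{2\xi}-1)$ obtained by collecting the numerical factors (the $\frac12$ from the privacy lemma, the $\frac{2}{\Delta}$ from the utility lemma, and the $\frac1K$ normalization built into $\gamma$).

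Since both input lemmas are assumed, there is essentially no obstacle: the argument is a two‑line substitution plus a transposition. The single point that requires care is the constant bookkeeping — keeping the $\frac12$ in \pref{lem: total_variation-privacy trade-off}, the $\frac{2}{\Delta}$ coming out of \pref{lem: total_variation-utility trade-off_mt}, and the precise form of $\gamma$ (with the $\frac1K$) mutually consistent so that the coefficient on $\epsilon_u$ lands exactly at $C_d=\frac{\gamma}{4\Delta}(e^{2\xi}-1)$ and not a spuriously larger or smaller multiple. All of the actual difficulty has been pushed upstream into \pref{lem: JSBound_app}/\pref{lem: total_variation-privacy trade-off} and into the area argument behind \pref{assump: assump_of_Delta} that underlies \pref{lem: total_variation-utility trade-off_mt}.
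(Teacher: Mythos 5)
Your route---chaining \pref{lem: total_variation-privacy trade-off} with \pref{lem: total_variation-utility trade-off_mt} through the identity $\frac{1}{K}\sum_{k}{\text{TV}}(P_k^{\calRO} || P^{\calD}_k)=\gamma\cdot{\text{TV}}(P^{\calRO}_{\text{fed}} || P^{\calD}_{\text{fed}})$---is exactly the argument the paper intends (it gives no standalone proof of this theorem, deferring to \cite{zhang2022no}, and the appendix introduces it as a consequence of precisely those two lemmas). The gap is in the one step you yourself flagged as delicate: the constant bookkeeping does not land at $C_d$. From the privacy lemma you pick up $\frac{1}{2}(e^{2\xi}-1)$ per unit of $\frac{1}{K}\sum_k{\text{TV}}(P_k^{\calRO} || P^{\calD}_k)$, and from $\epsilon_u\ge\frac{\Delta}{2}{\text{TV}}(P^{\calRO}_{\text{fed}} || P^{\calD}_{\text{fed}})$ you pick up $\frac{2}{\Delta}$; their product is $\frac{1}{\Delta}(e^{2\xi}-1)$ (the $\frac1K$ is already absorbed into $\gamma$), so the substitution yields
\[
\epsilon_{p}+\frac{\gamma}{\Delta}(e^{2\xi}-1)\,\epsilon_{u}\ \ge\ C_1,
\]
i.e.\ the coefficient $4C_d$, not $C_d=\frac{\gamma}{4\Delta}(e^{2\xi}-1)$. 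Since $\epsilon_u\ge\frac{\Delta}{2}{\text{TV}}(P^{\calRO}_{\text{fed}} || P^{\calD}_{\text{fed}})\ge 0$ under \pref{assump: assump_of_Delta}, this is a strictly weaker inequality than the one claimed; the second ``$\ge$'' in your final display silently requires $\epsilon_u\ge 2\Delta\cdot{\text{TV}}(P^{\calRO}_{\text{fed}} || P^{\calD}_{\text{fed}})$, which is four times stronger than what \pref{lem: total_variation-utility trade-off_mt} supplies.

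Note that this mismatch is partly an inconsistency of the statement itself rather than of your strategy: the identical computation correctly reproduces $C_x=\frac{1}{2\Xi\Gamma}(e^{2\xi}-1)$ in \pref{thm: efficiency-privacy trade-off_JSD_app}, but here it gives $\frac{\gamma}{\Delta}(e^{2\xi}-1)$, so the $\frac{1}{4\Delta}$ in $C_d$ cannot be obtained from the lemmas as stated in this paper and is inherited from the formulation in \cite{zhang2022no}. To close your proof you must either establish the sharper utility--distortion bound $\epsilon_u\ge 2\Delta\cdot{\text{TV}}(P^{\calRO}_{\text{fed}} || P^{\calD}_{\text{fed}})$ (not available from \pref{lem: total_variation-utility trade-off_mt}) or weaken the conclusion to the coefficient $\frac{\gamma}{\Delta}(e^{2\xi}-1)$ in place of $C_d$; as written, your argument does not prove the stated constant, and asserting that the factors ``land exactly at $C_d$'' is an arithmetic error.
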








With \pref{thm: efficiency-privacy trade-off_JSD_app} and \pref{thm: utility-privacy trade-off_JSD_app}, our main result \pref{thm: utility-privacy-efficiency Trade-off_JSD_mt} is proven.  

\section{Analysis for Randomization Mechanism} \label{sec: analysis_for_Randomization}

\textit{Randomization} mechanism adds random noise such as Gaussian noise to model gradients \cite{abadi2016deep,geyer2017differentially,truex2020ldp}.

\begin{itemize}
    \item Let $W_k^{\calRO}$ be the model information sampled from distribution $P_k^{\calRO} = \calN(\mu_0,\Sigma_0)$, where $\mu_0 \in \mathbb{R}^n$, $\Sigma_0 = \text{diag}(\sigma_{1}^2,\cdots, \sigma_{m}^2)$ is a diagonal matrix.
    \item The protected model information $W_k^{\calD} = W_k^{\calRO} + \epsilon_k$, where $\epsilon_k \sim \calN(0, \Sigma_\epsilon)$ and $\Sigma_\epsilon = \text{diag}(\sigma_\epsilon^2, \cdots, \sigma_\epsilon^2)$. Therefore, $W_k^{\calD}$ follows the distribution $P_k^{\calD} = \calN(\mu_0, \Sigma_0+ \Sigma_\epsilon)$.
    \item The protected model information $W_{\text{fed}}^{\calD} = \frac{1}{K}\sum_{k=1}^K (W_k^{\calRO} + \epsilon_k)$ follows distribution $P^{\calD}_{\text{fed}} = \calN(\mu_0, \Sigma_0/K+ \Sigma_\epsilon/K)$.
\end{itemize}


The following lemmas establish bounds for the privacy leakage and utility loss, and efficiency reduction using the variance of the noise $\sigma_\epsilon^2$.

\begin{lem}\label{lem: epsilon_p_randomization_mechanism}
For randomization mechanism, the privacy leakage is bounded by
\begin{align*}
    \epsilon_p\ge C_1 - \frac{3C_2}{2}\cdot\min\left\{1, \sigma_\epsilon^2\sqrt{\sum_{i=1}^{m}\frac{1}{\sigma_i^4}} \right\}, 
\end{align*}
and the efficiency reduction is  
\begin{align*}
    \epsilon_e\ge\frac{\Xi\Gamma}{100}\min\left\{1, \sigma_\epsilon^2\sqrt{\sum_{i=1}^{m}\frac{1}{\sigma_i^4}} \right\}. 
\end{align*}
\end{lem}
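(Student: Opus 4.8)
The plan is to deduce both inequalities from the two general trade-off lemmas already established, reducing everything to a two-sided estimate of the total variation distance between the Gaussians $P_k^{\calRO}=\calN(\mu_0,\Sigma_0)$ and $P_k^{\calD}=\calN(\mu_0,\Sigma_0+\Sigma_\epsilon)$. Indeed, \pref{lem: total_variation-privacy trade-off} gives $\epsilon_p\ge C_1-\frac{1}{K}\sum_{k=1}^K C_2\cdot{\text{TV}}(P_k^{\calRO}||P_k^{\calD})$, while \pref{lem: efficiency_reduction_and_tvd} gives $\epsilon_e\ge\Xi\cdot\Gamma\cdot\frac{1}{K}\sum_{k=1}^K{\text{TV}}(P_k^{\calRO}||P_k^{\calD})$. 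Under the randomization mechanism $P_k^{\calRO}$ and $P_k^{\calD}$ are identical for every $k$, so both averages collapse to a single quantity $\tau:={\text{TV}}(\calN(\mu_0,\Sigma_0)||\calN(\mu_0,\Sigma_0+\Sigma_\epsilon))$, and it suffices to prove
\[
\frac{1}{100}\cdot\min\left\{1,\ \sigma_\epsilon^2\sqrt{\sum_{i=1}^{m}\frac{1}{\sigma_i^4}}\right\}\ \le\ \tau\ \le\ \frac{3}{2}\cdot\min\left\{1,\ \sigma_\epsilon^2\sqrt{\sum_{i=1}^{m}\frac{1}{\sigma_i^4}}\right\}.
\]
Feeding the right-hand bound into the privacy inequality (and using $C_2\ge 0$) produces $\epsilon_p\ge C_1-\frac{3C_2}{2}\min\{\cdots\}$, and feeding the left-hand bound into the efficiency inequality (and using $\Xi,\Gamma>0$) produces $\epsilon_e\ge\frac{\Xi\Gamma}{100}\min\{\cdots\}$.

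To prove the TV estimate, I would first use translation invariance of total variation to center both Gaussians, so that $\tau={\text{TV}}(\calN(0,\Sigma_0)||\calN(0,\Sigma_0+\Sigma_\epsilon))$. Since $\Sigma_0$ and $\Sigma_\epsilon$ are diagonal they commute, and the governing matrix discrepancy is
\[
\Sigma_0^{-1}(\Sigma_0+\Sigma_\epsilon)-I_m=\Sigma_0^{-1}\Sigma_\epsilon=\operatorname{diag}\left(\frac{\sigma_\epsilon^2}{\sigma_1^2},\ \dots,\ \frac{\sigma_\epsilon^2}{\sigma_m^2}\right),
\]
whose Frobenius norm is exactly $\sigma_\epsilon^2\sqrt{\sum_{i=1}^{m}1/\sigma_i^4}$. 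I would then invoke the sharp two-sided bound on the total variation distance between centered multivariate Gaussians with differing covariances due to Devroye, Mehrabian and Reddad, namely $\frac{1}{100}\min\{1,\|\Sigma_1^{-1}\Sigma_2-I_m\|_F\}\le{\text{TV}}(\calN(0,\Sigma_1)||\calN(0,\Sigma_2))\le\frac{3}{2}\min\{1,\|\Sigma_1^{-1}\Sigma_2-I_m\|_F\}$, applied with $\Sigma_1=\Sigma_0$ and $\Sigma_2=\Sigma_0+\Sigma_\epsilon$; substituting the Frobenius norm just computed yields the displayed two-sided estimate for $\tau$, finishing the proof.

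The main obstacle is precisely this last ingredient. The upper half is elementary on its own: one can compute the closed form $\text{KL}(\calN(0,\Sigma_0)||\calN(0,\Sigma_0+\Sigma_\epsilon))=\frac{1}{2}\sum_{i=1}^{m}\left[\log\left(1+\frac{\sigma_\epsilon^2}{\sigma_i^2}\right)-\frac{\sigma_\epsilon^2}{\sigma_i^2+\sigma_\epsilon^2}\right]$, bound each summand by $\frac{1}{2}(\sigma_\epsilon^2/\sigma_i^2)^2$ via $\log(1+t)-t/(1+t)\le t^2/2$, and apply Pinsker's inequality (this route even improves the constant below $3/2$). The matching lower bound, however, does not follow from any reverse-Pinsker-type inequality and genuinely requires the Devroye--Mehrabian--Reddad analysis, for example lower-bounding $\tau$ by the probability that a sample lands in a carefully chosen ellipsoidal shell on which the log-likelihood ratio has controlled sign and magnitude. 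I would therefore cite that theorem rather than reprove it, leaving the rest of the argument as the short reduction above.
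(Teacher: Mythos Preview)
Your proposal is correct and follows essentially the same route as the paper: reduce to \pref{lem: total_variation-privacy trade-off} and \pref{lem: efficiency_reduction_and_tvd}, then plug in the two-sided total variation estimate for the Gaussians. The paper simply cites the two-sided bound as Lemma~C.2 of \cite{zhang2022no} (which is exactly the Devroye--Mehrabian--Reddad inequality you invoke), whereas you unpack its content and sketch why the lower half is the nontrivial part; structurally there is no difference.
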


\begin{proof}
From \pref{lem: total_variation-privacy trade-off} we have that

\begin{align*}
\epsilon_{p}\ge\frac{1}{K}\sum_{k=1}^K \sqrt{{\text{JS}}(F^{\calRO}_k || F^{\calO}_k)} - \frac{1}{K}\sum_{k=1}^K \frac{1}{2}(e^{2\xi}-1)\cdot {\text{TV}}(P_k^{\calRO} || P^{\calD}_k).
\end{align*}
From Lemma C.2 of \cite{zhang2022no}, we have that
\begin{align}\label{eq: lower_bound_and_upper_bound}
    \frac{1}{100}\min\left\{1, \sigma_\epsilon^2\sqrt{\sum_{i=1}^{m}\frac{1}{\sigma_i^4}} \right\}\le{\text{TV}}(P_k^{\calRO} || P^{\calD}_k )\le\frac{3}{2}\min\left\{1, \sigma_\epsilon^2\sqrt{\sum_{i=1}^{m}\frac{1}{\sigma_i^4}} \right\}.
\end{align}
Then we have that
\begin{align*}
    \epsilon_p\ge C_1 - \frac{3C_2}{2}\cdot\min\left\{1, \sigma_\epsilon^2\sqrt{\sum_{i=1}^{m}\frac{1}{\sigma_i^4}} \right\}. 
\end{align*}
From \pref{lem: efficiency_reduction_and_tvd} and \pref{eq: lower_bound_and_upper_bound}, we have that

\begin{align*}
    \epsilon_{e} &\ge\frac{1}{K}\sum_{k=1}^K\Xi\cdot\Gamma\cdot {\text{TV}}(P^{\calRO}_{k} || P^{\calD}_{k} )\\
    &\ge\frac{\Xi\Gamma}{100}\min\left\{1, \sigma_\epsilon^2\sqrt{\sum_{i=1}^{m}\frac{1}{\sigma_i^4}} \right\}.
\end{align*}

\end{proof}

\begin{lem}\label{lem: epsilon_u_randomization_mechanism}
For randomization mechanism, the utility loss is bounded by
\begin{align*}
    \epsilon_u\ge\frac{\Delta}{200}\cdot\min\left\{1, \sigma_\epsilon^2\sqrt{\sum_{i=1}^{m}\frac{1}{\sigma_i^4}} \right\}. 
\end{align*}
\end{lem}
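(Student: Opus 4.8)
The plan is to chain \pref{lem: total_variation-utility trade-off_mt} with the closed-form estimate of the total variation distance between the unprotected and protected Gaussian laws of the \emph{federated} model. By \pref{lem: total_variation-utility trade-off_mt}, under \pref{assump: assump_of_Delta},
\[
\epsilon_u \ge \frac{\Delta}{2}\cdot \text{TV}(P^{\calRO}_{\text{fed}} || P^{\calD}_{\text{fed}}),
\]
so the whole task reduces to lower bounding $\text{TV}(P^{\calRO}_{\text{fed}} || P^{\calD}_{\text{fed}})$ by $\tfrac{1}{100}\min\{1,\sigma_\epsilon^2\sqrt{\sum_{i=1}^m 1/\sigma_i^4}\}$, after which multiplying by $\Delta/2$ gives exactly the claimed constant $\Delta/200$.

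To get that TV lower bound, recall from the setup of the randomization mechanism that $P^{\calRO}_{\text{fed}} = \calN(\mu_0,\Sigma_0/K)$ and $P^{\calD}_{\text{fed}} = \calN(\mu_0,(\Sigma_0+\Sigma_\epsilon)/K)$: two Gaussians with equal mean and diagonal covariances, with per-coordinate ``signal'' variances $\sigma_i^2/K$ and common ``noise'' variance $\sigma_\epsilon^2/K$. I would then invoke Lemma C.2 of \cite{zhang2022no} — the very estimate already used in \pref{lem: epsilon_p_randomization_mechanism} to two-side-bound $\text{TV}(P_k^{\calRO}||P_k^{\calD})$ — but applied with the substitutions $\sigma_i^2 \mapsto \sigma_i^2/K$ and $\sigma_\epsilon^2 \mapsto \sigma_\epsilon^2/K$. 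The key observation is that the argument of the $\min$ is scale-invariant in this direction: $\tfrac{\sigma_\epsilon^2}{K}\sqrt{\sum_{i=1}^m \tfrac{K^2}{\sigma_i^4}} = \sigma_\epsilon^2\sqrt{\sum_{i=1}^m \tfrac{1}{\sigma_i^4}}$, so the aggregation factor $K$ cancels and
\[
\text{TV}(P^{\calRO}_{\text{fed}} || P^{\calD}_{\text{fed}}) \ge \frac{1}{100}\min\left\{1,\ \sigma_\epsilon^2\sqrt{\sum_{i=1}^m \frac{1}{\sigma_i^4}}\right\}.
\]
Combining the two displays yields $\epsilon_u \ge \tfrac{\Delta}{2}\cdot\tfrac{1}{100}\min\{1,\sigma_\epsilon^2\sqrt{\sum_i 1/\sigma_i^4}\} = \tfrac{\Delta}{200}\min\{1,\sigma_\epsilon^2\sqrt{\sum_i 1/\sigma_i^4}\}$, which is the assertion.

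The only genuinely non-mechanical point — and hence the ``main obstacle'' — is checking that Lemma C.2 of \cite{zhang2022no} is stated in enough generality to absorb the $1/K$ rescaling of both covariances (it is, since it concerns arbitrary coordinatewise variances), and confirming the cancellation of $K$ is legitimate rather than an artifact of the $\min$; everything else is bookkeeping. If one preferred not to lean on the cited lemma, an alternative is to bound $\text{TV}$ directly: the KL divergence between $\calN(0,\Sigma_0/K)$ and $\calN(0,(\Sigma_0+\Sigma_\epsilon)/K)$ equals $\tfrac12\sum_{i=1}^m\big[\tfrac{\sigma_i^2}{\sigma_i^2+\sigma_\epsilon^2}-1-\log\tfrac{\sigma_i^2}{\sigma_i^2+\sigma_\epsilon^2}\big]$ — again $K$-free — and then a Pinsker-type two-sided comparison reproduces the same $\min\{1,\sigma_\epsilon^2\sqrt{\sum_i 1/\sigma_i^4}\}$ behaviour up to constants; but reusing Lemma C.2 keeps the constant exactly $1/100$ and is the cleanest route.
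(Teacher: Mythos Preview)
Your proposal is correct and follows essentially the same approach as the paper: both invoke \pref{lem: total_variation-utility trade-off_mt} to reduce to a TV lower bound, then apply Lemma~C.2 of \cite{zhang2022no} to the federated Gaussian distributions. Your explicit verification that the $1/K$ rescaling cancels in the argument of the $\min$ is more detailed than the paper, which simply asserts the TV bound for $P^{\calRO}_{\text{fed}}$ and $P^{\calD}_{\text{fed}}$ without spelling out the substitution.
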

\begin{proof}
From Lemma C.2 of \cite{zhang2022no}, we have that
\begin{align}\label{eq: D_2_1}
    {\text{TV}}(P^{\calRO}_{\text{fed}} || P^{\calD}_{\text{fed}} )\ge \frac{1}{100}\min\left\{1, \sigma_\epsilon^2\sqrt{\sum_{i=1}^{m}\frac{1}{\sigma_i^4}} \right\}.
\end{align}
From \pref{lem: total_variation-utility trade-off_mt}, we have that
\begin{align}\label{eq: D_2_2}
    \epsilon_{u} \ge\frac{\Delta}{2}\cdot {\text{TV}}(P^{\calRO}_{\text{fed}} || P^{\calD}_{\text{fed}} ).
\end{align}
Combining the \pref{eq: D_2_1} and \pref{eq: D_2_2}, we have
\begin{align*}
    \epsilon_u\ge\frac{\Delta}{200}\cdot\min\left\{1, \sigma_\epsilon^2\sqrt{\sum_{i=1}^{m}\frac{1}{\sigma_i^4}} \right\}. 
\end{align*}

\end{proof}

\section{Analysis for Paillier Homomorphic Encryption}\label{sec: analysis_for_Paillier}



The \textbf{Paillier} encryption mechanism was proposed by \cite{paillier1999public} is an asymmetric additive homomorphic encryption mechanism, which was widely applied in FL \cite{zhang2019pefl, aono2017privacy, truex2019hybrid, cheng2021secureboost}. We first introduce the basic definition of Paillier algorithm in federated learning \cite{fang2021privacy}. Paillier encyption contains three parts including key generation, encryption and decryption. Let $h$ represent the plaintext, and $c$ represent the ciphertext.

\paragraph{Key Generation} Let ($n,g$) represent the public key, and ($\lambda, \mu$) represent the private key. Select two primes $p$ and $q$ that are rather large, satisfying that $\text{gcd}(pq, (p-q)(q-1)) = 1$. Select $g$ randomly satisfying that $g\in \mathbb Z_{n^2}^*$. Let $n = p \cdot q$, $\lambda = \text{lcm} (p-1, q-1)$, and $\mu = (L(g^{\lambda}\text{ mod }n^2))^{-1}\text{ mod }n$.

\paragraph{Encryption} Randomly select $r$ and encode $h$ as:

\begin{align*}
    c = g^h\cdot r^n \text{ mod } n^2,
\end{align*}
where $n = p \cdot q$, $p$ and $q$ are two selected primes. Note that $g$ is an integer selected randomly, and $g\in \mathbb Z_{n^2}^*$. Therefore, $n$ can divide the order of $g$.\\ 

\paragraph{Decryption} Using the private key $(\lambda, \mu)$ to decrypt the ciphertext $c$ as:
\begin{align*}
    h = L(c^{\lambda}\text{ mod }n^2)\cdot \mu \text{ mod } n,
\end{align*}
where $L(x) = \frac{x-1}{n}$, $\mu = (L(g^{\lambda}\text{ mod }n^2))^{-1}\text{ mod }n$, and $\lambda = \text{lcm} (p-1, q-1)$.

Let $m$ represent the dimension of the parameter.
\begin{itemize}
    \item Let $W_k^{\calRO}$ represent the plaintext that follows a uniform distribution over $[a_k^1 - \delta, a_k^1 + \delta]\times [a_k^2 - \delta, a_k^2 + \delta]\times\cdots\times [a_k^{m} - \delta, a_k^{m} + \delta]$.
    \item Assume that the ciphertext $W_k^{\calD}$ follows a uniform distribution over $[0,n^2-1]^{m}$.
    \item Let $W_{\text{fed}}^{\calRO}$ represent the federated plaintext that follows a uniform distribution over $[\bar a^1 - \delta, \bar a^1 + \delta]\times [\bar a^2 - \delta, \bar a^2 + \delta]\cdots\times [\bar a^{m} - \delta, \bar a^{m} + \delta]$, where $\bar a_i = \sum_{k = 1}^K a_k^i$.
    \item The federated ciphertext $W_{\text{fed}}^{\calD}$ follows a uniform distribution over $[0,n^2-1]^{m}$.
\end{itemize}

Intuitively, longer ciphertext should theoretically increase efficiency and decrease privacy leakage. The following lemma provides lower bounds for privacy leakage and efficiency reduction for Paillier mechanism.
\begin{lem}\label{lem: epsilon_p_and_one_over_n_app_01}
For Paillier mechanism, the privacy leakage is bounded by
\begin{align}\label{eq privacy_bound_app_02}
    \epsilon_p \ge C_1 - C_2\cdot\left[ 1 - \left(\frac{2\delta}{n^2}\right)^m\right],
\end{align}
and the efficiency reduction is bounded by
\begin{align*}
    \epsilon_e \ge \Xi\cdot\Gamma\cdot\left[1 - \left(\frac{2\delta}{n^2}\right)^{m}\right]. 
\end{align*}
\end{lem}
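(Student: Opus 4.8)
The plan is to reduce the whole lemma to one elementary computation --- the total variation distance ${\text{TV}}(P_k^{\calRO}\| P_k^{\calD})$ between the (relaxed) uniform plaintext law on the box $\prod_{j=1}^{m}[a_k^j-\delta,\,a_k^j+\delta]$ and the uniform ciphertext law on $[0,n^2-1]^m$ --- and then feed this quantity into the two general bounds already established. First I would compute the overlap. Since the plaintext coordinates are small relative to the modulus, the plaintext box $B$, of volume $(2\delta)^m$, is contained in the ciphertext cube $A$, of volume (essentially) $(n^2)^m$; this containment is the only place the informal hypotheses ``$\delta$ small, $n$ large'' enter. Writing $p_k^{\calRO}=\frac{1}{(2\delta)^m}\one_B$ and $p_k^{\calD}=\frac{1}{(n^2)^m}\one_A$, and using $\frac{1}{(n^2)^m}\le\frac{1}{(2\delta)^m}$ so that $\min(p_k^{\calRO},p_k^{\calD})=\frac{1}{(n^2)^m}\one_B$, the identity ${\text{TV}}(P\|Q)=1-\int\min(p,q)$ yields
\[
{\text{TV}}(P_k^{\calRO}\| P_k^{\calD}) \;=\; 1-\frac{(2\delta)^m}{(n^2)^m} \;=\; 1-\left(\frac{2\delta}{n^2}\right)^m ,
\]
and crucially this value is the same for every $k$.

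Next I would substitute. Plugging the displayed identity into \pref{lem: total_variation-privacy trade-off} gives $\epsilon_p \ge C_1 - \frac{1}{K}\sum_{k=1}^{K} C_2\bigl[1-(2\delta/n^2)^m\bigr] = C_1 - C_2\bigl[1-(2\delta/n^2)^m\bigr]$, which is the claimed privacy bound. Plugging the same identity into \pref{lem: efficiency_reduction_and_tvd}, with $\Xi=\min_k\Xi_k$ and $\Gamma=\min_k\Gamma_k$, gives $\epsilon_e \ge \Xi\Gamma\cdot\frac{1}{K}\sum_{k=1}^{K}\bigl[1-(2\delta/n^2)^m\bigr] = \Xi\Gamma\bigl[1-(2\delta/n^2)^m\bigr]$, which is the claimed efficiency bound; here one must additionally check that \pref{assump: assump_of_Xi_efficiency} (positivity of $\Xi_k,\Gamma_k$) is in force for the Paillier ciphertext law so that the substitution is legitimate. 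The utility statement $\epsilon_u=0$ of \pref{thm: epsilon_p_and_one_over_n} is separate and follows from ${\text{TV}}(P_{\text{fed}}^{\calRO}\|P_{\text{fed}}^{\calD})=0$ together with Lemma C.3 of \cite{zhang2022no}.

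The main obstacle I anticipate is not the overlap computation but the measure-theoretic bookkeeping forced by the discrete-to-continuous ``relaxation''. One must pin down the exact Lebesgue volume assigned to the ciphertext support: the natural convention, representing each of the $n^2$ residues by a unit cell so that the volume is exactly $(n^2)^m$, makes ${\text{TV}}(P_k^{\calRO}\|P_k^{\calD})$ equal to $1-(2\delta/n^2)^m$ and hence makes the estimate tight in both directions that are actually needed --- an upper bound on ${\text{TV}}$ for the $\epsilon_p$ inequality and a lower bound on ${\text{TV}}$ for the $\epsilon_e$ inequality. One must also verify the coordinatewise containment $0\le a_k^j-\delta$ and $a_k^j+\delta\le n^2-1$ underlying $B\subseteq A$. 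Apart from settling this convention, the argument is a direct composition of the two earlier lemmas with a one-line uniform-versus-uniform TV evaluation.
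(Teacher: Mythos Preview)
Your proposal is correct and follows essentially the same approach as the paper: compute ${\text{TV}}(P_k^{\calRO}\|P_k^{\calD})=1-(2\delta/n^2)^m$ for the two nested uniforms (the paper does this by integrating $(p_k^{\calRO}-p_k^{\calD})$ over the plaintext box rather than via the $1-\int\min(p,q)$ identity, but the computation is the same), then plug this common value into \pref{lem: total_variation-privacy trade-off} and \pref{lem: efficiency_reduction_and_tvd}. Your remarks on the volume convention and on needing the TV as both an upper and a lower bound are accurate and slightly more careful than the paper's own treatment.
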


\begin{proof}
Let $W^{\calRO}_k$ represent the plaintext $m$, and $W^{\calD}_k$ represent the ciphertext $c$. Recall for encryption, we have that
\begin{align*}
    c = g^m\cdot r^n \text{ mod } n^2.
\end{align*}

The ciphertext $W_k^{\calD}$ follows a uniform distribution over $[0,n^2-1]^{m}$, and the plaintext $W_k^{\calRO}$ follows a uniform distribution over $[a_k^1 - \delta, a_k^1 + \delta]\times [a_k^2 - \delta, a_k^2 + \delta]\cdots\times [a_k^{m} - \delta, a_k^{m} + \delta]$, and $a_k^i\in [0,n^2-1]$, $\forall i = 1,2, \cdots, m$. Then we have that
\begin{align*}
    \text{TV}(P^{\calRO}_k || P^{\calD}_k) 
    &= \int_{[a_k^1 - \delta, a_k^1 + \delta]}\int_{[a_k^2 - \delta, a_k^2 + \delta]}\cdots\int_{[a_k^{m} - \delta, a_k^{m} + \delta]} \left(\left(\frac{1}{2\delta}\right)^{m} - \left(\frac{1}{n^2}\right)^{m}\right) dw_1 dw_2 \cdots d{w_{m}}\\
    & = \left[\left(\frac{1}{2\delta}\right)^{m} - \left(\frac{1}{n^2}\right)^{m}\right]\cdot (2\delta)^{m}.
\end{align*}

From \pref{lem: total_variation-privacy trade-off} we have that

\begin{align*}
\epsilon_{p}\ge\frac{1}{K}\sum_{k=1}^K \sqrt{{\text{JS}}(F^{\calRO}_k || F^{\calO}_k)} - \frac{1}{K}\sum_{k=1}^K \frac{1}{2}(e^{2\xi}-1)\cdot {\text{TV}}(P_k^{\calRO} || P^{\calD}_k).
\end{align*}

Combining the above two equations, we have that

\begin{align*}
    \epsilon_p &\ge \frac{1}{K}\sum_{k=1}^K \sqrt{{\text{JS}}(F^{\calRO}_k || F^{\calO}_k)} - \frac{1}{K}\sum_{k=1}^K \frac{1}{2}(e^{2\xi}-1)\cdot {\text{TV}}(P_k^{\calRO} || P^{\calD}_k)\nonumber\\ &=\sqrt{{\text{JS}}(F^{\calO}_k || F^{\calRO}_k)} - \frac{1}{K}\sum_{k=1}^K\frac{1}{2}(e^{2\xi}-1)\cdot \left[\left(\frac{1}{2\delta}\right)^{m} - \left(\frac{1}{n^2}\right)^{m}\right]\cdot (2\delta)^{m}\nonumber\\
    &= C_1 - \frac{1}{K}\sum_{k=1}^K\frac{1}{2}(e^{2\xi}-1)\cdot \left[\left(\frac{1}{2\delta}\right)^{m} - \left(\frac{1}{n^2}\right)^{m}\right]\cdot (2\delta)^{m}\nonumber\\
    & = C_1 - C_2\cdot\left[ 1 - \left(\frac{2\delta}{n^2}\right)^m\right].
\end{align*}

From \pref{lem: efficiency_reduction_and_tvd}, we have that
\begin{align*}
    \epsilon_{e} &\ge\frac{1}{K}\sum_{k=1}^K\Xi\cdot\Gamma\cdot {\text{TV}}(P^{\calRO}_{k} || P^{\calD}_{k} )\\
    &\ge\Xi\cdot\Gamma\cdot\left[1 - \left(\frac{2\delta}{n^2}\right)^{m}\right].
\end{align*}
\end{proof}

For Paillier mechanism, the distorted parameter given secret key becomes the original parameter. The following lemma shows that the utility loss for Paillier mechanism is $0$.
\begin{lem}
For Paillier mechanism, the utility loss $\epsilon_u = 0$.
\end{lem}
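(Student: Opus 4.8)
The plan is to use the fact that, in the regime where utility loss is measured---namely, where each protector holds the Paillier private key (the convention stated in the remark preceding \pref{defi: utility_loss})---the federated model actually consumed for inference is the \emph{decrypted} aggregate, and Paillier decryption exactly inverts encryption. Write $\mathrm{Enc}_r$ for Paillier encryption with nonce $r$ and $\mathrm{Dec}$ for decryption under $(\lambda,\mu)$. Two standard facts suffice: (i) \emph{correctness}, $\mathrm{Dec}(\mathrm{Enc}_r(h)) = h$ for every plaintext $h$ and every admissible nonce $r$; and (ii) \emph{additive homomorphism}, $\mathrm{Dec}\!\left(\prod_{k=1}^K \mathrm{Enc}_{r_k}(h_k)\ \mathrm{mod}\ n^2\right) = \sum_{k=1}^K h_k\ \mathrm{mod}\ n$, which under the usual no-overflow assumption on $n$ equals the plaintext aggregate coordinatewise. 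Consequently, for \emph{any} nonces $r_1,\dots,r_K$ drawn during encryption, decrypting the aggregated ciphertext $W_{\text{fed}}^{\calD}$ returns exactly the corresponding plaintext aggregate, i.e. $\mathrm{Dec}(W_{\text{fed}}^{\calD}) = W_{\text{fed}}^{\calRO}$ on the coupling in which the same draws of the local plaintexts and nonces generate both.

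First I would phrase this at the level of distributions: the encryption nonces are the only randomness added on top of the plaintext aggregate, and $\mathrm{Dec}$ annihilates them, so the pushforward of $P_{\text{fed}}^{\calD}$ under $\mathrm{Dec}$ is exactly $P_{\text{fed}}^{\calRO}$. Since the protector owns the private key, the utility attained with the protected pipeline is $U(\mathrm{Dec}(W_{\text{fed}}^{\calD}))$; taking expectations and applying the coupling (equivalently, the pushforward identity) gives $\mathbb E_{W_{\text{fed}}^{\calD}\sim P_{\text{fed}}^{\calD}}\big[U(\mathrm{Dec}(W_{\text{fed}}^{\calD}))\big] = \mathbb E_{W_{\text{fed}}^{\calRO}\sim P_{\text{fed}}^{\calRO}}\big[U(W_{\text{fed}}^{\calRO})\big]$. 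Substituting into \pref{defi: utility_loss} yields $\epsilon_u = 0$, and the identical argument applied clientwise gives $\epsilon_{u,k} = 0$ for each $k$. Note that, unlike the secret-sharing case, here $\text{TV}(P^{\calRO}_{\text{fed}} || P^{\calD}_{\text{fed}}) \neq 0$, so the conclusion cannot be obtained by invoking Lemma C.3 of \cite{zhang2022no}; the zero comes purely from the exact invertibility of decryption.

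The main obstacle is definitional rather than computational: one must commit explicitly to the convention that, when a private key exists, the utility functional in \pref{defi: utility_loss} is evaluated on the decrypted model, and one must check that the encryption randomness is precisely the extra degree of freedom that decryption removes---so that no measure-theoretic mismatch arises between $P_{\text{fed}}^{\calD}$ and its decrypted pushforward. A secondary point worth stating is the no-overflow assumption on the modulus $n$ (the aggregated plaintext stays within $[0,n)$), which is exactly what guarantees that the homomorphic product decrypts to the true aggregate rather than a wrapped one; this is the standard operating assumption for Paillier-based secure aggregation and is consistent with the uniform-support setup used earlier in this section.
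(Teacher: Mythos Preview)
Your argument is correct and captures the same idea as the paper's proof: once the protector holds the private key, the model used for utility evaluation is the decrypted aggregate, which coincides exactly with the plaintext aggregate. The paper's proof is a two-line version of yours: it simply \emph{redefines} $P^{\calD}_{\text{fed}}$ to be ``the distribution of the distorted parameter which is decrypted by the client,'' observes that then ${\text{TV}}(P^{\calRO}_{\text{fed}} \,\|\, P^{\calD}_{\text{fed}}) = 0$, and invokes Lemma~C.3 of \cite{zhang2022no}. Your route---keeping $P^{\calD}_{\text{fed}}$ as the ciphertext law and arguing via the pushforward under $\mathrm{Dec}$---is more explicit about where the randomness goes and why the expected utilities match, and it makes the role of correctness and additive homomorphism transparent. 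The paper's route is terser but relies on silently shifting what $P^{\calD}_{\text{fed}}$ denotes between the privacy analysis (where it is the ciphertext law) and the utility analysis (where it is the post-decryption law).

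One small correction to your write-up: your claim that ``the conclusion cannot be obtained by invoking Lemma~C.3 of \cite{zhang2022no}'' is not quite right. The paper \emph{does} obtain it that way---precisely by first passing to the decrypted distribution, at which point the total variation is zero and Lemma~C.3 applies. What you have shown is that Lemma~C.3 is unnecessary once one argues directly via the coupling, which is a fair stylistic point but not a genuine obstruction to the paper's approach.
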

\begin{proof}
Let $P^{\calD}_{\text{fed}}$ represent the distribution of the distorted parameter which is decrypted by the client. 
    Note that ${\text{TV}}(P^{\calRO}_{\text{fed}} || P^{\calD}_{\text{fed}} ) = 0$. From Lemma C.3 of \cite{zhang2022no}, the utility loss is equal to $0$.
\end{proof}
\section{Analysis for Secret Sharing Mechanism}\label{sec: analysis_for_Secret_Sharing}
Many MPC-based protocols (particularly secret sharing) are used to build secure machine learning models, such as linear regression, logistic regression, recommend systems, and so on. \cite{SecShare-Adi79,SecShare-Blakley79,bonawitz2017practical} were developed to distribute a secret among a group of participants.

Let $m$ represent the dimension of the model information.
\begin{itemize}
    \item Let $W_k^{\calRO}$ represent the original model information that follows a uniform distribution over $[a_k^1 - \delta, a_k^1 + \delta]\times [a_k^2 - \delta, a_k^2 + \delta]\cdots\times [a_k^{m} - \delta, a_k^{m} + \delta]$.
    \item Let $W_k^{\calD}$ represent the distorted model information that follows a uniform distribution over $[a_k^1 - b_k^1, a_k^1 + r_k^1]\times [a_k^2 - b_k^2, a_k^2 + r_k^2]\cdots\times [a_k^{m} - b_k^{m}, a_k^{m} + r_k^{m}]$.
\end{itemize}

\begin{lem}
For secret sharing mechanism, the lower bound for privacy leakage is
\begin{align*}
    \epsilon_p \ge C_1 - C_2\cdot\frac{1}{K}\sum_{k=1}^K \left( 1 - \prod_{j = 1}^{m}\frac{2\delta}{b_k^j + r_k^j}\right).
\end{align*}
\end{lem}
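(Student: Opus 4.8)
The plan is to reduce the claim to the general privacy--total-variation bound already established in \pref{lem: total_variation-privacy trade-off}, so that the only mechanism-specific work is to evaluate ${\text{TV}}(P_k^{\calRO} || P^{\calD}_k)$ for the two uniform distributions at hand. Concretely, \pref{lem: total_variation-privacy trade-off} gives
\begin{align*}
\epsilon_p \ge \frac{1}{K}\sum_{k=1}^K \sqrt{{\text{JS}}(F^{\calRO}_k || F^{\calO}_k)} - \frac{1}{K}\sum_{k=1}^K \frac{1}{2}(e^{2\xi}-1)\cdot {\text{TV}}(P_k^{\calRO} || P^{\calD}_k),
\end{align*}
and the first sum is exactly $C_1$ while $\tfrac12(e^{2\xi}-1) = C_2$; so it suffices to show ${\text{TV}}(P_k^{\calRO} || P^{\calD}_k) = 1 - \prod_{j=1}^m \frac{2\delta}{b_k^j + r_k^j}$ for each $k$, and the stated bound follows by substitution.

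The next step is this total-variation computation. Working under the (implicit) assumption that the distorted box contains the original one, i.e.\ $b_k^j \ge \delta$ and $r_k^j \ge \delta$ for all $j$, the support of $P_k^{\calRO}$, namely $A_k := \prod_{j=1}^m [a_k^j - \delta, a_k^j + \delta]$, is a subset of the support $B_k := \prod_{j=1}^m [a_k^j - b_k^j, a_k^j + r_k^j]$ of $P_k^{\calD}$. On $A_k$ the two densities are the constants $p^{\calRO}_{W_k}(w) = (2\delta)^{-m}$ and $p^{\calD}_{W_k}(w) = \prod_{j=1}^m (b_k^j + r_k^j)^{-1}$, and since $|B_k| \ge |A_k|$ we have $(2\delta)^{-m} \ge \prod_{j=1}^m (b_k^j + r_k^j)^{-1}$. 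Hence the set $\{w : p^{\calRO}_{W_k}(w) > p^{\calD}_{W_k}(w)\}$ is precisely $A_k$, and
\begin{align*}
{\text{TV}}(P_k^{\calRO} || P^{\calD}_k) = \int_{A_k}\left[(2\delta)^{-m} - \prod_{j=1}^m (b_k^j + r_k^j)^{-1}\right] dw = (2\delta)^m\left[(2\delta)^{-m} - \prod_{j=1}^m (b_k^j + r_k^j)^{-1}\right] = 1 - \prod_{j=1}^m \frac{2\delta}{b_k^j + r_k^j},
\end{align*}
mirroring the Paillier computation in \pref{sec: analysis_for_Paillier}. Plugging this into the displayed inequality above yields $\epsilon_p \ge C_1 - C_2\cdot\frac{1}{K}\sum_{k=1}^K \left(1 - \prod_{j=1}^m \frac{2\delta}{b_k^j + r_k^j}\right)$, as claimed.

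The only delicate point — and the step I would flag as the main obstacle — is correctly identifying the region where the density difference is positive, which hinges on the containment $A_k \subseteq B_k$ (equivalently $b_k^j, r_k^j \ge \delta$). If that containment fails the support is no longer nested, the TV distance splits into two pieces, and the clean product formula breaks; so I would either state this as a standing assumption on the secret-sharing parameters or note that it holds by construction (the secret shares spread the original value over a wider range). Everything else is routine: the integral is over a product box with constant integrand, and the reduction to $C_1, C_2$ is immediate from the definitions.
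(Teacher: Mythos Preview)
Your proposal is correct and follows essentially the same route as the paper: invoke \pref{lem: total_variation-privacy trade-off}, compute ${\text{TV}}(P_k^{\calRO}\|P_k^{\calD})$ by integrating the constant density gap over the smaller box $A_k$ under the containment assumption $\delta<b_k^j,r_k^j$, and substitute. The paper's proof is identical in structure, with the containment hypothesis stated in passing rather than flagged as you did.
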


\begin{proof}

Notice that $W_k^{\calD}$ follows a uniform distribution over $[a_k^1 - b_k^1, a_k^1 + r_k^1]\times [a_k^2 - b_k^2, a_k^2 + r_k^2]\cdots\times [a_k^{m} - b_k^{m}, a_k^{m} + r_k^{m}]$, and $W_k^{\calRO}$ follows a uniform distribution over $[a_k^1 - \delta, a_k^1 + \delta]\times [a_k^2 - \delta, a_k^2 + \delta]\cdots\times [a_k^{m} - \delta, a_k^{m} + \delta]$, and $\delta< b_k^{m}, r_k^{m}$, $\forall i = 1,2, \cdots, m$. Then we have that
\begin{align*}
    &\text{TV}(P^{\calRO}_k || P^{\calD}_k)\\ 
    &= \int_{[a_k^1 - \delta, a_k^1 + \delta]}\int_{[a_k^2 - \delta, a_k^2 + \delta]}\cdots\int_{[a_k^{m} - \delta, a_k^{m} + \delta]} \left(\left(\frac{1}{2\delta}\right)^{m} - \prod_{j = 1}^{m}\left(\frac{1}{b_k^j + r_k^j}\right)\right) dw_1 dw_2 \cdots d{w_{m}}\\
    & = \left(\left(\frac{1}{2\delta}\right)^{m} - \prod_{j = 1}^{m}\left(\frac{1}{b_k^j + r_k^j}\right)\right)\cdot (2\delta)^{m}.
\end{align*}
Therefore, we have that

\begin{align*}
\epsilon_p &\ge \frac{1}{K}\sum_{k=1}^K \sqrt{{\text{JS}}(F^{\calRO}_k || F^{\calO}_k)} - \frac{1}{K}\sum_{k=1}^K \frac{1}{2}(e^{2\xi}-1)\cdot {\text{TV}}(P_k^{\calRO} || P^{\calD}_k)\\
& = \frac{1}{K}\sum_{k=1}^K \sqrt{{\text{JS}}(F^{\calRO}_k || F^{\calO}_k)} - \frac{1}{K}\sum_{k=1}^K \frac{1}{2}(e^{2\xi}-1)\cdot\left(\left(\frac{1}{2\delta}\right)^{m} - \prod_{j = 1}^{m}\left(\frac{1}{b_k^j + r_k^j}\right)\right)\cdot (2\delta)^{m}.
\end{align*}
where the first inequality is due to \pref{eq: total_variation-privacy trade-off} in \pref{lem: total_variation-privacy trade-off}.
Then we have that
\begin{align*}
    \epsilon_p \ge C_1 - C_2\cdot\frac{1}{K}\sum_{k=1}^K \left( 1 - \prod_{j = 1}^{m}\frac{2\delta}{b_k^j + r_k^j}\right).
\end{align*}
\end{proof}

\begin{lem}
For secret sharing mechanism, the utility loss $\epsilon_u = 0$.
\end{lem}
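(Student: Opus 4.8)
The plan is to reduce the claim to the observation that secret sharing is \emph{exact} at the aggregate level, so that the federated model information has the same distribution whether or not the protection is applied. Concretely, in the secret-sharing scheme each client $k$ splits its model information $W_k^{\calRO}$ into shares, and the aggregation protocol reconstructs $\sum_{k} W_k^{\calRO}$ exactly; the reconstruction is a deterministic bijection of the original sum, so the federated model actually used to produce utility, $W_{\text{fed}}^{\calD}$, coincides with the unprotected federated model $W_{\text{fed}}^{\calRO}$ (recall that, as stipulated before \pref{defi: utility_loss}, utility is evaluated in the scenario where each protector holds its key and can undo the sharing). Hence $P^{\calRO}_{\text{fed}} = P^{\calD}_{\text{fed}}$, and in particular ${\text{TV}}(P^{\calRO}_{\text{fed}} || P^{\calD}_{\text{fed}} ) = 0$.

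Given this, I would invoke Lemma C.3 of \cite{zhang2022no}, which states that whenever ${\text{TV}}(P^{\calRO}_{\text{fed}} || P^{\calD}_{\text{fed}} ) = 0$ the utility loss vanishes; equivalently, the conclusion can be read directly off \pref{defi: utility_loss}, since $\epsilon_u = \mathbb E_{W_{\text{fed}}^{\calRO}\sim P_{\text{fed}}^{\calRO}}[U(W^{\calRO}_{\text{fed}})] - \mathbb E_{W_{\text{fed}}^{\calD}\sim P_{\text{fed}}^{\calD}}[U(W_{\text{fed}}^{\calD})]$ and the two distributions are identical, so the two expectations agree and their difference is $0$. This mirrors exactly the one-line argument already used for the Paillier mechanism earlier in the appendix.

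The only real subtlety — the main obstacle, though a mild one — is justifying that the aggregate distributions truly coincide rather than merely being close: one must check that the randomness injected into the individual shares cancels exactly upon reconstruction (the defining correctness property of additive secret sharing) and that the utility functional depends on the client information only through the reconstructed federated model. Once these modeling points are pinned down, the equality $P^{\calRO}_{\text{fed}} = P^{\calD}_{\text{fed}}$ is immediate and no further assumption is needed; indeed, \pref{assump: assump_of_Delta} fails here precisely because the total variation distance at the federated level is zero, consistent with the entries for secret sharing in \pref{tab:tradeoff-comp}.
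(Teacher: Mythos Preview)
Your proposal is correct and matches the paper's own argument: the paper observes that for secret sharing the federated model information does not change after protection, so $P_{\text{fed}}^{\calRO} = P_{\text{fed}}^{\calD}$, and then computes directly from \pref{defi: utility_loss} that $\epsilon_u = \frac{1}{K}\sum_{k=1}^K\left[\mathbb E_{w\sim P_{\text{fed}}^{\calRO}}[U_k(w)] - \mathbb E_{w\sim P_{\text{fed}}^{\calD}}[U_k(w)]\right] = 0$. Your alternative of citing Lemma~C.3 of \cite{zhang2022no} is the route the paper uses for Paillier rather than here, but it is equivalent and equally valid.
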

\begin{proof}
    For secret sharing mechanism, the federated model information does not change after being protected, which implies that $P_{\text{fed}}^{\calRO} = P_{\text{fed}}^{\calD}$. Therefore, we have that 
\begin{align*}
&\epsilon_{u} =\frac{1}{K}\sum_{k=1}^K   \epsilon_{u,k}\\
     & = \frac{1}{K}\sum_{k=1}^K [U_k(P_{\text{fed}}^{\calRO}) - U_k(P_{\text{fed}}^{\calD})]\\  
    & = \frac{1}{K}\sum_{k=1}^K\left[\mathbb E_{w\sim P_{\text{fed}}^{\calRO}}[U_k(w)] - \mathbb E_{w\sim P_{\text{fed}}^{\calD}}[U_k(w)]\right]\\
    & = 0.
\end{align*}
\end{proof}



The communication cost for the model information of secret sharing mechanism is not guaranteed to satisfy \pref{assump: assump_of_Xi_efficiency}, and the analysis of the lower bound of the efficiency reduction is beyond the scope of our article.
\section{Analysis for Compression Mechanism}\label{sec: analysis_for_Compression}
To facilitate the analysis, we simplify the compression mechanism as follows. Let $b_i$ be a random variable sampled from Bernoulli distribution. The probability that $b_i$ is equal to $1$ is $\rho_i$.
\begin{equation}
W_k^{\calD}(i) =\left\{
\begin{array}{cl}
 W_k^{\calRO}(i) &  \text{if } b_i = 1,\\
0,  &  \text{if } b_i = 0.\\
\end{array} \right.
\end{equation}

Let $m$ represent the dimension of the model information.
\begin{itemize}
    \item Let $W_k^{\calRO}$ represent the original model information that follows a uniform distribution over $[a_k^1 - \delta, a_k^1 + \delta]\times [a_k^2 - \delta, a_k^2 + \delta]\cdots\times [a_k^{m} - \delta, a_k^{m} + \delta]$.
    \item Each dimension $i$ of the distorted model information $W_k^{\calD}(i)$ takes the value identical with that of $W_k^{\calRO}$ with probability $\rho_i$, and $0$ with probability $1 - \rho_i$.
    \item Let $W_{\text{fed}}^{\calRO}$ represent the federated plaintext that follows a uniform distribution over $[\widetilde a^1 - \delta, \widetilde a^1 + \delta]\times [\widetilde a^2 - \delta, \widetilde a^2 + \delta]\cdots\times [\widetilde a^{m} - \delta, \widetilde a^{m} + \delta]$, where $\widetilde a_i = \frac{1}{K}\sum_{k = 1}^K a_k^i$.
    \item Each dimension $i$ of the distorted model information $W_{\text{fed}}^{\calD}(i)$ takes the value identical with that of $W_{\text{fed}}^{\calRO}$ with probability $\rho_i$, and $0$ with probability $1 - \rho_i$.
\end{itemize}

\begin{lem}
For compression mechanism, the privacy leakage is lower bounded by
\begin{align*}
    \epsilon_p\ge C_1 - C_2\cdot \left(1 - \prod_{i = 1}^{m} \rho_i\right).
\end{align*}
The efficiency reduction is lower bounded by
\begin{align*}
\epsilon_e\ge \Xi\cdot\Gamma\cdot \left(1 - \prod_{i = 1}^{m}\rho_i\right).
\end{align*}
\end{lem}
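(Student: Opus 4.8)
The plan is to follow the same two-step recipe used for the randomization, Paillier, and secret-sharing mechanisms: first compute the total variation distance ${\text{TV}}(P_k^{\calRO} || P_k^{\calD})$ induced by the compression map in closed form for each client $k$, and then substitute this quantity into the already-established trade-off lemmas, namely \pref{lem: total_variation-privacy trade-off} for the privacy bound and \pref{lem: efficiency_reduction_and_tvd} for the efficiency bound. No new inequalities are needed; the work is entirely in evaluating the distortion.

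First I would compute the total variation distance. Because each coordinate is retained independently with probability $\rho_i$, conditioning on the event $E$ that \emph{all} coordinates are kept (which has probability $\prod_{i=1}^m\rho_i$) gives $W_k^{\calD} = W_k^{\calRO}$, whereas on the complement $E^c$ at least one coordinate is pinned to $0$, so the law of $W_k^{\calD}$ restricted to $E^c$ is supported on a finite union of coordinate hyperplanes, which is Lebesgue-null and therefore mutually singular with the continuous distribution $P_k^{\calRO}$. Hence $P_k^{\calD} = \big(\prod_{i=1}^m\rho_i\big)\,P_k^{\calRO} + \nu_k$, where $\nu_k$ is a sub-probability measure of total mass $1 - \prod_{i=1}^m\rho_i$ that is singular with respect to $P_k^{\calRO}$; a one-line computation from the definition ${\text{TV}}(P || Q) = 1 - \int\min\{p,q\}$ (or, equivalently, by testing on the event that $W_k^{\calD}$ lands in the original support box) then yields ${\text{TV}}(P_k^{\calRO} || P_k^{\calD}) = 1 - \prod_{i=1}^m\rho_i$ for every client $k$.

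With this in hand the two bounds are immediate substitutions. Plugging ${\text{TV}}(P_k^{\calRO} || P_k^{\calD}) = 1 - \prod_{i=1}^m\rho_i$ into $\epsilon_p \ge C_1 - \frac{1}{K}\sum_{k=1}^K C_2\cdot{\text{TV}}(P_k^{\calRO} || P_k^{\calD})$ from \pref{lem: total_variation-privacy trade-off}, and noting that the per-client distortion does not depend on $k$, gives $\epsilon_p \ge C_1 - C_2\big(1 - \prod_{i=1}^m\rho_i\big)$. Similarly, applying \pref{lem: efficiency_reduction_and_tvd} under \pref{assump: assump_of_Xi_efficiency} (which holds here because zeroing out a coordinate strictly decreases the number of transmitted nonzero entries, so the communication cost is strictly reduced) gives $\epsilon_e \ge \Xi\Gamma\cdot\frac{1}{K}\sum_{k=1}^K{\text{TV}}(P_k^{\calRO} || P_k^{\calD}) = \Xi\Gamma\big(1 - \prod_{i=1}^m\rho_i\big)$, which is the claimed efficiency bound.

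The main obstacle — really the only step requiring any care — is the total variation computation, specifically the observation that all the components of $P_k^{\calD}$ in which some coordinate is pinned to $0$ live on coordinate hyperplanes and are thus singular with respect to the continuous $P_k^{\calRO}$; once this decomposition $P_k^{\calD} = (\prod_i\rho_i)P_k^{\calRO} + \nu_k$ is justified, the exact value $1-\prod_i\rho_i$ drops out of the elementary identity $\Pr[\text{no coordinate zeroed}] = \prod_i\rho_i$, and the rest is mechanical. One should also keep track of whether $\rho_i$ is allowed to vary across clients; under the stated setup it is common to all clients, so the averages $\frac{1}{K}\sum_k(\cdot)$ collapse exactly to the displayed products, which is what makes the bounds take the clean stated form.
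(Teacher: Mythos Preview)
Your approach is essentially the paper's: compute ${\text{TV}}(P_k^{\calRO}\,\|\,P_k^{\calD})=1-\prod_{i=1}^m\rho_i$ and then substitute into \pref{lem: total_variation-privacy trade-off} and \pref{lem: efficiency_reduction_and_tvd}. The only cosmetic difference is that the paper obtains the total variation by a direct density integral over the support box $\prod_j[a_k^j-\delta,a_k^j+\delta]$ (writing the absolutely continuous part of $P_k^{\calD}$ there as $\prod_i \rho_i/(2\delta)^m$) rather than your singular/absolutely-continuous decomposition argument, but the two computations are equivalent and yield the same value.

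One caution on your parenthetical justifying \pref{assump: assump_of_Xi_efficiency}: the reasoning is backwards. If zeroing coordinates strictly \emph{reduces} transmission cost, then $\epsilon_e=\mathbb{E}[C(W^{\calD})]-\mathbb{E}[C(W^{\calRO})]$ would be negative, and the assumption---whose sets $\calW_{k,c}^{+}$ collect protected parameters with \emph{larger} communication cost than the original---is exactly what would fail, not hold. The paper itself does not attempt to verify the assumption for compression and simply invokes the lemma, so you should drop that parenthetical rather than defend it.
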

\begin{proof}

The original model information $W_k^{\calRO}$ follows a uniform distribution over $[a_k^1 - \delta, a_k^1 + \delta]\times [a_k^2 - \delta, a_k^2 + \delta]\cdots\times [a_k^{m} - \delta, a_k^{m} + \delta]$, where $m$ represents the dimension of $W_k^{\calRO}$. Besides,

\begin{equation}
W_k^{\calD}(i) =\left\{
\begin{array}{cl}
 W_k^{\calRO}(i) &  \text{with probability } \rho_i,\\
0,  &  \text{with probability } 1- \rho_i.\\
\end{array} \right.
\end{equation}
Then we have that
\begin{align*}
    \text{TV}(P^{\calRO}_k || P^{\calD}_k) 
    &= \int_{[a_k^1 - \delta, a_k^1 + \delta]}\int_{[a_k^2 - \delta, a_k^2 + \delta]}\cdots\int_{[a_k^{m} - \delta, a_k^{m} + \delta]} \left(\left(\frac{1}{2\delta}\right)^{m} - \prod_{i = 1}^{m}\left(\frac{\rho_i}{2\delta}\right)\right) dw_1 dw_2 \cdots d{w_{m}}\\
    & = \left(\left(\frac{1}{2\delta}\right)^{m} - \prod_{i = 1}^{m}\left(\frac{\rho_i}{2\delta}\right)\right)\cdot (2\delta)^{m}.
\end{align*}
From \pref{lem: total_variation-privacy trade-off} we have that

\begin{align*}
\epsilon_{p}\ge\frac{1}{K}\sum_{k=1}^K \sqrt{{\text{JS}}(F^{\calRO}_k || F^{\calO}_k)} - \frac{1}{K}\sum_{k=1}^K \frac{1}{2}(e^{2\xi}-1)\cdot {\text{TV}}(P_k^{\calRO} || P^{\calD}_k).
\end{align*}
Combining the above two equations, we have that

\begin{align*}
    \epsilon_p &\ge \frac{1}{K}\sum_{k=1}^K \sqrt{{\text{JS}}(F^{\calRO}_k || F^{\calO}_k)} - \frac{1}{K}\sum_{k=1}^K \frac{1}{2}(e^{2\xi}-1)\cdot {\text{TV}}(P_k^{\calRO} || P^{\calD}_k)\nonumber\\ &=\sqrt{{\text{JS}}(F^{\calO}_k || F^{\calRO}_k)} - \frac{1}{K}\sum_{k=1}^K\frac{1}{2}(e^{2\xi}-1)\cdot \left(\left(\frac{1}{2\delta}\right)^{m} - \prod_{i = 1}^{m}\left(\frac{\rho_i}{2\delta}\right)\right)\cdot (2\delta)^{m}\nonumber\\
    & = C_1 - C_2\cdot \left(1 - \prod_{i = 1}^{m} \rho_i\right).
\end{align*}


From \pref{lem: efficiency_reduction_and_tvd}, we have that
\begin{align*}
    \epsilon_{e} &\ge\frac{1}{K}\sum_{k=1}^K\Xi\cdot\Gamma\cdot {\text{TV}}(P^{\calRO}_{k} || P^{\calD}_{k} )\\
    &\ge\Xi\cdot\Gamma\cdot\left(\left(\frac{1}{2\delta}\right)^{m} - \prod_{i = 1}^{m}\left(\frac{\rho_i}{2\delta}\right)\right)\cdot (2\delta)^{m}\\
    & = \Xi\cdot\Gamma\cdot \left(1 - \prod_{i = 1}^{m}\rho_i\right).
\end{align*}

\end{proof}

\begin{lem}
For compression mechanism, the utility loss is bounded by 
\begin{align*}
    \epsilon_u \ge \frac{\Delta}{2}\cdot \left(1 - \prod_{i = 1}^{m}\rho_i\right).
\end{align*}
\end{lem}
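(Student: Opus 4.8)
The plan is to reduce the claim to the total-variation bound on utility loss already established in \pref{lem: total_variation-utility trade-off_mt} and then evaluate the relevant total variation distance for the compression mechanism, exactly as was done for $\epsilon_p$ earlier in this section. Concretely, \pref{lem: total_variation-utility trade-off_mt} gives
\begin{align*}
\epsilon_{u} \ge \frac{\Delta}{2}\cdot {\text{TV}}(P^{\calRO}_{\text{fed}} || P^{\calD}_{\text{fed}} ),
\end{align*}
so it suffices to show ${\text{TV}}(P^{\calRO}_{\text{fed}} || P^{\calD}_{\text{fed}} ) = 1 - \prod_{i=1}^{m}\rho_i$.

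For the second step, recall that $W_{\text{fed}}^{\calRO}$ is uniform on the box $B = [\widetilde a^1 - \delta, \widetilde a^1 + \delta]\times\cdots\times[\widetilde a^{m} - \delta, \widetilde a^{m} + \delta]$ with density $(1/2\delta)^m$ on $B$, while $W_{\text{fed}}^{\calD}$ keeps each coordinate $i$ with probability $\rho_i$ (independently) and zeros it otherwise. Thus, conditioned on the event that all coordinates are retained — which has probability $\prod_{i=1}^m\rho_i$ — the protected parameter is again uniform on $B$; the complementary event places all remaining mass on the union of the coordinate hyperplanes $\{w_i = 0\}$, which is Lebesgue-null relative to $B$ (so the absolutely continuous part of $P^{\calD}_{\text{fed}}$ on $B$ has density $\prod_{i=1}^m(\rho_i/2\delta)$). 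Computing the total variation distance as the integral of the positive part of the density difference over $B$,
\begin{align*}
{\text{TV}}(P^{\calRO}_{\text{fed}} || P^{\calD}_{\text{fed}} )
= \int_{B}\left(\left(\frac{1}{2\delta}\right)^{m} - \prod_{i = 1}^{m}\frac{\rho_i}{2\delta}\right) dw
= \left[\left(\frac{1}{2\delta}\right)^{m} - \prod_{i = 1}^{m}\frac{\rho_i}{2\delta}\right]\cdot (2\delta)^{m}
= 1 - \prod_{i = 1}^{m}\rho_i.
\end{align*}
Substituting this into the bound from \pref{lem: total_variation-utility trade-off_mt} yields $\epsilon_{u}\ge\frac{\Delta}{2}\left(1-\prod_{i=1}^m\rho_i\right)$, as claimed.

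The only subtlety — and the step I would be most careful about — is the total-variation computation: one must check that the mass $1-\prod_i\rho_i$ associated with zeroing at least one coordinate genuinely lies on a set disjoint (up to measure zero) from where $P^{\calRO}_{\text{fed}}$ puts its mass, so that it contributes a full $1-\prod_i\rho_i$ to the TV distance rather than partially cancelling. This holds because those configurations are supported on coordinate hyperplanes of Lebesgue measure zero, so $\sup_A |P^{\calRO}_{\text{fed}}(A) - P^{\calD}_{\text{fed}}(A)|$ is attained (essentially) at $A = B$, giving the stated value; this mirrors verbatim the analogous computation already carried out for the privacy-leakage bound in the preceding lemma, so no new machinery is needed. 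Everything else is a direct substitution, and the bound follows.
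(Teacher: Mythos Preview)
Your proposal is correct and follows essentially the same route as the paper: invoke \pref{lem: total_variation-utility trade-off_mt} to reduce to computing ${\text{TV}}(P^{\calRO}_{\text{fed}} || P^{\calD}_{\text{fed}})$, then evaluate that total variation as $1-\prod_{i=1}^m\rho_i$ via the same density-difference integral over the box $B$. Your extra remark about the zeroed-coordinate mass living on a Lebesgue-null set is a justification the paper omits but implicitly relies on, so there is no divergence in method.
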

\begin{proof}
Recall that $W_{\text{fed}}^{\calRO}$ follows a uniform distribution over $[\widetilde a^1 - \delta, \widetilde a^1 + \delta]\times [\widetilde a^2 - \delta, \widetilde a^2 + \delta]\cdots\times [\widetilde a^{m} - \delta, \widetilde a^{m} + \delta]$, where $\widetilde a^i = \frac{1}{K} \sum_{k = 1}^K a_k^i$. Besides,

\begin{equation}
W_{\text{fed}}^{\calD}(i) =\left\{
\begin{array}{cl}
 W_{\text{fed}}^{\calRO}(i) &  \text{with probability } \rho_i,\\
0,  &  \text{with probability } 1- \rho_i.\\
\end{array} \right.
\end{equation}
Then we have that
\begin{align*}
    \text{TV}(P^{\calRO}_{\text{fed}} || P^{\calD}_{\text{fed}}) 
    &= \int_{[\widetilde a^1 - \delta, \widetilde a^1 + \delta]}\int_{[\widetilde a^2 - \delta, \widetilde a^2 + \delta]}\cdots\int_{[\widetilde a^{m} - \delta, \widetilde a^{m} + \delta]} \left(\left(\frac{1}{2\delta}\right)^{m} - \prod_{i = 1}^{m}\left(\frac{\rho_i}{2\delta}\right)\right) dw_1 dw_2 \cdots d{w_{m}}\\
    & = \left(\left(\frac{1}{2\delta}\right)^{m} - \prod_{i = 1}^{m}\left(\frac{\rho_i}{2\delta}\right)\right)\cdot (2\delta)^{m}.
\end{align*}
From \pref{lem: total_variation-utility trade-off_mt}, we have
\begin{align}\label{eq: converge_eq_2_0_app_02}
    \epsilon_{u} \ge \frac{\Delta}{2}\cdot {\text{TV}}(P^{\calRO}_{\text{fed}} || P^{\calD}_{\text{fed}} ).
\end{align}
Therefore, 
\begin{align*}
    \epsilon_u & \ge \frac{\Delta}{2}\cdot \left(\left(\frac{1}{2\delta}\right)^{m} - \prod_{i = 1}^{m}\left(\frac{\rho_i}{2\delta}\right)\right)\cdot (2\delta)^{m}\\
    & = \frac{\Delta}{2}\cdot \left(1 - \prod_{i = 1}^{m}\rho_i\right).
\end{align*}
\end{proof}
\end{document}